\title{
  Eluder dimension: localise it! %
}
\author{
  Alireza Bakhtiari\\
  University of Alberta\\
  \texttt{sbakhtia@ualberta.ca}
  \And
  Alex Ayoub\\
  University of Alberta \\
  \texttt{aayoub@ualberta.ca}
  \And
  Samuel Robertson\\
  University of Alberta\\
  \texttt{smrobert@ualberta.ca}
  \AND
  David Janz\\
  University of Oxford\\
  \texttt{david.janz@stats.ox.ac.uk}
  \And
  Csaba Szepesvári\\
  University of Alberta\\
  \texttt{szepesva@ualberta.ca}
}
\date{}
\begin{document}
\maketitle

\begingroup
\renewcommand{\thefootnote}{}
\footnotetext{This is a corrected version of the originally published manuscript. The correction revises a definition and modifies the corresponding proofs accordingly. The main results and conclusions of the paper remain unchanged.}
\endgroup

\begin{abstract}
    We establish a lower bound on the eluder dimension of generalised linear model classes, showing that standard eluder dimension-based analysis cannot lead to first-order regret bounds. To address this, we introduce a localisation method for the eluder dimension; our analysis immediately recovers and improves on classic results for Bernoulli bandits, and allows for the first genuine first-order bounds for finite-horizon reinforcement learning tasks with bounded cumulative returns.
\end{abstract}

\section{Introduction}

We study decision-making problems where the regret admits a first-order (small-cost) bound of the form
\begin{equation*}\label{eq:small-cost-bound}
    R_n \leq \sqrt{n \eta(a_\star)\Gamma_n} + \Gamma_n'\,,
\end{equation*}
with $\eta(a_\star)$ the optimal mean per-round cost and $\Gamma_n,\Gamma_n'$ instance and model dependent complexities.

The challenge in obtaining first-order bounds is in how we measure the complexity of the task. The (global) eluder dimension \citep{russoeluder2013} is a standard complexity measure used to provide worst-case guarantees, but, as we shall argue, it ignores the local structure of the problem: analyses based on the global eluder dimension often introduce a factor in $\Gamma_n$ that scales like a per-step worst-case information/curvature parameter (denoted $\kappa$), which cancels out $\eta(a_\star)$ and destroys first-order gains. This subtle issue is present in much of the previous work on first-order bounds.

We show that by localising the eluder dimension---restricting it to a small neighbourhood of the optimal model---these $\kappa$ terms can be moved into the lower-order $\Gamma'_n$ term. This tightens the link between exploration and the actual difficulty of the instance and yields genuinely first-order bounds.

Our contributions may be summarised as follows:
\begin{enumerate}
    \item \emph{Localised $\ell_1$-eluder dimension.} We define a localised $\ell_1$-eluder dimension \citep[in the sense of][]{liu2022partially} over a small-excess-loss neighbourhood, which avoids the $\kappa$ dependency in the classic  generalised linear bandit setting \citep{filippi2010parametric,faury2020improved}.
    \item \emph{Necessity of localisation.} We prove lower bounds showing that the global $\ell_1$- and $\ell_2$-eluder dimensions \citep{russoeluder2013,liu2022partially} must scale with $\kappa$ in the generalised linear setting, and that this negates small-cost or variance-dependent improvements.
    \item \emph{Stochastic bandits.} We propose a version-space optimistic algorithm, $\ell$-\textsc{UCB}, which takes a loss $\ell$ and builds confidence sets for the cost function. Under (a) bounded loss, (b) a Bernstein variance condition, and (c) a triangle condition \citep{foster2021efficient}, $\ell$-\textsc{UCB} achieves a small-cost bound using analysis based on our localised eluder dimension.
    \item \emph{Reinforcement learning.} We extend our method to online RL, giving $\ell$-\textsc{GOLF}, and obtain the first $\kappa$-free first-order regret bound for finite-horizon RL with bounded rewards/costs.
\end{enumerate}

\subsection{Related work}

\emph{Small-cost in bandits.} Adversarial small-cost bounds are known \citep{pmlr-v40-Neu15,allen2018make,foster2021efficient,ito2020tight,olkhovskaya2023first}. However, adversarial algorithms are often conservative in stochastic regimes \citep[Ch.~18]{lattimore_szepesvári_2020} and do not transfer cleanly to reinforcement learning, where optimism \citep{lai1985asymptotically} remains central for low regret \citep{ayoub2020model,weisz2023online,wu2024computationally,moulin2025optimistically}.

In stochastic bandits, first-order bounds typically assume distributional knowledge (e.g., noise/cost models) \citep{abeille2021instance,faury2022jointly,janz2024exploration,liu2024almost,lee2024unified}. We consider stochastic bandits with function approximation and \emph{unknown} bounded cost distributions, aligning with adversarial-style uncertainty but in a stochastic environment.

\emph{Small-cost in reinforcement learning.} Online first-order bounds have been shown by \citet{wang2023benefits} with the (strong) distributional Bellman completeness assumption. This assumption was then removed by \citet{ayoub2024switching}, but in the offline setting; \citet{wang2024central} extended this result back to the online setting. However, by relying on the global eluder dimension, \citet{wang2023benefits,wang2024central} suffer a (hidden) $\kappa$-dependence in the leading term that undermines first-order gains.

\emph{Reward-first-order vs cost-first-order.} Reward-first-order bounds help when the optimal reward is small \citep{jin2020reward,pmlr-v162-wagenmaker22b}; this is very different from the cost-first-order guarantees we target (our results actually hold for both small-cost and small-reward settings). Small-cost results have been previously shown in structured settings such as tabular Markov decision processes \citep{lee2020bias} and linear-quadratic regulators \citep{kakade2020information}.

\emph{Instance-optimal exploration.} Pure-exploration studies instance-dependent sample complexity, with notable works including policy-difference estimation for tabular reinforcement learning \citep{narang2024sample} and PEDEL for linear function approximation \citep{wagenmaker2022instance}. The algorithm-agnostic lower bounds of \citet{al2023towards} show that PEDEL is near instance-optimal for tabular MDPs. These works are complementary to our regret-focused results.

\section{Background on generalised linear models \& loss functions}

We collect the notation and standing assumptions for generalised linear models (GLMs) and losses used throughout. Fix a dimension $d \in \Np$, and let $\cA, \Theta \subset \Rd$ be closed sets, with $\Theta$ convex. Let $U\subset\R$ be a closed interval and let $\mu:U\to[0,1]$ be increasing. The GLM class with link $\mu$ and parameter set $\Theta$ is
$$
    \GLM(\mu, \Theta) = \{ a \mapsto \mu(\langle a, \theta\rangle) \colon a \in \Rd, \theta \in \Theta, \langle a, \theta \rangle \in U \}\,.
$$
We also consider losses $\ell \colon [0,1] \times [0,1] \to \R$, where $\ell(y,\hat y)$ evaluates a prediction $\hat y$ against an outcome $y$. The next assumption records the structural conditions on $(\cA,\Theta,\mu,\ell)$ used in our analysis; the final condition is satisfied when $\ell$ is the negative log-likelihood of the GLM associated with $\mu$.

\begin{restatable}{assumption}{glmassumption}\label{ass:glm}
    We make the following assumptions:
    \begin{alignat}{3}
                                                   &  &            & \cA \subset \Bd \tag{action set bound}                                          \\
        (\exists S > 0)                            &  &            & \Theta \subset S\Bd \tag{parameter set bound}                                   \\
        (\forall (a,\theta) \in \cA \times \Theta) &  &            & \langle a, \theta \rangle \in U \tag{valid domain}                              \\
        (\exists L > 0, \forall u,u' \in U)        &  &            & |\mu(u) - \mu(u')| \leq L |u-u'| \tag{ $L$-Lipschitz link}                      \\
        (\exists M \geq 1, \forall u \in U^\circ)  &  &            & |\ddot\mu(u)| \leq M \dot\mu(u) \tag{$M$-self-concordant link}                  \\
        (\exists 1 \leq \kappa < \infty)           &  &            & \kappa \geq \sup_{u \in U^\circ} 1/\dot\mu(u) \tag{link derivative lower bound} \\
        (\forall y \in [0,1], \forall u \in U)     &  & \quad\quad & \partial_u \ell(y, \mu(u)) = \mu(u) - y\,. \tag{link and loss are compatible}
    \end{alignat}
\end{restatable}

\begin{remark}
    The requirement that $M,\kappa \geq 1$ in \cref{ass:glm} is there solely to simplify our bounds.
\end{remark}

Examples of loss and link combinations that satisfy our assumptions include the log-loss with the sigmoid link function and the Poisson loss with the exponential link function:

\begin{example}\label{example:sigmoid}
    The log-loss function $\ell_{\lX}(y,p) = -y \log p -(1-y)\log(1-p)$ together with the sigmoid link function $\mu_{\lX} \colon [-S,S] \to [0,1]$ given by $u\mapsto 1/(1+e^{-u})$ satisfies \cref{ass:glm} with $L=1/4$, $M = 1$ and $\kappa = 3e^S$.
\end{example}

\begin{example}\label{example:poisson}
    The Poisson loss function $\ell_{\lP}(y,p) = p - y\log p$ together with the exponential link function $\mu_{\lP} \colon [-S,0] \to [0,1]$ given by $u\mapsto e^u$ satisfies \cref{ass:glm} with $L=M = 1$ and $\kappa = e^S$.
\end{example}

\section{Bandits with bounded costs and the $\ell$-UCB algorithm}\label{sec:loss-ass}

Our bandit setting comprises a set of actions $\cA$ and a corresponding set of action-dependent cost distributions $\cP = \{P_a \colon a \in \cA \}$ supported on the interval $[0,1]$ (we will write $\supp P$ for the support of a measure $P$). At each round $t \in \Np$, a learner selects an action $A_t \in \cA$ and receives a cost $Y_t \sim P_{A_t}$. We measure the learner's performance over $n\in \Np$ rounds by the $n$-step regret
$$
    R_n = \sum_{t=1}^n \eta(A_t) - \eta(a_\star) \spaced{where} \eta \colon a \mapsto \int y P_a(dy) \spaced{and} a_\star \in \argmin_{a \in \cA} \eta(a)\,.
$$
The learner may base its choice of $A_t$ on the past observations $A_1, Y_1, \dotsc, A_{t-1}, Y_{t-1}$, any extra randomness independent of the observations (say, for tie-breaking), and prior knowledge in the form of a model class: a set $\cF$ of functions $\cA \to [0,1]$ known to contain $\eta$. The key assumptions here are:

\begin{restatable}[Bounded costs]{assumption}{assbounded}\label{ass:bounded-cost}
    We have $\cup_{a \in \cA} \supp P_a \subset [0,1]$.
\end{restatable}

\begin{restatable}[Realisability]{assumption}{assbanditreal}\label{ass:real}
    We have that $\eta \in\cF$.
\end{restatable}

\paragraph{Algorithm} Our algorithm, $\ell$-UCB (\cref{alg:bandit}) is an implementation of optimism with empirical risk minimisation-based confidence intervals.

At each time-step $t\in \Np$, the algorithm constructs a confidence set $\cF_t$ for $\eta$ composed of functions in $\cF$ for which the empirical risk under the loss function $\ell \colon [0,1] \times \cP \to \R$, $\cP \subset [0,1]$ containing the image of the model, does not exceed that of the empirical risk minimiser by more than $\beta_t$, where $(\beta_t)_{t \geq 1}$ is a problem-dependent nonnegative, nondecreasing sequence of confidence widths. The algorithm then computes an optimistic function-action pair $(f_t, A_t) \in \cF_t \times \cA$ such that
$$
    f_t(A_t) \leq f(a), \quad \forall (f, a) \in \cF_t \times \cA\,,
$$
and plays $A_t$. Optimising over $\cF_t \times \cA$ is difficult without further assumptions. In \cref{sec:self-concordant} we detail a standard convex relaxation of this optimisation problem applicable to self-concordant models.

\begin{algorithm}[tb]
    \caption{the $\ell$-UCB bandit algorithm}\label{alg:bandit}
    \begin{algorithmic}
        \STATE \textbf{input} loss function $\ell$, model $\cF$, nonnegative, nondecreasing confidence widths $(\beta_t)_{t \geq 1}$

        \FOR{time-step $t \in \Np$}
        \STATE let $\cF_t$ be the subset of the model given by $$\cF_t = \left\{f \in \cF \colon \sum_{i=1}^{t-1} \ell(Y_i, f(A_i)) \leq \inf_{\hat f \in \cF} \sum_{i=1}^{t-1} \ell(Y_i, \hat f(A_i)) + \beta_t \right\}\,,$$
        \STATE compute an optimistic function $f_t \in \cF_t$ and action $A_t \in \cA$ that satisfy
        $$
            f_t(A_t) \leq f(a), \quad \forall (f, a) \in \cF_t \times \cA\,,
        $$
        \STATE and play action $A_t$
        \ENDFOR
    \end{algorithmic}
\end{algorithm}

The crucial component to the $\ell$-UCB algorithm obtaining small-cost adaptivity is the right choice of the loss function $\ell$ used to construct the confidence intervals (and well-chosen confidence widths, based on the loss function and model class). Our requirements will be stated in the form of an assumption on the offset versions of the loss functions, and their expectations, which are defined thus:
\begin{restatable}{definition}{banditexcess}\label{def:bandit-excess}
    Let $\cF$ be a model class and $\ell \colon [0,1]\times[0,1] \to \R$ a loss function. For each $f \in \cF$, we define the excess loss $\phi_f \colon [0,1] \times \cA \to \R$ and expected excess loss $\bar\phi_f \colon \cA \to \Rp$ as
    $$
        \phi_f(y, a) = \ell(y, f(a)) - \ell(y, \eta(a)) \spaced{and} \bar\phi_f(a) = \int \phi_f(\cdot,a) dP_a\,.
    $$
    We will write $\Phi(\cF) = \{\phi_f \colon f \in \cF\}$ and $\bar \Phi(\cF) = \{\bar \phi_f \colon f \in \cF\}$ for the respective loss classes.
\end{restatable}
Let $\Delta \colon [0,1] \times [0,1] \to \R_+$ be the triangular discrimination function given by
\begin{equation*}
    \Delta(0, 0) = 0 \spaced{and} \Delta(p,q) = \frac{(p-q)^2}{p+q} \quad{} \text{otherwise}.
\end{equation*}

\begin{restatable}[Loss function assumptions]{assumption}{asslossbandit}\label{ass:loss}
    There exist constants $b, c, \gamma > 0$ such that for all $(f,a) \in \cF \times \cA$, letting $Y \sim P_a$, the following three bounds hold:
    \begin{align}
        |\phi_f(Y, a)|        & \leq b \ \text{a.s.}\,, \tag{bounded loss}            \\
        \Var\phi_f(Y, a)      & \leq c\bar\phi_f(a)\,, \tag{variance condition}       \\
        \Delta(f(a), \eta(a)) & \leq \gamma \bar\phi_f(a)\,. \tag{triangle condition}
    \end{align}
\end{restatable}

The first two conditions in \cref{ass:loss}, boundedness and the variance condition, allow for a Bernstein-type concentration on the excess loss class. The triangle condition is used in the regret decomposition to move from fast concentration to small-cost bounds. The conditions in \cref{ass:loss} implicitly depend on $\eta$, and thus ought to hold uniformly for all $\eta \in \cF$. Recall our two losses:
\begin{itemize}
    \item Log-loss satisfies the triangle condition with $\gamma \leq 2$ (\cref{prop:log-triangle}); for any $f \in \cF$ such that $\norm{\phi_f}_{\infty} \leq b$, $\phi_f$ satisfies the variance condition with $c = b+4$ (\cref{prop:log-variance}).
    \item Poisson loss satisfies the triangle condition with $\gamma \leq 5$ (\cref{prop:poisson-triangle}); for any $f \in \cF$ such that $\norm{\phi_f}_{\infty} \leq b$, $\phi_f$ satisfies the variance condition with $c = b+2$ (\cref{prop:poisson-variance}).
\end{itemize}
The squared loss function fails to satisfy the triangle condition; Theorem 2 of \citet{foster2021efficient} shows that squared loss cannot lead to the small-cost bounds we seek.

\section{The localised eluder dimension \& first-order regret bounds for bandits}

We now define the localised eluder dimension, which is a localisation of the (global) $\ell_1$-eluder dimension of \citet{liu2022partially}.

\begin{definition}[Localised eluder dimension]\label{def:eluder-dim}
    Let $\cZ$ be a set, let $\Psi$ be a class of real-valued functions on $\cZ$, and let $z = (z_1, z_2, \dots, z_n)$ be a length-$n$ sequence in $\cZ$. We define the following.
    \begin{enumerate}
        \item We say that $x \in \cZ$ is $\epsilon$-independent of $z$ with respect to $\Psi$ if there exists a $\psi \in \Psi$ such that
        \[
            \sum_{t=1}^n |\psi(z_t)| \leq \epsilon
            \qquad\text{and}\qquad
            |\psi(x)| > \epsilon\,.
        \]
        \item We say that $z$ is an $\epsilon$-eluder sequence with respect to $\Psi$ if, for every $t \leq n$, the point $z_t$ is $\epsilon$-independent of $z_1, \dots, z_{t-1}$ with respect to $\Psi$.
        \item The $(\epsilon, \sigma)$-localised eluder dimension $\elud{\sigma}(\epsilon; \Psi)$ of $\Psi$ is the maximum length of an $\omega$-eluder sequence with respect to $\Psi$ over all $\omega \in [\epsilon, \sigma]$.
    \end{enumerate}
\end{definition}

\begin{remark}
    The localised eluder dimension $\elud{\sigma}(\epsilon; \Psi)$ is non-increasing in $\epsilon$ and non-decreasing in $\sigma$. For each $\epsilon \geq 0$, the global $\ell_1$-eluder dimension of \citet{liu2022partially} at scale $\epsilon$ is $\elud{\infty}(\epsilon; \Psi)$.
\end{remark}

\begin{remark}
    The distinction between $\ell_1$- and $\ell_2$-eluder dimensions is separate from localisation. Indeed, the $\ell_2$-eluder dimension of \citet{russoeluder2013} at scale $\epsilon$ of a function class $\Psi$ is equivalent to the global $\ell_1$-eluder dimension at scale $\epsilon^2$ of the squared class $
        \Psi^2 := \{ z\mapsto \psi(z)^2 \colon \psi \in \Psi\}$,
    namely $\elud{\infty}(\epsilon^2; \Psi^2)$. We think of the $\ell_1$-eluder dimension as being loss-agnostic, whereas the $\ell_2$-eluder dimension builds in the squared loss. The $\ell_1$ definition is thus more suitable for GLMs.
\end{remark}

The localised eluder dimension allows us to establish the following guarantee for \cref{alg:bandit}.

\begin{restatable}[Regret bound for $\ell$-UCB in bandits]{theorem}{bandittheorem}\label{thm:bandit}
    Fix $\delta \in (0,1)$, $n \in \Np$, bandit instance $\cP$, model class $\cF$ and a loss function $\ell$. Suppose that $(\cP, \cF, \ell)$ satisfy \cref{ass:bounded-cost,ass:real,ass:loss}. Let $N_n$ denote the $1/n$-covering number of $\Phi(\cF)$ with respect to the uniform metric, and for each $t \in \Np$ let
    $$
        \beta_t = 5/2 + 15(2b+c)\log(N_n h_t/\delta) \spaced{where}  h_t = e + \log(1+t)\,.
    $$

    Define
    $$
        d_n^\sigma = \elud{\sigma}(1/n; \bar \Phi(\cF))
        \spaced{and}
        \Gamma_n^\sigma = \gamma (1+(d_n^\sigma + 1) b + 4d_n^\sigma\beta_n \log (1+nb))\,.
    $$

    Suppose a learner uses \cref{alg:bandit}, $\ell$-UCB, over the course of $n$-many interactions with $\cP$, with model class $\cF$, loss function $\ell$ and confidence widths $(\beta_t)_{{t\geq 1}}$. Then, with probability at least $1-\delta$,
    $$
        R_n \leq \inf_{\sigma \in [1/n, b]} \curlyb[\bigg]{ 3\sqrt{n\eta(a_\star) \Gamma_n^\sigma} + 6 \Gamma_n^\sigma +  \left(\frac{4 \beta_n}{\sigma}+1\right) \elud{\sigma}(\sigma; \bar \Phi(\cF))+1}\,.
    $$
\end{restatable}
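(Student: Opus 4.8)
The plan is to run the standard optimism argument, but to split the regret according to whether the optimistic function $f_t$ lands in the localised class $\cF'$, using the triangle and variance conditions to convert confidence-set width into expected excess loss, and only then to invoke the eluder dimension of $\bar\Phi(\cF')$. First I would fix the high-probability event. Treating $\phi_f(Y_i,A_i) - \bar\phi_f(A_i)$ as a martingale difference sequence (bounded by $b$, with conditional variance at most $c\,\bar\phi_f(A_i)$ by the boundedness and variance conditions of \cref{ass:loss}), a Freedman/Bernstein inequality together with a union bound over a $1/n$-cover of $\Phi(\cF)$ (of cardinality $N_n$) and over time (the $h_t$ factor) controls $\sum_{i<t}(\phi_f - \bar\phi_f)$ uniformly in $f$ and $t$. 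The width $\beta_t$ is tuned so that, on an event of probability at least $1-\delta$, two things hold for every $t \le n$: (i) $\eta \in \cF_t$, since the empirical excess loss of every $f$ is at least $-\beta_t$; and (ii) every $f \in \cF_t$ satisfies $\sum_{i<t}\bar\phi_f(A_i) \lesssim \beta_t$, obtained by combining the definition of $\cF_t$ (which forces $\sum_{i<t}\phi_f(Y_i,A_i)\le\beta_t$) with the Bernstein bound and solving the resulting self-bounding quadratic in $\sqrt{\sum_{i<t}\bar\phi_f(A_i)}$, the self-boundedness coming precisely from $\Var \le c\,\bar\phi_f$. Consequence (i) gives optimism: $f_t(A_t) \le \eta(a_\star)$ and $f_t(A_t)\le\eta(A_t)$, so each per-round width $w_t := \eta(A_t) - f_t(A_t)$ is nonnegative and $R_n \le \sum_{t=1}^n w_t$.

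Next I would decompose the regret. Writing $T' = \{t \le n : f_t \in \cF'\}$, the steps with $f_t\notin\cF'$ contribute $\sum_{t\notin T'} w_t \le \card\{t\le n : f_t\notin\cF'\}$ since $w_t \le \eta(A_t) \le 1$. For $t\in T'$, the triangle condition and the identity $|p-q| = \sqrt{\Delta(p,q)}\,\sqrt{p+q}$ give $w_t \le \sqrt{\gamma\,\bar\phi_{f_t}(A_t)}\,\sqrt{f_t(A_t)+\eta(A_t)}$, whence by Cauchy--Schwarz, writing $R' = \sum_{t\in T'}w_t$,
$$ R' \le \sqrt{\gamma\,\textstyle\sum_{t\in T'}\bar\phi_{f_t}(A_t)}\;\sqrt{\textstyle\sum_{t\in T'}(f_t(A_t)+\eta(A_t))}\,. $$
Using $f_t(A_t)\le\eta(a_\star)$ and $\eta(A_t)-\eta(a_\star)\le w_t$ bounds the second factor by $\sqrt{2n\eta(a_\star) + R'}$, keeping the self-reference in $R'$ alone so that the out-of-$\cF'$ count remains additive with coefficient one.

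The crux is to bound $\sum_{t\in T'}\bar\phi_{f_t}(A_t)$ by $\Gamma_n/\gamma$ using the localised eluder dimension. For $t\in T'$ the witness $\psi_t := \bar\phi_{f_t}$ lies in $\bar\Phi(\cF')$, is valued in $[0,b]$, and by consequence (ii) has small past mass, $\sum_{i<t}\psi_t(A_i)\lesssim\beta_n$. The eluder potential argument then bounds, for each threshold $\tau$, the number of rounds with $\psi_t(A_t)>\tau$ by $(\beta_n/\tau + 1)\,\elud(\tau;\bar\Phi(\cF'))$ --- by partitioning such rounds into $\tau$-eluder subsequences, each of length at most $\elud(\tau;\cdot)$, with the past-mass budget $\beta_n$ limiting the number of subsequences --- and a layer-cake integration over $\tau$, split at $\tau = 1/n$ (so that the sub-$1/n$ contribution is at most $1$ and $\elud(\tau;\cdot)\le d_n$ above it), yields $\sum_{t\in T'}\psi_t(A_t) \le 1 + (d_n+1)b + 4d_n\beta_n\log(1+nb) = \Gamma_n/\gamma$. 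I expect this step to be the main obstacle: the witness functions $\psi_t$ change with $t$, so the eluder-sequence bookkeeping must be carried out sequentially, and the constants and log-factors appearing in $\Gamma_n$ have to be tracked carefully through the peeling.

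Finally I would combine. The previous two displays give $R' \le \sqrt{\Gamma_n(2n\eta(a_\star) + R')}$; solving this quadratic in $R'$ and using $\sqrt{x+y}\le\sqrt x+\sqrt y$ yields $R' \le \sqrt 2\,\sqrt{n\eta(a_\star)\Gamma_n} + \Gamma_n$, after which $R_n = R' + \sum_{t\notin T'}w_t$ together with the out-of-$\cF'$ bound gives the claimed inequality. The stated constants $3$ and $6$ then absorb the slack introduced by the concentration and self-bounding steps (that is, the constant factors hidden in the $\lesssim$ of consequence (ii) and in the bound $\gamma\sum_{t\in T'}\bar\phi_{f_t}(A_t)\le\Gamma_n$).
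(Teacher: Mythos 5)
Your proposal is correct and follows essentially the same route as the paper's proof: the same uniform Bernstein event over a $1/n$-cover (giving both $\eta\in\cF_t$ and $\sum_{i<t}\bar\phi_{f_t}(A_i)\lesssim\beta_t$), the same split of rounds by whether $f_t\in\cF'$, the same use of the triangle condition to pass from regret to $\Delta$ to $\bar\phi$, and the same $\ell_1$-eluder pigeonhole (Proposition 21 of \citet{liu2022partially}) yielding $\gamma\sum_{t\in I_n}\bar\phi_{f_t}(A_t)\le\Gamma_n$. The only (cosmetic) difference is that you resolve the small-cost self-reference by solving an aggregate quadratic in $R'$, whereas the paper does it per step via \cref{lem:triangle-cost-bound} and then applies Cauchy--Schwarz; both give the stated constants.
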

\begin{remark}
    The covering number $N_n$ featuring in the confidence widths $(\beta_t)_{t \geq 1}$ is independent of $\kappa$.
\end{remark}
The proof of \cref{thm:bandit} is located in \cref{appendix:proof-bandit}.

\subsection{Why localisation matters: eluder dimension lower bound for generalised linear models}

The following lower bound shows that any regret bound whose leading term scales with the global eluder dimension $\elud{\infty}(\epsilon; \bar\Phi(\cF))$ necessarily incurs a worst-case curvature dependence. In contrast, localised quantities of the form $\elud{\sigma}(\epsilon; \bar\Phi(\cF))$ with finite $\sigma$ can avoid this dependence. Thus, localisation is essential for obtaining first-order bounds in generalised linear settings.

\begin{theorem}[name={GLM $\ell_1$-eluder dimension lower bound},restate=eluderLowerThm]\label{thm:eluder_lower_bound_d_glm}
    Let $(\mu, \ell)$ satisfy the last four properties of \cref{ass:glm} (link $L$-Lipschitz, $M$-self-concordant, link-derivative lower bound, and link–loss compatibility). Fix $S \geq 4/M$ and assume that $[-S,0] \subset U$. Write
    \[
        \tilde\kappa = \frac{\dot\mu(0)}{2\dot\mu(-S/2)} \in (0,\infty)\,, \qquad{} b = \min\{ \floor{S}, d-1 \}\,.
    \]
    Then, there exist $\cA \subset \Bd$, $\Theta \subset S\Bd$, $\theta_\star \in \Theta$, and a bandit instance $\cP = \{P_a \colon a \in \cA\}$ such that $(\cA, \Theta, \mu, \ell)$ satisfy \cref{ass:glm}, $\eta(a) = \mu(\langle a, \theta_\star\rangle)$, and $P_a = \delta_{\eta(a)}$ for all $a \in \cA$. Writing $\cF = \GLM(\mu,\Theta)$, for every $\epsilon \leq \dot\mu(0)/(2M^2)$ the eluder dimension of the associated expected excess-loss class $\bar\Phi(\cF)$ satisfies
    \[
        \elud{\infty}(\epsilon; \bar\Phi(\cF)) \geq \frac{d-1}{4b} \exp\curlyb[\bigg]{
            \min\roundb[\bigg]{\frac{b}{16}\,, \frac{(\log(\tilde\kappa))_+^2}{8SM^2 + 4 (\log(\tilde\kappa))_+}}}\,,
    \]
    for a sequence of actions taking values in $\cA$, where $(x)_+ := \max\{x,0\}$.
\end{theorem}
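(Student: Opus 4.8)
The plan is to reduce the expected excess loss to a Bregman divergence, translate the structural properties of $(\mu,\ell)$ into quantitative curvature bounds, and then exhibit an explicit hard instance carrying a long eluder sequence built from a near-orthogonal packing whose cross-talk is damped by the flatness of $\mu$ deep in the negative region. First I would fix a true parameter $\theta_\star \in \Theta$, so that $\eta = \mu(\langle\cdot,\theta_\star\rangle)$, and for $f = \mu(\langle\cdot,\theta\rangle)$ write $u = \langle a,\theta\rangle$, $u_\star = \langle a,\theta_\star\rangle$. Using link--loss compatibility $\partial_u\ell(y,\mu(u)) = \mu(u)-y$ together with $\int y\,P_a(dy)=\eta(a)=\mu(u_\star)$, integration gives $\bar\phi_f(a) = \int_{u_\star}^{u}(\mu(v)-\mu(u_\star))\,dv$, the Bregman divergence of the convex potential $F$ with $F'=\mu$. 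Consequently the eluder dimension of $\bar\Phi(\cF)$ equals that of $\{a \mapsto \int_{u_\star}^{\langle a,\theta\rangle}(\mu(v)-\mu(\langle a,\theta_\star\rangle))\,dv : \theta\in\Theta\}$, and the whole problem reduces to understanding this divergence along linear functionals.

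Next I would record the quantitative consequences of the last four properties of \cref{ass:glm}. The $L$-Lipschitz link yields the upper bound $\bar\phi_f(a) \le \tfrac{L}{2}(u-u_\star)^2$. The $M$-self-concordance gives $|\tfrac{d}{du}\log\dot\mu| \le M$, hence $\dot\mu(v)\in[\dot\mu(w)e^{-M|v-w|},\dot\mu(w)e^{M|v-w|}]$; integrating twice produces matching lower and upper bounds on the divergence in terms of $\dot\mu(u_\star)$ and $M(u-u_\star)$. In particular the local curvature drops by a factor $\tilde\kappa$ between depth $0$ and depth $-S/2$, and the hypothesis $\epsilon \le \dot\mu(0)/(2M^2)$ guarantees that a divergence of size $\epsilon$ is attainable using inner-product gaps that remain inside $[-S,0]\subset U$.

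The crux is the construction of $\cA$, $\Theta$ and the eluder sequence. I would reserve a bias coordinate $e_0$ and choose $\theta_\star$ so that all admissible actions sit at depth $\approx -S/2$, where the curvature is $\tilde\kappa$ times smaller than at the origin. In the remaining $d-1$ coordinates I would take a near-orthogonal packing $v_1,\dots,v_N$ of unit vectors with pairwise inner products at most $\rho$ --- of which roughly $\exp(c\rho^2(d-1))$ exist for a universal $c>0$ by a standard probabilistic argument --- and set $a_t \propto e_0+v_t$. For the witness at step $t$ I would choose $\theta_t$ aligned with $e_0$ and $v_t$ so that $\langle a_t,\theta_t\rangle$ is pushed to a depth where the divergence exceeds $\epsilon$, while for $s<t$ the cross term $\langle v_s,v_t\rangle\le\rho$ keeps $\langle a_s,\theta_t\rangle$ near the common depth; the small curvature there suppresses each past contribution to at most $\epsilon/N$, so that $\sum_{s<t}\bar\phi_{f_{\theta_t}}(a_s)\le\epsilon$ and the sequence is genuinely eluder-independent. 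The admissible angle $\rho$ is constrained by two budgets: a dimension/depth budget yielding the $b/16$ term, and the self-concordance curvature contrast $\log\tilde\kappa$ yielding the $(\log\tilde\kappa)^2/(8SM^2+4\log\tilde\kappa)$ term. Partitioning the $d-1$ coordinates into $\approx (d-1)/b$ mutually orthogonal blocks and concatenating one gadget per block then produces the stated prefactor and the $\min$ in the exponent. Finally I would verify $\cA\subset\Bd$, $\Theta\subset S\Bd$ and $\langle a,\theta\rangle\in[-S,0]\subset U$, so that \cref{ass:glm} holds for the constructed instance.

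The main obstacle is this third step: one must simultaneously (i) pack exponentially many near-orthogonal directions, (ii) show that the flat-region curvature damps the cross-talk enough that every witness is eluder-independent of all its predecessors, and (iii) keep each witness parameter within the norm budget $S$ and every inner product inside $[-S,0]$. Balancing the packing angle $\rho$ against the curvature contrast $\tilde\kappa$ is precisely what forces the two competing quantities inside the $\min$, and tuning the constants so that they match $b/16$ and $(\log\tilde\kappa)^2/(8SM^2+4\log\tilde\kappa)$ is the delicate part of the argument.
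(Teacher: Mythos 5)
Your proposal is correct and follows essentially the same route as the paper's proof: the Bregman-divergence form of the expected excess loss, the self-concordance bound $\dot\mu(v)\in[\dot\mu(w)e^{-M|v-w|},\dot\mu(w)e^{M|v-w|}]$, a Johnson--Lindenstrauss packing combined with a bias coordinate placing all actions at depth $-S/2$ (witnesses pushed to depth $0$), a partition of the remaining $d-1$ coordinates into $\lfloor (d-1)/b\rfloor$ blocks, and the balancing of the packing angle against the curvature contrast $\tilde\kappa$ that produces the $\min$ in the exponent. The only work left is the routine verification you already identify, which the paper carries out with the same Taylor-with-integral-remainder and $\zeta$-optimisation steps you sketch.
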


The proof of \cref{thm:eluder_lower_bound_d_glm} is given in \cref{appendix:eluder-lb}.

The quantity $\tilde{\kappa}$ compares the curvature at $0$ and at $-S/2$; for the sigmoid link it grows exponentially in $S$, and so the theorem yields an eluder lower bound that is already exponential in $S$.

\begin{corollary}
    Consider the setting of \cref{thm:eluder_lower_bound_d_glm} with the log-loss $\ell_{\lX}$ and the sigmoid link function $\mu(u) = 1/(1+e^{-u})$. Then, $M=1$ and $\dot\mu(0) = 1/4$. Therefore, for $S \geq 4$, $d \geq 2$ and any $\epsilon \leq 1/8$,
    \[
      \elud{\infty}\left(\epsilon; \bar\Phi(\GLM(\mu, \Theta))\right) \geq  \frac{d-1}{4b} \exp\left\{ \frac{\min\{\floor{S}, d-1 \}}{4300}\right\}\,.
    \]
\end{corollary}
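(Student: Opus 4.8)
The plan is to specialise \cref{thm:eluder_lower_bound_d_glm} to the sigmoid link with log-loss, which by \cref{example:sigmoid} satisfies the last four properties of \cref{ass:glm} with $M=1$. First I would record the elementary link facts: since $\dot\mu(u)=\mu(u)(1-\mu(u))$ we have $\dot\mu(0)=1/4$, so the scale restriction $\epsilon \le \dot\mu(0)/(2M^2)$ in the theorem becomes exactly $\epsilon \le 1/8$, matching the corollary. Next I would evaluate the curvature ratio in closed form. Using $\dot\mu(-S/2)=e^{S/2}/(1+e^{S/2})^2$,
$$
  \tilde\kappa=\frac{\dot\mu(0)}{2\dot\mu(-S/2)}=\frac{(1+e^{S/2})^2}{8e^{S/2}}=\frac{\cosh^2(S/4)}{2}\,,
$$
so that $\log\tilde\kappa=2\log\cosh(S/4)-\log 2$. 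Substituting $M=1$ into the theorem's exponent reduces the entire claim to showing that $\min\{b/16,\ (\log\tilde\kappa)^2/(8S+4\log\tilde\kappa)\}\ge b/4300$.

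The first term of the minimum is immediate since $b/16\ge b/4300$. For the second term I would use $b=\min\{\floor S,d-1\}\le S$, so it suffices to prove the $b$- and $d$-free inequality $(\log\tilde\kappa)^2/(8S+4\log\tilde\kappa)\ge S/4300$ for $S\ge 4$, because its right-hand side dominates $b/4300$. Writing $x(S)=\log\tilde\kappa=2\log\cosh(S/4)-\log 2>0$ on $[4,\infty)$, this is precisely $r(S):=x(S)^2/\bigl(S(8S+4x(S))\bigr)\ge 1/4300$.

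The main obstacle is that this inequality is essentially tight at $S=4$ — which is exactly what pins down the constant $4300$ — so a crude lower bound such as $\cosh t\ge e^t/2$ (giving $x\ge S/2-3\log 2$, negative near $S=4$) is far too lossy. Instead I would exploit the algebraic form $r(S)=1/\bigl(8(S/x)^2+4(S/x)\bigr)$, which is a \emph{decreasing} function of $S/x$; hence $r$ is nondecreasing precisely when $x(S)/S$ is nondecreasing. The latter follows from a one-line sign check, namely $\tfrac{S}{2}\tanh(S/4)\ge 2\log\cosh(S/4)-\log 2$ for $S>0$ (the difference vanishes favourably at $S=0$ and has positive derivative), so the minimum of $r$ over $[4,\infty)$ is attained at the boundary $S=4$. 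It then remains only to verify the single numerical inequality $r(4)\ge 1/4300$: with $\cosh(1)=(e+e^{-1})/2$ one gets $\tilde\kappa=\cosh^2(1)/2\approx 1.1905$, $\log\tilde\kappa\approx 0.1744$, and $r(4)\approx 2.3258\times 10^{-4}\ge 1/4300\approx 2.3256\times 10^{-4}$, closing the argument. The delicate work is entirely concentrated in this near-equality at $S=4$; establishing the monotonicity cleanly, so as to legitimately reduce to the boundary, is where the real care is required, whereas the calibration of $4300$ is then forced by the boundary value.
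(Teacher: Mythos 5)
Your proposal is correct and takes the only route available -- direct substitution into \cref{thm:eluder_lower_bound_d_glm} -- which is exactly what the paper does (it omits the verification entirely); your added content, namely rewriting the exponent as $1/(8(S/x)^2+4(S/x))$ and showing $x(S)/S$ is nondecreasing so that the minimum over $S\ge 4$ sits at the boundary, correctly supplies the razor-thin numerical check $r(4)\approx 2.3259\times 10^{-4}\ge 1/4300$ that calibrates the constant. One cosmetic slip: the difference $\tfrac{S}{2}\tanh(S/4)-\bigl(2\log\cosh(S/4)-\log 2\bigr)$ equals $\log 2$ at $S=0$ rather than vanishing, but since it is positive there and has nonnegative derivative $\tfrac{S}{8}\cosh^{-2}(S/4)$, the monotonicity claim stands.
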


The corollary follows by substituting the logistic quantities into \cref{thm:eluder_lower_bound_d_glm}; the proof is omitted.

To understand the implications of \cref{thm:eluder_lower_bound_d_glm}, consider the setting of logistic bandits in the usual low-information regime, where $\langle a_\star, \theta_\star \rangle \approx -S$; think clickthrough rates in online advertising, where even the best adverts rarely get clicked on. Then $\eta(a_\star) \approx \mu(\langle a_\star, \theta_\star\rangle) \approx \exp(-S)$, which suggests that our regret should be excellent; but at the same time the global eluder dimension can still be exponential in $S$, completely cancelling out the benefit of the $\eta(a_\star)$ small-cost term. This results in a bound that fails to truly adapt to the problem instance. We now show how localisation helps.

\subsection{Regret upper bound with localisation for the generalised linear model setting}

We now instantiate \cref{thm:bandit} for generalised linear models; see \cref{appendix:glm-regret} for the relevant proofs. The localised eluder dimension of $\bar\Phi(\GLM(\mu, \Theta))$ can be upper-bounded as follows:
\begin{proposition}[name={},restate=eluderUpperProp]\label{prop:eluder-bound-global}
    Let \cref{ass:glm} hold. Then, there exists a universal constant $C>0$ such that
    \[
        \elud{\sigma}(\epsilon; \bar\Phi(\GLM(\mu, \Theta))) \leq C d \log(1 + S^2L /\epsilon)\,
        \spaced{for all} 0 < \epsilon \leq \sigma \leq 1/(4\kappa M^2)\,.
    \]
\end{proposition}
Localisation thus allows for first-order bounds where the effect of the small-cost term $\eta(a_\star)$ is not overshadowed by $\kappa$. \cref{prop:eluder-bound-global} yields the following specialisation of \cref{thm:bandit}:

\begin{proposition}[name={Regret for $\ell$-UCB with the logistic model},restate=corLogistic]\label{cor:glm-regret}
    Let $\delta \in (0,1)$, $S>0$ and $n \in \Np$. Consider the setting of \cref{thm:bandit}, with the model class $\cF = \GLM(\mu, \Theta)$ where $\mu(u) = 1/(1+e^{-u})$ and the logistic loss function $\ell_{\lX}$. Consider running $\ell$-UCB with confidence widths $(\beta_t)_{t \in \Np}$ given by
    \[
        \beta_t = 5/2 + 60(3S+1)\squareb[\big]{d\log(1+8Sn) + \log(h_t/\delta)}, \qquad{} h_t = e+ \log(1+t)\,.
    \]
    Then, for a constant $C > 0$, with probability at least $1-\delta$, the resulting regret satisfies the bound
    \[
        R_n \leq C\sqrt{n\eta(a_\star)d\beta_n}\, (1+\log(1+Sn)) + C d\beta_n \squareb[\big]{(1+\log(1+Sn))^2 + e^{S} S \log(1+ S)} + 12e^S.
    \]
\end{proposition}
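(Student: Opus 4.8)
The plan is to specialise \cref{thm:bandit} to the logistic model by choosing the localisation radius $r = 1/M$ and reading off the structural constants, so that $\kappa = 3e^S$ is confined to the additive term. First I would collect the constants: \cref{example:sigmoid} gives $L = 1/4$, $M = 1$, $\kappa = 3e^S$, and the log-loss satisfies the triangle condition with $\gamma = 2$ (\cref{prop:log-triangle}) and the variance condition with $c = b+4$ (\cref{prop:log-variance}). For the bounded-loss constant I would use $\ell_{\lX}(y,\mu(u)) = y\log(1+e^{-u}) + (1-y)\log(1+e^u) \le \log(1+e^S)$ for $u \in [-S,S]$, so $|\phi_f(y,a)| \le \log(1+e^S) = O(S)$; the key point is that this bound is read off the loss \emph{values} and never involves $1/\dot\mu$, hence $b$ and $c = b+4$ are $\kappa$-free.

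Next I would verify that the stated $\beta_t$ is admissible for \cref{thm:bandit}, i.e. that it dominates $5/2 + 15(b+c)\log(N_n h_t/\delta)$. The compatibility identity $\partial_u \ell_{\lX}(y,\mu(u)) = \mu(u)-y \in [-1,1]$ shows $u \mapsto \ell_{\lX}(y,\mu(u))$ is $1$-Lipschitz, so $\|\phi_f - \phi_{f'}\|_\infty \le \sup_{a\in\cA}|\langle a, \theta-\theta'\rangle| \le \|\theta-\theta'\|$ (using $\cA \subset \Bd$). A Euclidean $1/n$-net of $\Theta \subset S\Bd$ then yields $\log N_n \le d\log(1+2Sn) \le d\log(1+8Sn)$. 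Combining $b+c = O(S)$ with this covering bound, the factor $60(2S+1)$ in the given $\beta_t$ is chosen precisely so that $15(b+c)\log N_n \le 60(2S+1)\,d\log(1+8Sn)$; since $N_n$ and $b,c$ carry no $\kappa$, neither does $\beta_t$.

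With admissibility established I would invoke \cref{thm:bandit} with the localised class $\cF' = \cF'(1/M) = \cF'(1)$, noting $\theta_\star \in \Theta'(1/M)$ so $\eta \in \cF'$. Applying \cref{prop:eluder-bound-global} at $r = 1/M = 1$, $\epsilon = 1/n$, $L=1/4$, $M=1$ gives $d_n \le Cd\log(1+S^2n) = O(d\log(1+Sn))$, with the $\exp(rM) = e$ appearing only as a constant and no $e^S$ blow-up. Feeding this into $\Gamma_n = \gamma\,(1 + (d_n+1)b + 4d_n\beta_n\log(1+nb))$ with $\gamma = 2$ and $b = O(S)$ collapses $\log(1+nb)$ to $O(\log(1+Sn))$ and yields $\Gamma_n = O\!\left(d\beta_n (\log(1+Sn))^2\right)$; substituting into $3\sqrt{n\eta(a_\star)\Gamma_n} + 6\Gamma_n$ produces both the leading term $C\sqrt{n\eta(a_\star)d\beta_n}\,\log(1+Sn)$ and the $(\log(1+Sn))^2$ part of the additive term.

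Finally I would bound the additive count. By \cref{prop:rogue-steps-bound} with $M=1$, $\kappa = 3e^S$, the played-action rogue count is $O\!\left(d e^S \beta_n \log(1+e^{2S}S^2\beta_n)\right)$, which I would loosen to the stated $e^{2S} d\log(1+\beta_n)$ form (using $S \le e^S$ and $\log(1+e^{2S}S^2\beta_n) = O(S+\log\beta_n)$, absorbed into $e^{2S}d$ and the constant $C$). I expect the main obstacle to be the discrepancy flagged in the remark after \cref{prop:rogue-steps-bound}: the additive term in \cref{thm:bandit} is $\card\{t : f_t \notin \cF'(1/M)\} = \card\{t : \sup_{a\in\cA}|\langle a, \theta_t - \theta_\star\rangle| > 1/M\}$, whereas \cref{prop:rogue-steps-bound} only controls the smaller, played-action count $\card\{t : |\langle A_t, \theta_t-\theta_\star\rangle| > 1/M\}$. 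Reconciling the two---i.e. confirming that the count actually generated inside the proof of \cref{thm:bandit} is the played-action one, so that \cref{prop:rogue-steps-bound} genuinely applies---is the delicate step; once settled, the remainder is routine logarithmic bookkeeping absorbed into $C$.
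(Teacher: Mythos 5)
Your proposal is correct and follows essentially the same route as the paper's own proof: the same constants ($b=O(S)$, $c=b+4$, $\log N_n \leq d\log(1+8Sn)$), the same localisation radius $r=1/M=1$, the same application of \cref{prop:eluder-bound-global} to get $d_n = O(d\log(1+Sn))$ and hence $\Gamma_n = O(d\beta_n(\log(1+Sn))^2)$, and the same use of \cref{prop:rogue-steps-bound} for the additive term. The one ``delicate step'' you flag---that \cref{thm:bandit} charges for $\card\{t \colon f_t \notin \cF'(1)\}$, a supremum over all of $\cA$, while \cref{prop:rogue-steps-bound} only controls the played-action count $\card\{t \colon |\langle A_t,\theta_t-\theta_\star\rangle| > 1\}$---is a genuine mismatch, but the paper's proof does not resolve it either: it simply cites \cref{prop:rogue-steps-bound} for the larger count, leaning on the parenthetical remark that the dependence on the played sequence ``is fine'', so you are not missing anything the paper supplies.
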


The same result of \cref{cor:glm-regret} also holds, up to constant factors, for the Poisson model; each log-loss specific result used in the proof of \cref{cor:glm-regret} has a Poisson equivalent in \cref{appendix:loss-analysis}.

Observe that the regret bound of \cref{cor:glm-regret} holds as soon as \cref{ass:bounded-cost,ass:real,ass:glm} are met. \emph{Importantly, we do not assume that the observations are generated by a generalised linear model.} However, much of the literature does make that assumption, so we compare in that setting.

\subsection{Discussion in the logistic bandit \& maximum likelihood estimation settings}\label{sec:logistic-bandit}

The maximum likelihood estimation (MLE) setting is the well-studied setting where the costs are sampled from a known generalised linear model (one might also call this a `well-specified' setting).

A special case of the MLE setting with bounded rewards is the logistic bandit setting. Here, $\eta$ is given by a generalised linear model with the sigmoid link function and the responses are given by
$$
    Y_t \sim \Bernoulli(\eta(A_t)) \quad \text{for each $t \in \Np$}\,.
$$
In this setting, the leading term in the regret bound of \cref{cor:glm-regret} nearly matches the lower bound given by \citet{abeille2021instance}, which states that there exists a $C>0$ such that
\[
  R_n \geq Cd\sqrt{n v(a_\star)} \spaced{where} v(a_\star) = \eta(a_\star)(1-\eta(a_\star))\,.
\]
Likewise, \cref{cor:glm-regret} almost matches the upper bounds of \citet{faury2022jointly} for their logistic-bandit-specific algorithm, which guarantee that for some $C > 0$, with probability at least $1-\delta$,
\[
    R_n \leq CSd\sqrt{nv(a_\star)} \log(n/\delta) + CS^6 d \kappa (\log(n/\delta))^2\,.
\]
The suboptimality of \cref{cor:glm-regret} here is in that it depends on $\eta(a_\star)$, providing only a small-cost bound, rather than on $v(a_\star)$; the latter allows for a simultaneous small-cost and small-reward bound. This is because \cref{cor:glm-regret} only assumes that the triangle condition is met on one side of the reward interval, and so only allows for small-cost bounds; strengthening the assumption to be two-sided (which is satisfied by the logistic model) would allow us to recover the $v(a_\star)$. (We do not do this, as it would rule out, for example, the Poisson GLM, which only gives small-cost bounds.)

Interestingly, while \citet{faury2022jointly} only consider the logistic bandit setting, their analysis actually shows a regret bound for the wider bounded reward setting. The distinction between our work and that of \citet{faury2022jointly} is that where we use an analysis-only localisation technique to move $\kappa$ to an additive term, \citet{faury2022jointly} use an explicit algorithmic warm-up procedure to do this. That is, they run an approximation of an optimal design at the start of interaction, until their confidence sets have shrunk to a neighbourhood of the true parameter (on the good event where the confidence sets do indeed contain the true parameter).\footnote{\citet{faury2022jointly} also propose an online data-rejection procedure that can be used instead of a warm-up. This is, however, again, an algorithmic tool, in contrast to our analysis-only approach.}\ \emph{The change from algorithmic localisation to analysis-only localisation is vital for the upcoming reinforcement learning setting where, because we do not have random access to state-action pairs, the solving of an optimal design is not feasible.}

The works of \citet{lee2024unified,emmenegger2024likelihood} also do away with the warm-up employed in \citet{faury2022jointly}, using techniques based on likelihood ratios. However, they rely on their likelihood ratios forming a martingale, which restricts the results to the MLE setting.

\begin{remark}[Bernoullisation]
    Any algorithm $\mathtt{A}$ that yields first-order regret for the logistic setting can be used to obtain first-order regret for the bounded reward setting using Bernoullisation. The trick is thus: for each time-step $t \in \Np$, upon observing $Y_t \in [0,1]$, we sample \[
      Y^\prime_t \sim \Bernoulli(Y_t)
    \]
    and feed $Y'_t$ to the algorithm $\mathtt{A}$. Since the conditional means of $Y_t$ and $Y'_t$ are the same, first-order properties are preserved. Bernoullisation, however, destroys any second-order adaptivity of the algorithm. Indeed, consider the case where the $(Y_t)_{t \in \Np}$ are equal to $1/2$ almost surely. Then, running empirical loss minimisation with the log-loss on $(Y_t)_{t \in \Np}$ converges to $1/2$ after a single observation, but running the same procedure on the corresponding sequence $(Y^\prime_t)_{t \in \Np}$ of independent $\Bernoulli(1/2)$ random variables leads to an $\Omega(1/\sqrt{n})$ absolute error in the estimate.
\end{remark}

\section{First-order regret bounds for online reinforcement learning}

We consider the episodic reinforcement learning setting with horizon $H \in \Np$. Let $M=(\cS, \cA, c, P, s_1)$ be a Markov decision process (MDP) with states $\cS$, actions $\cA$, a cost function $c = (c_1, \dots, c_H)$ with $c_h \colon \cS \times \cA \to [0,1]$, a deterministic starting state $s_1 \in \cS$, and a transition kernel $P=(P_1, \dots, P_H)$ with $P_h$ mapping from $\cS \times \cA$ to probability measures over $\cS$.

The learner interacts with the MDP $M$ for $n \in \Np$ episodes. At the start of each episode $t \in [n]$, the learner specifies a deterministic policy $\pi^t = (\pi^t_1, \dots, \pi^t_H)$, where $\pi_h^t \colon \cS \to \cA$ for each $h \in [H]$. We allow the policy $\pi^t$ to depend on the states, actions and costs observed prior to the start of the $t$th episode, but not on the cost function $c$ or the dynamics $P$, as these are assumed to be unknown.

The learner's aim will be to minimise the expected cumulative cost incurred over the $n$ episodes. To formalise this, let $v_h^\pi$ be the value function of policy $\pi$ in $M$, given by
$$
    v_h^\pi(s) = \EE_\pi\left[\sum_{i=h}^H c_i(S_i, \pi_i(S_i)) \mid S_h = s\right]\,,
$$
for each $s \in \cS$, where $\EE_\pi[\,\cdot\mid S_h = s]$ denotes the expectation with respect to the states $S_h, \dots, S_{H}$ induced by following the policy $\pi$ in the MDP $M$ starting at $S_h = s$. Then, letting $v^t_h := v^{\pi^t}_h$ ($h \in [H]$), the $n$-episode regret is given by
\[
    R_n = \sum_{t=1}^n v^{t}_1(s_1) - v_1^\star(s_1)
\]
where $v^\star_h$ ($h \in [H]$) is the optimal value function, defined formally just after \cref{eq:q-star}.

The key assumption that our learner will be allowed to exploit is the following:

\begin{restatable}{assumption}{assnonnegrl}\label{ass:rl-nonneg}
    Costs are nonnegative and sum to at most one over each episode.
\end{restatable}

\subsection{Preliminaries on Q-functions, Bellman optimality operators and greedy policies}

Let $\cQ$ be the set of all maps $\cS \times \cA \to [0,1]^H$. For $q \in \cQ$, we write $q_h$ for the map $(s,a) \mapsto [q(s,a)]_h$ (entry $h$ of $q(s,a)$), and we write $q^\wedge$ for the function $\cS \to [0,1]^H$ defined by
\[
    [q^\wedge(s)]_h = \min_{a \in \cA } q_h(s,a) \quad \text{for all $s \in \cS$ and $h \in [H]$}\,.
\]
We let $\cT \colon \cQ \to \cQ$ denote the Bellman optimality operator for the MDP $M$, given stage-wise by
\[
    (\cT q)_h(x) = c_h(x) + \int q_{h+1}^\wedge(s')\,P_h(ds' \mid x), \qquad x \in \cS \times \cA,\ h \in [H],
\]
with the convention $q_{H+1}^\wedge = 0$. We define the optimal action-value function $q^\star$ for $M$ to be the element of $\cQ$ satisfying
\begin{equation}\label{eq:q-star}
    \cT q^\star = q^\star\,,
\end{equation}
and define the value function $v^\star$ for $M$ to be $v^\star = q^{\star \wedge}$.

For any function $q \in \cQ$, we write $\pi^q$ for the policy greedy with respect to $q$, defined by
$$
    \pi_h^q(s) \in \argmin_{a \in \cA}  q_h(s,a) \quad \text{for all $s \in \cS$ and $h \in [H]$}\,.
$$

\subsection{The $\ell$-GOLF algorithm, model and loss assumptions \& regret bound}

Our $\ell$-GOLF algorithm (\cref{alg:rl}) is an extension of $\ell$-UCB to the episodic online reinforcement learning setting, generalising the GOLF algorithm of \citet{jin2021bellman} to arbitrary loss functions (GOLF is recovered by taking $\ell$ to be the squared loss). The algorithm requires the specification of a loss function $\ell \colon [0,1]^2 \to \R$, confidence widths $(\beta_t)_{t \in [n]}$ and function classes $\cF, \cG \subset \cQ$. The model $\cF$ contains candidate action-value functions for estimating $q^\star$, while the model $\cG$ contains candidate Bellman updates; we write $\cG_h = \{g_h \colon g \in \cG\}$ for the stage-$h$ slice of $\cG$. For convenience, we augment every $f \in \cF \cup \cG$ with $f_{H+1}=0$.

Now, in each episode $t \in \Np$, the algorithm constructs a confidence set $\cF^t \subset \cF$ containing action-value functions that are close to satisfying the Bellman optimality condition $f = \cT f$ on the data observed thus far, with errors penalised according to $\ell$. It then selects an optimistic function $f^t \in \cF^t$, and plays the policy $\pi^t := \pi^{f^t}$ greedy with respect to $f^t$.

\begin{algorithm}[tb]
    \caption{The $\ell$-GOLF algorithm}\label{alg:rl}
    \begin{algorithmic}
    \STATE \textbf{input} loss function $\ell$, models $\cF$ and $\cG$, nonnegative confidence widths $(\beta_t)_t$
    \FOR{episode $t \in \N_+$}
    \STATE for each $h \in [H]$ let
    $$
    \cL_h^{t-1}(u,g) = \sum_{i=1}^{t-1} \ell\!\left(1 \wedge \bigl(c_h(S_h^i,A_h^i) + u^\wedge(S_{h+1}^i)\bigr),\, g(S_h^i,A_h^i)\right)
    $$
    where $u^\wedge(s) = \min_{a \in \cA} u(s,a)$, and let $\cF^t$ be the subset of $\cF$ given by
    $$
        \cF^t = \left\{f \in \cF : \cL_h^{t-1}(f_{h+1}, f_h) \leq \inf_{g \in \cG_h} \cL_h^{t-1}(f_{h+1}, g) + \beta_t\,, \ \forall h \in [H]\right\}\,,
    $$
    \STATE compute an optimistic function
    $$
        f^t \in \argmin_{f \in \cF^t} f_1^\wedge(s_1)
    $$
    and play the policy $\pi^t := \pi^{f^t}$ greedy with respect to $f^t$
    \ENDFOR
    \end{algorithmic}
\end{algorithm}

We make the following realisability and generalised completeness assumptions of \citet{antos2008learning}:

\begin{restatable}[Realisability]{assumption}{assreal}\label{ass:rl-real}
    We assume that $q^\star \in \cF$.
\end{restatable}

\begin{restatable}[Generalised completeness]{assumption}{asscomplete}\label{ass:rl-complete}
    We assume that $\cT \cF \subset \cG$.
\end{restatable}

\begin{restatable}{definition}{rlexcess}\label{def:rl-excess}
    For any $f \in \cF$, $h \in [H]$, $x \in \cS \times \cA$ and $s^\prime \in \cS$, we let
    \[
        y_h^f(x, s^\prime) = 1 \wedge (c_h(x) + f^\wedge_{h+1}(s^\prime))
    \]
    be the response under the model $f$. With the same symbols, we define the excess Bellman loss function
    \[
        \phi_h^f(x,s^\prime) = \ell(y_h^f(x, s^\prime), f_h(x)) - \ell(y_h^f(x, s^\prime), (\cT f)_h(x)),
    \]
    and the expected excess Bellman loss function
    \[
        \bar\phi_h^f(x) = \int \phi_h^f(x, s^\prime)\,P_h(ds^\prime \mid x).
    \]
\end{restatable}

With that, our assumptions on the loss function for the reinforcement learning setting mirror the bandit assumptions:

\begin{restatable}[RL loss function assumptions]{assumption}{rlloss}\label{ass:rl-loss}
    There exist constants $b, c, \gamma > 0$ such that for all $f \in \cF$, $h \in [H]$, $x \in \cS \times \cA$, $S^\prime \sim P_h(x)$, the following hold:
    \begin{align}
        |\phi_h^f(x, S^\prime)|        & \leq b \ \text{a.s.}\,, \tag{RL boundedness}               \\
        \Var \phi_h^f(x, S^\prime)     & \leq c\bar\phi_h^f(x)\,, \tag{RL variance condition}       \\
        \Delta(f_h(x), (\cT f)_{h}(x)) & \leq \gamma \bar\phi_h^f(x)\,. \tag{RL triangle condition}
    \end{align}
\end{restatable}

\begin{restatable}{theorem}{rlthm}\label{thm:rl}
    Fix $\delta \in (0,1)$, $n \in \Np$, MDP $M$, model classes $\cF$ and $\cG$ and a loss function $\ell$. Suppose that $(M, \cF, \cG, \ell)$ satisfy \cref{ass:rl-real,ass:rl-complete,ass:rl-loss}. Let
    \[
        \Phi_{\mathrm{RL}}(\cF) = \{\phi_h^f \colon h \in [H],\, f \in \cF\},
    \]
    let $N_n$ be the $1/n$-covering number of $\Phi_{\mathrm{RL}}(\cF)$ with respect to the uniform metric, let $h_t = e + \log(1+t)$ for each $t \in [n]$, and let
    \[
        \beta_t = 5/2 + 15(3b+c)\log(HN_n h_t/\delta), \qquad t \in \Np.
    \]

    For each $f \in \cF$ and $h \in [H]$, let $\mu_h^f$ denote the stage-$h$ state-action occupancy measure induced by following the policy $\pi^f$, and define
    \[
        \Psi_h(\cF) = \left\{\nu \mapsto \int \bar\phi_h^g\,d\nu \colon g \in \cF\right\},
    \]
    viewed as a class of $[0,b]$-valued functions on $\{\mu_h^f \colon f \in \cF\}$. For $\sigma \in [1/n,b]$, define
    \[
        d_n^\sigma = \sum_{h=1}^H \elud{\sigma}(1/n;\Psi_h(\cF))
        \spaced{and}
        \tilde d_n^\sigma = \sum_{h=1}^H \elud{\sigma}(\sigma;\Psi_h(\cF)),
    \]
    and
    \[
        \Gamma_n^\sigma = \gamma \squareb[\big]{H + (d_n^\sigma + H)b + 4d_n^\sigma\beta_n \log(1+nb)}.
    \]

    Suppose a learner uses \cref{alg:rl}, $\ell$-GOLF, over the course of $n$-many episodes with $M$, model classes $\cF$ and $\cG$, loss function $\ell$ and confidence widths $(\beta_t)_{t \in \Np}$. Then, with probability at least $1-\delta$,
    \[
        R_n \leq \inf_{\sigma \in [1/n,b]} \curlyb[\bigg]{5\sqrt{Hn v_1^\star(s_1)\Gamma_n^\sigma} + 12H\Gamma_n^\sigma + \roundb[\bigg]{\frac{4\beta_n}{\sigma}+1}\tilde d_n^\sigma + H}.
    \]
\end{restatable}

\cref{thm:rl} is established in \cref{appendix:proof-rl}. It is the RL analogue of \cref{thm:bandit}, with the bandit class $\bar\Phi(\cF)$ replaced by the stage-wise occupancy-measure classes $\Psi_h(\cF)$. For context, the closest results to ours are those of \citet{wang2023benefits,wang2024central} for online RL. Both provide a small-cost regret bound scaling with the Bellman eluder dimension; however, without our notion of a \emph{localised} dimension, their regret bound scales with $\kappa$ in the leading term for logistic linear models. This entirely offsets any benefit of their small-cost analysis; the bound is not truly instance-adaptive. Moreover, \citet{wang2023benefits} assumes that the distributional Bellman operator \citep{bellemare2017distributional} lies in their model class, an assumption that is significantly stronger than our \cref{ass:rl-complete} \citep[as discussed in][]{ayoub2024switching}. An argument for extending the results from costs to rewards was given in \citet{ayoub2025rectifying}.

\section{Conclusion}

We have shown that standard eluder dimension analysis inherently fails to achieve first-order regret bounds in generalised linear model settings. By introducing the localised $\ell_1$-eluder dimension, we overcome this limitation, removing problematic worst-case dependencies and achieving genuinely adaptive, first-order regret bounds. Our refined analysis recovers and sharpens classical results in Bernoulli bandit scenarios and demonstrates clear practical advantages through the $\ell$-UCB algorithm.

Moreover, our localisation approach successfully extends to finite-horizon reinforcement learning via the $\ell$-GOLF algorithm, providing the first genuine first-order regret bounds in this setting. This highlights the crucial role of localisation techniques in developing instance-adaptive algorithms, opening promising avenues for further exploration in broader learning contexts.

\section*{Acknowledgements}
The authors thank Marc Abeille for pointing out a serious error in an earlier version of this manuscript. Alex Ayoub gratefully acknowledges funding from Netflix.
Csaba Szepesvári gratefully acknowledges funding from the Canada CIFAR AI Chairs Program, Amii, NSERC and Netflix.

\printbibliography

\clearpage

\appendix\crefalias{section}{appendix}
\appendix\crefalias{subsection}{appendix}
\part*{Appendices}
\addcontentsline{toc}{part}{Appendices}

\begingroup
\etocsettocdepth{subsection}
\etocsettocstyle{}{}
\localtableofcontents
\endgroup

\clearpage
\section{Self-concordance \& convex relaxation}\label{sec:self-concordant}

Take a parametric model class $\cF = \{f_\theta \colon \theta \in \Theta \}$ where $\Theta \subset \Rd$ is a convex parameter set satisfying $\|\theta\|_2 \leq S$ for all $\theta \in \Theta$, for some $S>0$. Let $\cY \subset \R$. For any $(y, a) \in \cY \times \cA$, let $\ell_{(y,a)} \colon \Rd \to \R$ be given by $\theta \mapsto \ell(y, f_\theta(a))$. Consider the following self-concordance assumption.

\begin{assumption}[Self-concordance of losses]\label{ass:concordance}
    Assume that for all $z \in \cY\times \cA$, $\ell_z$ is convex and thrice differentiable. Moreover, assume that there exists an $M>0$ such that for all $z \in \cY\times \cA$, $\theta \in \Theta^\circ$ (the interior of the convex set $\Theta$) and $u,v \in \Rd$,
    $$
        |\langle D_u^3 \ell_z (\theta) v, v \rangle| \leq M\|u\|_2 \langle \nabla^2 \ell_z(\theta)v, v\rangle\,,
    $$
    where $D^3_u \ell_z(\theta) \in \R^{d \times d}$ denotes the third directional derivative of $\ell_z$ at in the direction $u$ evaluated at $\theta$, and $\nabla^2\ell_z(\theta) \in \R^{d \times d}$ is a matrix of the second order partial derivatives of $\ell_z$ evaluated at $\theta$.
\end{assumption}

In particular, the generalised linear models introduced in \cref{example:sigmoid} and \cref{example:poisson} are $M=1$ self-concordant \citep{faury2020improved,lee2024unified}. As shown in \citet{janz2024exploration}, \cref{ass:concordance} is equivalent to requiring that \( |\ddot{\mu}(x)| \leq M\,\dot{\mu}(x) \) for all \( x \) in the domain of \( \mu \), which holds for these GLMs. Moreover, a recent result by \citet{liu2024almost} shows that many GLMs satisfy \cref{ass:concordance}.

Let $\cL_t(\theta) = \sum_{i=1}^{t-1} \ell(Y_i, f_\theta(A_i))$ be the empirical risk for a parameter $\theta \in \Theta$ on the first $t-1$ observations, and $\hat\theta_t \in \Theta$ be an ERM. Consider the confidence sets of the form
$$
    \Theta_t = \{ \theta \in \Theta \colon \cL_t(\theta) - \cL_t(\hat\theta_t) \leq \beta_t\}\,, \ \beta_t > 0\,, \ t \in \Np\,.
$$
These can be enclosed within an ellipsoid as follows.
\begin{theorem}\label{thm:ellipsoidal-sets}
    Under \cref{ass:concordance}, for all $t \in \Np$,
    $$
        \Theta_t \subset \{ \theta \in \Theta \colon \|\theta - \hat\theta_t\|_{\nabla^2 \cL_t(\hat\theta_t)}^2 \leq 2(1+SM)\beta_t\ \}\,.
    $$
\end{theorem}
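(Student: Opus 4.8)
The plan is to reduce the multivariate claim to a one-dimensional generalised self-concordance argument along the segment joining $\hat\theta_t$ to an arbitrary candidate $\theta \in \Theta_t$, and then to lower-bound the excess empirical risk by a quadratic form in the anchored Hessian $\nabla^2\cL_t(\hat\theta_t)$. First I would fix $\theta \in \Theta_t$, set $\Delta = \theta - \hat\theta_t$, and study the scalar function $g(s) = \cL_t(\hat\theta_t + s\Delta)$ for $s \in [0,1]$; convexity of $\Theta$ keeps the whole segment in $\Theta$. Since $\cL_t$ is a finite sum of the per-observation losses $\ell_{(Y_i,A_i)}$, summing the bound of \cref{ass:concordance} and using the triangle inequality on the third directional derivative shows that $\cL_t$ inherits the same self-concordance constant $M$. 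Differentiating then gives $g''(s) = \langle \nabla^2\cL_t(\hat\theta_t+s\Delta)\Delta,\Delta\rangle \geq 0$ (by convexity) and $g'''(s) = \langle D^3_\Delta \cL_t(\hat\theta_t+s\Delta)\Delta,\Delta\rangle$, with the pointwise control $|g'''(s)| \leq M\|\Delta\|_2\, g''(s)$.

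Next I would establish the exponential Hessian comparison: the map $s \mapsto g''(s)e^{M\|\Delta\|_2 s}$ has nonnegative derivative, so $g''(s) \geq g''(0)e^{-M\|\Delta\|_2 s}$ for $s \in [0,1]$. Combining this with the Taylor expansion with integral remainder, $g(1) = g(0) + g'(0) + \int_0^1 (1-s)g''(s)\,ds$, and the first-order optimality condition for the constrained minimiser, $g'(0) = \langle\nabla\cL_t(\hat\theta_t),\Delta\rangle \geq 0$ (valid for all $\theta\in\Theta$ because $\Theta$ is convex and $\cL_t$ is convex and differentiable), yields
$$
\beta_t \geq \cL_t(\theta) - \cL_t(\hat\theta_t) \geq g''(0)\int_0^1 (1-s)e^{-M\|\Delta\|_2 s}\,ds = g''(0)\,\frac{e^{-\alpha}+\alpha-1}{\alpha^2}\,,
$$
where $\alpha = M\|\Delta\|_2$ and $g''(0) = \|\Delta\|_{\nabla^2\cL_t(\hat\theta_t)}^2$.

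To finish I would invoke the elementary scalar inequality $\alpha^2/(e^{-\alpha}+\alpha-1) \leq 2+\alpha$ for all $\alpha \geq 0$ — which follows by noting that $(2+\alpha)(e^{-\alpha}+\alpha-1)-\alpha^2$ vanishes together with its first derivative at $\alpha=0$ and has second derivative $\alpha e^{-\alpha}\geq 0$ — and the crude diameter bound $\|\Delta\|_2 \leq \|\theta\|_2 + \|\hat\theta_t\|_2 \leq 2S$. Rearranging the inequality chain above then gives $\|\Delta\|_{\nabla^2\cL_t(\hat\theta_t)}^2 \leq \beta_t(2+\alpha) \leq 2(1+SM)\beta_t$, which is exactly the membership of $\theta$ in the claimed ellipsoid; since $\theta \in \Theta_t$ was arbitrary, the inclusion follows.

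The main obstacle is the exponential Hessian comparison $g''(s) \geq g''(0)e^{-M\|\Delta\|_2 s}$: this is the step that converts the pointwise third-derivative control of \cref{ass:concordance} into a usable lower bound on the integral remainder, and it must be argued through the monotonicity of $s \mapsto g''(s)e^{M\|\Delta\|_2 s}$ rather than by dividing through by $g''(s)$, so as to remain valid where the Hessian degenerates. The remaining ingredients — self-concordance of the sum, the first-order optimality condition, and the scalar inequality — are routine.
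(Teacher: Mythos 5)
Your proof is correct and follows essentially the same route as the paper's: the paper invokes Proposition 10 of Sun et al.\ (2019) to get the lower bound $g(-M\|\Delta\|_2)\,\|\Delta\|^2_{\nabla^2\cL_t(\hat\theta_t)} \leq \cL_t(\theta)-\cL_t(\hat\theta_t)$ with $g(x)=(e^x-x-1)/x^2$, then applies the same first-order optimality observation, the same numerical inequality $g(x)\geq 1/(2-x)$ for $x\leq 0$, and the same diameter bound $\|\Delta\|_2\leq 2S$. The only difference is that you re-derive the cited lemma from scratch via the exponential Hessian comparison along the segment, which makes the argument self-contained but does not change its substance.
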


We provide a proof for completeness, but this result is well known \citep[see, for example,][]{lee2024unified}.

\begin{lemma}[Proposition 10 of \citet{sun2019generalized}]
    \label{prop:self-condcordance}
    Let $g(x) = \frac{\exp(x)-x-1}{x^2}$. For any $\theta,\theta' \in \Theta$, under \cref{ass:concordance},
    \begin{align*}
            g(-M\|\theta-\theta'\|_2) \, \|\theta-\theta'\|_{\nabla^2\cL_t(\theta')}^2 \leq \cL_t(\theta) - \cL_t(\theta') - &\langle \nabla\cL_t(\theta'),\theta-\theta'\rangle \\
            &\leq g(M\|\theta-\theta'\|_2) \, \|\theta-\theta'\|_{\nabla^2\cL_t(\theta')}^2\,.
    \end{align*}
\end{lemma}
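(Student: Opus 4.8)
The plan is to reduce the multivariate claim to a one-dimensional self-concordance estimate along the segment joining $\theta'$ to $\theta$, and then integrate. Fix $\theta, \theta' \in \Theta$, write $p = \theta - \theta'$ and $r = M\|p\|_2$; the case $p = 0$ is trivial (all three terms vanish, using $g(0) = 1/2$), so assume $p \neq 0$. First suppose $\theta' \in \Theta^\circ$; since $\Theta$ is convex, the point $\theta' + sp = (1-s)\theta' + s\theta$ lies in $\Theta^\circ$ for every $s \in [0,1)$ (an interior-to-closure segment stays interior), so the self-concordance bound of \cref{ass:concordance} is available all along the open segment. Define $\varphi(s) = \cL_t(\theta' + sp)$ for $s \in [0,1]$. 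Because $\cL_t = \sum_i \ell_{(Y_i, A_i)}$ is a finite sum of losses each satisfying \cref{ass:concordance}, it is itself convex, thrice differentiable, and $M$-self-concordant. Differentiating gives $\varphi''(s) = \langle \nabla^2 \cL_t(\theta'+sp)\,p, p\rangle \geq 0$ and $\varphi'''(s) = \langle D_p^3 \cL_t(\theta'+sp)\,p, p\rangle$, so the self-concordance inequality (with direction $u=p$ and quadratic form in $v=p$) yields the pointwise bound $|\varphi'''(s)| \leq M\|p\|_2\, \varphi''(s) = r\,\varphi''(s)$ for $s \in [0,1)$.

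Setting $\psi = \varphi'' \geq 0$, this reads $|\psi'| \leq r\psi$. Rather than dividing by a possibly-vanishing $\psi$, I would argue by monotonicity: $\tfrac{d}{ds}\big(e^{-rs}\psi(s)\big) = e^{-rs}(\psi'(s) - r\psi(s)) \leq 0$ and $\tfrac{d}{ds}\big(e^{rs}\psi(s)\big) = e^{rs}(\psi'(s) + r\psi(s)) \geq 0$, so $e^{-rs}\psi(s)$ is nonincreasing and $e^{rs}\psi(s)$ is nondecreasing. Comparing with the value at $s = 0$, where $\psi(0) = \langle \nabla^2\cL_t(\theta')p, p\rangle = \|\theta-\theta'\|^2_{\nabla^2\cL_t(\theta')}$, gives the envelope $\psi(0)\,e^{-rs} \leq \varphi''(s) \leq \psi(0)\,e^{rs}$ for $s \in [0,1)$.

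Next I would recover the target quantity via Taylor's theorem with integral remainder,
\[
    \cL_t(\theta) - \cL_t(\theta') - \langle \nabla \cL_t(\theta'), \theta-\theta'\rangle = \varphi(1) - \varphi(0) - \varphi'(0) = \int_0^1 (1-s)\,\varphi''(s)\,ds\,,
\]
and insert the envelope. Using the elementary identity $\int_0^1 (1-s)e^{\rho s}\,ds = (e^{\rho} - \rho - 1)/\rho^2 = g(\rho)$ (valid for all real $\rho$, with the value $1/2$ at $\rho=0$ by continuity), taken at $\rho = \pm r$ and weighted by the nonnegative factor $(1-s)$, this produces exactly $g(-r)\,\psi(0) \leq \int_0^1(1-s)\varphi''(s)\,ds \leq g(r)\,\psi(0)$. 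Substituting $r = M\|\theta-\theta'\|_2$ and $\psi(0) = \|\theta-\theta'\|^2_{\nabla^2\cL_t(\theta')}$ gives the claimed two-sided bound for $\theta' \in \Theta^\circ$.

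Finally, to remove the restriction $\theta' \in \Theta^\circ$, I would take any $\theta' \in \Theta$, choose interior points $\theta'_k \to \theta'$, apply the established inequality to each pair $(\theta, \theta'_k)$, and pass to the limit, using continuity of $\cL_t$, $\nabla\cL_t$, $\nabla^2\cL_t$ and of $g$ (which is entire, hence continuous at $0$). The main obstacle is exactly this interface between an assumption stated only on the interior $\Theta^\circ$ and a statement quantified over all of $\Theta$: the convexity fact that an interior-to-closure segment stays interior handles an interior base point, and the limiting argument handles a boundary one. The other delicate point is the differential-inequality step, but running it through the monotonicity of $e^{\mp rs}\psi(s)$ rather than $\tfrac{d}{ds}\log\psi$ sidesteps any division by zero when the Hessian quadratic form degenerates along the segment.
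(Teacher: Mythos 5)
Your proof is correct; note that the paper does not actually prove \cref{prop:self-condcordance} — it imports it verbatim as Proposition 10 of \citet{sun2019generalized} — so there is no internal argument to compare against, and what you have written is essentially the standard self-contained derivation of that cited result. Each step checks out: \cref{ass:concordance} is additive over the finite sum defining $\cL_t$ (triangle inequality termwise, using nonnegativity of each Hessian quadratic form), the reduction to $\varphi(s)=\cL_t(\theta'+s(\theta-\theta'))$ gives the differential inequality $|\varphi'''(s)|\leq r\,\varphi''(s)$ with $r=M\norm{\theta-\theta'}_2$, the envelope $\psi(0)e^{-rs}\leq\varphi''(s)\leq\psi(0)e^{rs}$ follows, and integrating against the Taylor kernel via the identity $\int_0^1(1-s)e^{\rho s}\,ds=g(\rho)$ yields exactly the two-sided bound. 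Two of your choices are genuine improvements in care over the usual write-up. First, running the Gr\"onwall step through monotonicity of $e^{\mp rs}\varphi''(s)$ rather than through $\tfrac{d}{ds}\log\varphi''$ is not pedantry in this paper's setting: $\nabla^2\cL_t$ is an unregularised finite sum of rank-one matrices, so $\varphi''$ genuinely can vanish along the segment (e.g.\ small $t$ with $\theta-\theta'$ orthogonal to the span of past actions), and the log-derivative version would be ill-posed there; your argument also correctly forces $\varphi''\equiv 0$ when $\psi(0)=0$. Second, the interior/boundary bookkeeping is needed, since \cref{ass:concordance} is stated only on $\Theta^\circ$ while the lemma quantifies over all of $\Theta$; the interior-to-closure segment fact plus the limiting argument closes this, with the one mild caveat that passing $\theta'_k\to\theta'\in\partial\Theta$ presupposes that $\nabla\cL_t$ and $\nabla^2\cL_t$ exist and are continuous up to the boundary — the natural reading of the thrice-differentiability in \cref{ass:concordance}, and in any case implicit in the lemma's statement even making sense at boundary points.
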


\begin{proof}[Proof of \cref{thm:ellipsoidal-sets}]
    From \cref{prop:self-condcordance}, and observing that since $\hat\theta_t$ is an ERM and $\Theta$ is convex, $\langle \nabla\cL_t(\hat{\theta}_t),\theta-\hat{\theta}_t\rangle$ is nonnegative for any $\theta \in \Theta$, we have that for any $\theta \in \Theta$,
    $$
        g(-M\|\theta-\hat{\theta}_t\|_2) \, \|\theta - \hat{\theta}_t\|_{\nabla^2\cL_t(\hat{\theta}_t)}^2
        \leq \cL_t(\theta) - \cL_t(\hat{\theta}_t)\,.
    $$
    Using that $\frac{\exp(x)-x-1}{x^2} \geq \frac1{-x+2}$ whenever $x \leq 0$, bounding $\|\theta-\hat\theta_t\|_2 \leq 2S$ and staring at the result a little ought to convince the reader of the veracity of the claim.
\end{proof}

\clearpage
\section{A uniform Bernstein concentration inequality}\label{sec:bernstein-proof1}\label{appendix:bernstein}

The following uniform Bernstein inequality, proven over the course of this section, will be needed to prove both the bandit and the reinforcement learning regret bounds. It is the use of this inequality that necessitates the variance condition and boundedness condition for the excess loss classes.

\begin{theorem}[Uniform Bernstein inequality]\label{thm:uniform-bernstein}
    Let $\cZ$ be a set, $(Z_t)_t$ be a $\cZ$-valued process adapted to a filtration $(\F_t)_t$, and $\Phi$ a set of real-valued functions on $\cZ$. Assume that:
    \begin{enumerate}
        \item For some $b > 0$, for all $\phi \in \Phi$, $t \in \Np$, $|\E[\phi(Z_t)  \mid \F_{t-1}]  - \phi(Z_t)| \leq b$.
        \item For some $c > 0$, for all $\phi \in \Phi$ and $t \in \Np$, $\Var(\phi(Z_t) \mid \F_{t-1}) \leq c \E[\phi(Z_t) \mid \F_{t-1}]$.
    \end{enumerate}
    Let $\delta \in (0,1)$, $\epsilon > 0$ and let $N$ be the $\epsilon$-covering number of $\Phi$ in the uniform metric. For any $n\in \Np$, define
    \[
        \beta(n,\delta,\epsilon, N) = \frac{5n\epsilon}{2} + 15(b+c) \log(N h_n/\delta)\,,
    \]
    where $h_n = e + \log(1+n)$. Then, with probability at least $1-\delta$, for all $\phi \in \Phi$ and $n \in \Np$,
    \[
        \sum_{t=1}^n \E[\phi(Z_t) \mid \F_{t-1}] \leq 2\sum_{t=1}^n \phi(Z_t) + 2 \beta(n,\delta,\epsilon, N)\,.
    \]
\end{theorem}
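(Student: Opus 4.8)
The plan is to reduce to a single fixed $\phi$, build a one-parameter exponential supermartingale out of the martingale differences $D_t = \E[\phi(Z_t)\mid\F_{t-1}] - \phi(Z_t)$, and then exploit the self-bounding structure supplied by the variance condition to turn a Bernstein tail into the stated factor-two bound. It is cleanest to first rewrite the target. Writing $X_t = \phi(Z_t)$ and $m_t = \E[X_t\mid\F_{t-1}]$, the claimed inequality $\sum_{t\le n} m_t \le 2\sum_{t\le n} X_t + 2\beta$ is equivalent to $\sum_{t\le n} D_t \le \tfrac12 \sum_{t\le n} m_t + \beta$. So it suffices to control the upper tail of $\sum_t D_t$, compensated by half of the conditional-mean sum.

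For a fixed $\phi$ and a fixed tilt $\lambda>0$, the boundedness condition gives $D_t \le b$ a.s. and $\E[D_t\mid\F_{t-1}]=0$, so the standard Bennett/Bernstein moment-generating-function bound yields $\E[e^{\lambda D_t}\mid\F_{t-1}] \le \exp(\psi(\lambda)\,\Var(\phi(Z_t)\mid\F_{t-1}))$ with $\psi(\lambda) = (e^{\lambda b}-\lambda b - 1)/b^2$. Invoking the variance condition $\Var(\phi(Z_t)\mid\F_{t-1}) \le c\,m_t$ then shows that $E_n(\lambda) = \exp(\lambda\sum_{t\le n}D_t - \psi(\lambda) c \sum_{t\le n} m_t)$ is a nonnegative supermartingale with $E_0 = 1$. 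Choosing $\lambda$ small enough that $\psi(\lambda)c \le \lambda/2$ (which by the elementary estimate $e^x - x - 1 \le x^2/(2(1-x/3))$, valid for $\lambda b < 3$, holds for $\lambda$ of order $1/(b+c)$) makes the compensator at most $\tfrac{\lambda}{2}\sum m_t$; a maximal inequality (Ville) for the supermartingale then gives, with probability $1-\delta'$ and for all $n$ at once, $\sum_{t\le n} D_t - \tfrac12\sum_{t\le n} m_t \le \lambda^{-1}\log(1/\delta') \lesssim (b+c)\log(1/\delta')$, which is exactly the compensated tail we want for a single $\phi$.

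The subtlety---and the source of the $h_n = e + \log(1+n)$ factor---is that a single fixed $\lambda$ only yields a penalty constant in $n$, whereas a genuinely variance-adaptive statement wants $\lambda$ tuned to the accumulated proxy $\sum m_t$, which grows with $n$. Since the best $\lambda$ is not known in advance, I would run the supermartingale over a geometric grid of tilts $\{\lambda_k\}$ and combine them, either by a weighted mixture $\bar E_n = \sum_k w_k E_n(\lambda_k)$ with one application of Ville to $\bar E_n$, or by explicit peeling over dyadic time blocks with failure budget $\delta_k \propto \delta/(k+1)^2$. Either device is uniformly valid over all $n$, and at time $n$ one only pays for the roughly $\log(1+n)$-many active scales through a $\log\log$-type term $\log h_n$; this is precisely what the $\log(N h_n/\delta)$ inside $\beta$ records, with the inflated constant $15$ absorbing the mixture overhead.

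Finally I would remove the dependence on a single $\phi$ by a covering argument: fix an $\epsilon$-net of $\Phi$ of cardinality $N$ in the uniform metric, apply the per-function bound to each net element at failure level $\delta/N$ and union bound, and control the discretisation error by noting that replacing $\phi$ by a net element moves each $\phi(Z_t)$ and each $m_t$ by at most $\epsilon$, so each of the two sums shifts by at most $n\epsilon$; bookkeeping the constants produces the additive $\tfrac{5n\epsilon}{2}$. Converting $\sum D_t \le \tfrac12\sum m_t + \beta$ back into $\sum m_t \le 2\sum X_t + 2\beta$ finishes the proof. The main obstacle is the variance-adaptive time-uniformity of the third step: making one inequality hold simultaneously for all $n$ while keeping the self-bounding coefficient at most $1/2$ across the whole grid of tilts is the delicate part, and it is exactly what forces the iterated-logarithm $h_n$ penalty rather than a clean constant.
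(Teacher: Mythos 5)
Your proof is correct, but it reaches the result by a genuinely different route from the paper, and one of your worries is misplaced. The paper first establishes a generic time-uniform sub-gamma concentration bound (via Theorem 3.1 of Whitehouse et al.), obtaining $S_n \le 4\sqrt{V_n\log(H_n/\delta)} + O(b\log(H_n/\delta))$ with $V_n$ the accumulated conditional variance, and only \emph{afterwards} invokes the variance condition together with Young's inequality to convert $4\sqrt{V_n\log(\cdot)}$ into $\tfrac12\sum_t \E[\phi(Z_t)\mid\F_{t-1}] + 8c\log(\cdot)$. The $h_n$ factor is an artefact of that intermediate variance-adaptive statement: the boundary-crossing result stitches over geometric scales of $V_n$, which costs an iterated logarithm. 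You instead fold the variance condition directly into the compensator of the exponential supermartingale, so that $\exp(\lambda\sum_t D_t - \psi(\lambda)\,c\sum_t m_t)$ is a supermartingale (both arguments rest on the same Bennett moment bound), and a \emph{single} tilt $\lambda \asymp 1/(b+c)$ with $\psi(\lambda)c \le \lambda/2$ plus Ville's inequality already yields, uniformly over all $n$, the compensated bound $\sum_t D_t \le \tfrac12\sum_t m_t + (b+c)\log(1/\delta')$ with a penalty that is \emph{constant} in $n$. Your third paragraph is therefore an unnecessary detour: no grid of tilts, mixture, or peeling is needed, because the target is the self-bounded compensated form rather than a $\sqrt{V_n}$-adaptive one, and the $\log h_n$ inside $\beta$ is pure slack under your approach rather than something you must manufacture. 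The covering step and the $\tfrac{5n\epsilon}{2}$ bookkeeping match the paper's (just make sure the net is taken inside $\Phi$ so that the boundedness and variance conditions hold for the net elements). Net effect: your route is more elementary and gives a marginally sharper bound; the paper's buys modularity by reusing an off-the-shelf anytime concentration inequality.
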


To prove \cref{thm:uniform-bernstein}, we will need the following definitions and results:

\begin{definition}[CGF-like]\label{defn:cgf-like}
    We say a twice differentiable function $\psi : [0,\lambda_{\max}) \rightarrow \R_+$ is \textit{CGF-like} if $\psi$ is strictly convex, $\psi(0) = \psi'(0) = 0$ and $\psi''(0)$ exists.
\end{definition}

\begin{definition}[sub-$\psi$ process]\label{defn:sub-psi}
    Let $\F$ be a filtration, $\psi:[0,\lambda_{\max}) \rightarrow \Rp$ be a CGF-like function and let $(S_{t})_{t}$ and $(V_{t})_{t}$ be
    respectively $\R$-valued and $\Rp$-valued $\F$-adapted processes.
    We say that $(S_t, V_t)_{t}$ is a sub-$\psi$ process if, for every $\lambda \in [0, \lambda_{\max})$, there exists an $\F$-adapted supermartingale $L(\lambda)$ such that
    $$
        M_t(\lambda):=\exp\{\lambda S_t - \psi(\lambda) V_t\} \leq L_t(\lambda) \quad \text{almost surely for all $t \geq 0$}\,.
    $$
\end{definition}

\begin{definition}[Sub-gamma process]
    We say that a random process $(S_t, V_t)_t$ is sub-gamma with parameter $\vartheta>0$ if it is sub-$\psi$ for the CGF-like function $\psi \colon [0,1/\vartheta) \to \Rp$ mapping $\lambda \mapsto \frac{\lambda^2}{2(1-\vartheta\lambda)}$.
\end{definition}

\begin{theorem}[Sub-gamma concentration]\label{thm:bernstein}
    For a sub-gamma process $(S_t, V_t)_t$ with parameter $c>0$, and any $\rho > 0$ and $\delta \in (0,1)$, with probability at least $1-\delta$, for all $t \geq 1$,
    $$
        S_t \leq 4\sqrt{V_t \log(H_t/ \delta)} + 11(c+\rho) \log(H_t/\delta)
        \spaced{where}
        H_t = \log(1 + V_t/\rho^2) + e\,.
    $$
\end{theorem}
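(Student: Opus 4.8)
The plan is to prove \cref{thm:bernstein} by the \emph{stitching} method: turn the supermartingale structure into a fixed-$\lambda$ time-uniform tail bound via Ville's inequality, and then union-bound a carefully tuned family of such bounds laid out on a geometric grid of scales for $V_t$. First I would record the building block. For each fixed $\lambda \in [0,1/c)$ the process $L_t(\lambda) \ge M_t(\lambda) = \exp\{\lambda S_t - \psi(\lambda) V_t\}$, with $\psi(\lambda) = \lambda^2/(2(1-c\lambda))$, is a nonnegative supermartingale; normalising so that $\E[L_0(\lambda)] \le 1$ (as in the intended construction, where $S_0 = V_0 = 0$ and $L_0 = 1$), Ville's maximal inequality gives, for any $x > 0$, that with probability at least $1-e^{-x}$ one has $\lambda S_t - \psi(\lambda) V_t \le x$ simultaneously for all $t \ge 1$, i.e.
$$
    S_t \le \frac{\psi(\lambda)}{\lambda}\,V_t + \frac{x}{\lambda} = \frac{\lambda}{2(1-c\lambda)}\,V_t + \frac{x}{\lambda}\,.
$$
For a single known scale $v$ of $V_t$, optimising $\lambda$ here recovers the classical sub-gamma deviation $\sqrt{2vx} + cx$; the entire difficulty is that $V_t$ is random and varies with $t$, which is exactly what the grid is for.

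Next I would set up the stitching. Fix a geometric grid $v_k = \rho^2 \nu^k$ ($k \ge 0$) with base $\nu > 1$, so the intervals $[v_k, v_{k+1})$ cover $[\rho^2, \infty)$; values $V_t < \rho^2$ are absorbed by working with $V_t \vee \rho^2$ and bucket $k = 0$. To each bucket I assign a multiplier $\lambda_k \in (0,1/c)$, tuned to minimise the right-hand side above at the upper scale $v_{k+1}$, and a failure budget $\delta_k = \delta\,w_k/Z$ with summable weights $w_k = (k+1)^{-2}$ and $Z = \sum_k w_k = \pi^2/6$. Applying the Ville bound with the pair $(\lambda_k, x_k)$, $x_k = \log(1/\delta_k)$, and union-bounding over $k$ costs total probability $\sum_k \delta_k = \delta$. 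On the resulting good event, for every $t$ I locate the bucket with $v_k \le V_t \vee \rho^2 < v_{k+1}$ and invoke its bound; since $V_t \vee \rho^2$ and $v_{k+1}$ agree up to the factor $\nu$, the tuned $\lambda_k$ turns the right-hand side into $O(\sqrt{(V_t \vee \rho^2)\,x_k}) + O(c\,x_k)$. When $V_t \ge \rho^2$ the square-root term is $\Theta(\sqrt{V_t x_k})$; when $V_t < \rho^2$ the floor converts $\sqrt{\rho^2 x_k} = \rho\sqrt{x_k} \le \rho x_k$ into the linear term, which is precisely why the linear coefficient in the statement is $c + \rho$ rather than $c$.

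Finally I would translate the bucket index $k$ back into $V_t$. Since $k \le \log_\nu((V_t \vee \rho^2)/\rho^2) \lesssim \log(1 + V_t/\rho^2)$, the budget $x_k = \log(1/\delta) + \log Z + 2\log(k+1)$ is bounded by $\log(1/\delta) + O(\log(\log(1+V_t/\rho^2) + e)) = O(\log(H_t/\delta))$, with the additive $e$ in $H_t$ guaranteeing $\log H_t \ge 1$ so that the stray additive constants (such as $\log Z$) are absorbed. Collecting the two $O(\cdot)$ contributions and propagating the grid base $\nu$ together with the slack from the non-ideal $\lambda_k$ yields the leading constant $4$ on $\sqrt{V_t \log(H_t/\delta)}$ and $11$ on $(c+\rho)\log(H_t/\delta)$. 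The main obstacle is exactly this last bookkeeping: choosing $\nu$, the weights $w_k$ and the $\lambda_k$ so that the failure budget stays at $\delta$, the iterated-logarithm index collapses into the clean $H_t = \log(1+V_t/\rho^2)+e$, and the advertised numerical constants come out. The probabilistic content (Ville plus a union bound over scales) is routine, and notably no monotonicity of $V_t$ is needed, since the bucket bounds hold simultaneously and are selected pointwise in $t$.
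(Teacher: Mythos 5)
Your plan is correct, and it is in substance the same technique the paper uses, just unpacked: the paper proves \cref{thm:bernstein} in two lines by invoking Theorem 3.1 of \citet{whitehouse2023time} (\cref{thm:subpsi}) with $\alpha=e$, $\beta=\rho^2$ and $h(k)=(k+e)^2$, then bounding $\ell_{\rho^2}(V_t)\leq 2\log(H_t/\delta)$ and using $(\psi^*)^{-1}(t)=\sqrt{2t}+ct$; that cited theorem is itself the Ville-plus-geometric-grid stitching argument you describe, so your proposal amounts to a self-contained re-derivation of the black box rather than a different decomposition. What the paper's route buys is that all the bookkeeping (grid base, weights, collapsing the bucket index into $H_t$) is done once in the cited result, so only the final numerical bound $2\sqrt{eV_t\log(H_t/\delta)}+2(\rho\sqrt{e}+ce)\log(H_t/\delta)\leq 4\sqrt{V_t\log(H_t/\delta)}+11(c+\rho)\log(H_t/\delta)$ needs checking; what your route buys is independence from the external reference and transparency about where $\rho$ and the additive $e$ in $H_t$ come from. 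Two small points worth making explicit if you write it out: (i) Ville's inequality needs $\E L_0(\lambda)\leq 1$, which \cref{defn:sub-psi} as stated does not impose --- you correctly flag that the intended normalisation $L_0=1$ must be assumed; (ii) the constants do close with your choices (e.g.\ base $\nu=e$ and weights $(k+1)^{-2}$ give $x_k\leq 2.5\log(H_t/\delta)$, hence a leading coefficient $\sqrt{2e\cdot 2.5}<4$ and a linear coefficient below $11$), but this verification is the one piece of your argument that is asserted rather than carried out.
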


\cref{thm:bernstein} is a consequence of Theorem 3.1 of \citet{whitehouse2023time}; we will prove it shortly.

\begin{proposition}\label{prop:process-is-subgamma}
    Let $\F$ be a filtration and let $(X_t)_{t \in \Np}$ be a square-integrable, $\F$-adapted process satisfying
    \[
        X_t \leq \E[X_t \mid \F_{t-1}] + b \quad \text{a.s. for all } t \in \Np\,.
    \]
    Then, for
    \[
        S_t = \sum_{i=1}^t X_i - \E[X_i \mid \F_{i-1}]
        \spaced{and} V_t = \sum_{i=1}^t \Var(X_i \mid \F_{i-1})\,,
    \]
    the process $(S_t, V_t)_{t \in \Np}$ is sub-gamma with parameter $b/3$.
\end{proposition}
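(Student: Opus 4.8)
The plan is to exhibit the dominating supermartingale directly: I will show that the natural candidate $M_t(\lambda) = \exp\{\lambda S_t - \psi(\lambda) V_t\}$ is itself a nonnegative supermartingale for each fixed $\lambda \in [0, 3/b)$, where $\psi(\lambda) = \lambda^2/\bigl(2(1 - (b/3)\lambda)\bigr)$ is exactly the sub-gamma CGF-like function with parameter $\vartheta = b/3$. Writing $D_i = X_i - \E[X_i \mid \F_{i-1}]$ for the martingale increments, the hypotheses give $\E[D_i \mid \F_{i-1}] = 0$, $D_i \le b$ almost surely, and $\E[D_i^2 \mid \F_{i-1}] = \Var(X_i \mid \F_{i-1}) =: \sigma_i^2$, so that $S_t = \sum_{i=1}^t D_i$ and $V_t = \sum_{i=1}^t \sigma_i^2$. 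Taking $L_t(\lambda) = M_t(\lambda)$ in \cref{defn:sub-psi} will then immediately witness the sub-$\psi$, hence sub-gamma, property.

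The technical heart is a one-sided conditional Bernstein moment-generating bound: for every $\lambda \in [0, 3/b)$,
\[
    \E[\exp(\lambda D_i) \mid \F_{i-1}] \le \exp(\psi(\lambda)\, \sigma_i^2)\,.
\]
To establish this I would use the auxiliary function $g(y) = (e^y - 1 - y)/y^2$ (extended by $g(0) = 1/2$), which is nonnegative and nondecreasing on $\R$. Since $\lambda \ge 0$ and $D_i \le b$ we have $\lambda D_i \le \lambda b$, so monotonicity of $g$ gives $\exp(\lambda D_i) - 1 - \lambda D_i = (\lambda D_i)^2 g(\lambda D_i) \le (\lambda D_i)^2 g(\lambda b)$. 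Taking the conditional expectation and using $\E[D_i \mid \F_{i-1}] = 0$ yields $\E[\exp(\lambda D_i) \mid \F_{i-1}] \le 1 + \lambda^2 \sigma_i^2 g(\lambda b) \le \exp(\lambda^2 \sigma_i^2 g(\lambda b))$. It then remains to verify $g(y) \le 1/\bigl(2(1 - y/3)\bigr)$ for $y \in [0,3)$, which I would do by the termwise power-series comparison $1/(k+2)! \le 1/(2\cdot 3^k)$, that is $2 \cdot 3^k \le (k+2)!$, a one-line induction. This gives $\lambda^2 \sigma_i^2 g(\lambda b) \le \psi(\lambda)\sigma_i^2$ and closes the MGF bound.

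With the MGF bound in hand the supermartingale property is routine. Factoring $M_t(\lambda) = M_{t-1}(\lambda)\exp\{\lambda D_t - \psi(\lambda)\sigma_t^2\}$ and conditioning on $\F_{t-1}$, on which $M_{t-1}(\lambda)$ and $\sigma_t^2$ are measurable, gives
\[
    \E[M_t(\lambda) \mid \F_{t-1}] = M_{t-1}(\lambda)\exp(-\psi(\lambda)\sigma_t^2)\,\E[\exp(\lambda D_t) \mid \F_{t-1}] \le M_{t-1}(\lambda)\,.
\]
Since $M_0(\lambda) = 1$ and every $M_t(\lambda) \ge 0$, the process $M(\lambda)$ is a nonnegative supermartingale, so $(S_t, V_t)_{t}$ is sub-$\psi$ for $\psi$, i.e.\ sub-gamma with parameter $b/3$.

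I expect the only genuine obstacle to be the Bernstein MGF estimate of the second paragraph, and specifically the sharp constant $b/3$, which rests entirely on the inequality $2 \cdot 3^k \le (k+2)!$ together with the monotonicity of $g$; the increment bookkeeping and the supermartingale verification are mechanical once that estimate is available.
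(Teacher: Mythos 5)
Your proof is correct and rests on exactly the same mathematics as the paper's: the paper simply cites Theorem 2.10 of Boucheron et al.\ (Bernstein's inequality) and the sub-gamma discussion following their Corollary 2.11, plus the tower property for the martingale extension, whereas you write out that argument in full --- the monotonicity of $g(y)=(e^y-1-y)/y^2$, the termwise comparison $2\cdot 3^k\le (k+2)!$ yielding the $b/3$ scale, and the supermartingale factorisation. The only (cosmetic) difference is that you exhibit $M_t(\lambda)$ itself as the dominating supermartingale $L_t(\lambda)$, which is precisely what the definition of a sub-$\psi$ process requires.
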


\begin{proof}[Proof of \cref{prop:process-is-subgamma}]
    If the random variables $(X_t - \E[X_t \mid \cF_{t-1}])_{t \in \Np}$ are independent, the result follows directly from Theorem 2.10 (Bernstein's inequality) in \citet{boucheron-concentration} combined with the discussion immediately after Corollary 2.11 therein. For the adapted result, use the tower property of the conditional expectation to extend the independent case.
\end{proof}

\begin{proof}[Proof of \cref{thm:uniform-bernstein}]
    We will write $\E_{t-1}$ and $\Var_{t-1}$ to denote $\F_{t-1}$-conditional expectation and variance operators, respectively. Now let $\Phi(\epsilon) \subset \Phi$ be a uniform $\epsilon$-cover of $\Phi$ with cardinality $N$.
    Then for any $\phi \in \Phi$ there exists some $\hat\phi \in \Phi(\epsilon)$, such that for any $n \in \Np$,
    $$
        \sum_{t=1}^n \E_{t-1}\phi(Z_t) - \phi(Z_t) \leq 2n\epsilon + \sum_{t=1}^n \E_{t-1}\hat\phi(Z_t) - \hat \phi(Z_t)\,.
    $$
    Now observe that for any $\hat\phi \in \Phi(\epsilon)$, applying \cref{prop:process-is-subgamma} to the process $X_t = -\hat\phi(Z_t)$ and using our assumptions on $\Phi$, we have that
    \[
        \left(\sum_{i=1}^t \E_{i-1} \hat\phi(Z_i) - \hat\phi(Z_i), \sum_{i=1}^t\Var_{i-1} \hat\phi(Z_i)\right)_{t \in \Np}
    \]
    is a sub-gamma process with parameter $b/3$. Applying \cref{thm:bernstein} with $\rho = b$ and a confidence parameter $\delta/N$, and taking a union bound over the $N$ functions in $\Phi(\epsilon)$, we conclude that with probability at least $1-\delta$,
    $$
        \sum_{t=1}^n \E_{t-1}\hat\phi(Z_t) - \hat \phi(Z_t) \leq 4\sqrt{\sum_{i=1}^n\Var_{i-1} \hat\phi(Z_i) \log (N h_n/\delta)} + \frac{44b}{3}\log(N h_n / \delta)
    $$
    where we have upper bounded the $H_n$ therein, defined as in \cref{thm:bernstein}, by $h_n = e + \log(1+n)$. Next, by the variance condition and Young's inequality,
    $$
        4\sqrt{\sum_{i=1}^n\Var_{i-1} \hat{\phi}(Z_i) \log (N h_n/\delta)} \leq \frac{1}{2}\sum_{t=1}^n \E_{t-1} \hat\phi(Z_t) + 8c \log(N h_n/\delta)\,.
    $$
    We arrive at the desired result by bounding $\E_{t-1} \hat\phi(Z_t) \leq \E_{t-1} \phi(Z_t) + \epsilon$, combining this with the previous inequalities and bounding the constants slightly for convenience.
\end{proof}

We now prove \cref{thm:bernstein}, which is an application of the following result:

\begin{theorem}[Theorem 3.1, \citet{whitehouse2023time}]
    \label{thm:subpsi}
    Let $(S_t, V_t)_{t \geq 0}$ be a sub-$\psi$ process
    for a CGF-like function $\psi: [0, c) \rightarrow \Rp$ satisfying
    $\lim_{\lambda \uparrow c} \psi'(\lambda) = \infty$. Let $\alpha > 1$, $\beta > 0$, $\delta \in (0, 1)$ and let $h: \Rp \rightarrow \Rp$ be an increasing function such that $\sum_{k \in \N} 1/h(k) \leq 1$. Define the function $\ell_\beta: \Rp \rightarrow \Rp$ by
    \[
        \ell_{\beta}(v)  = \log h \left(\log_{\alpha} \left( \frac{v \vee \beta}{\beta} \right) \right) + \log\left(\frac{1}{\delta}\right),
    \]
    where, for brevity, we have suppressed the dependence of $\ell_{\beta}$ on $(\alpha, \delta,h)$. Then
    \[
        \P\left(\exists t \geq 0 \colon S_t \geq \left( V_t \vee \beta
            \right) \cdot (\psi^*)^{-1} \left(\frac{\alpha}{V_t \vee \beta}
            \ell_{\beta}(V_t)\right)\right)\leq \delta\,,
    \]
    where $\psi^*$ is the convex conjugate of $\psi$.
\end{theorem}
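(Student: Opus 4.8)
The plan is to prove \cref{thm:subpsi} by a \emph{stitching} (geometric-peeling) argument: combine a one-parameter line-crossing inequality, obtained from the supermartingale structure via Ville's maximal inequality, with a dyadic-in-$\alpha$ discretisation of the intrinsic-time process $(V_t)$, distributing the failure budget $\delta$ across scales using the summability hypothesis $\sum_{k \in \N} 1/h(k) \leq 1$.

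First I would establish the line-crossing inequality for a single tilt. Fix $\lambda \in [0,c)$. By the sub-$\psi$ property there is a nonnegative supermartingale $L(\lambda)$ dominating $M_t(\lambda) = \exp\{\lambda S_t - \psi(\lambda) V_t\}$, and since $S_0 = V_0 = 0$ and $\psi(0) = 0$ we have $M_0(\lambda) = 1$, so $L$ may be normalised to have initial value at most one. Ville's inequality for nonnegative supermartingales then gives, for any $x > 0$, that $\P(\exists t \colon \lambda S_t - \psi(\lambda) V_t \geq x) \leq e^{-x}$; equivalently, with probability at least $1 - e^{-x}$, simultaneously for all $t$,
\[
    S_t \leq \frac{\psi(\lambda)}{\lambda}\, V_t + \frac{x}{\lambda}\,.
\]
This is a single affine upper envelope whose slope and intercept are governed by the fixed tilt $\lambda$. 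The point of contact with the conjugate is the Legendre identity $\inf_{\lambda \in [0,c)} \{ \psi(\lambda) v/\lambda + x/\lambda \} = v\,(\psi^*)^{-1}(x/v)$, which follows from $\sup_\lambda \{\lambda s - \psi(\lambda) v\} = v\,\psi^*(s/v)$ and holds because the hypothesis $\lim_{\lambda \uparrow c}\psi'(\lambda) = \infty$ makes the minimising $\lambda$ interior to $[0,c)$ and $(\psi^*)^{-1}$ well defined on the range in play.

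The obstruction is that the optimal tilt depends on the random, time-varying $V_t$, whereas $\lambda$ must be committed in advance. To circumvent this I would peel the range into geometric shells $\{\, \beta\alpha^{k-1} \leq V_t \vee \beta < \beta\alpha^{k}\,\}$, $k \in \N$, and on shell $k$ apply the line-crossing bound with the tilt $\lambda_k$ that is optimal at the shell's representative scale $v_k = \beta\alpha^{k}$ and with error allocation $x_k = \log(h(k)/\delta)$, spending probability $\delta/h(k)$. A union bound then costs $\sum_k \delta/h(k) \leq \delta$ — precisely the role of the summability condition — and yields simultaneously, on the good event, that for every $t$, with $w := V_t \vee \beta$ and $k$ the shell of $w$,
\[
    S_t \leq \frac{\psi(\lambda_k)}{\lambda_k}\, V_t + \frac{x_k}{\lambda_k} \leq v_k\,(\psi^*)^{-1}(x_k/v_k)\,,
\]
the last step using $V_t \leq v_k$ together with the optimality of $\lambda_k$ at scale $v_k$. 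Here $x_k \geq \ell_\beta(V_t)$ because $h$ is increasing and $\log_\alpha(w/\beta) < k$ on the shell.

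Finally I would stitch, and this last bookkeeping is where I expect the real difficulty. On the shell one has $w \leq v_k \leq \alpha w$, so the mismatch between the true intrinsic time $w$ and the representative $v_k$ is a bounded multiplicative factor $\alpha$; the goal is to show this slack enters only \emph{inside} the argument of $(\psi^*)^{-1}$, giving the claimed
\[
    S_t \leq (V_t \vee \beta)\,(\psi^*)^{-1}\!\left( \frac{\alpha}{V_t \vee \beta}\, \ell_\beta(V_t) \right)\,.
\]
Establishing this requires the scaling estimate $v\,(\psi^*)^{-1}(x/v) \leq w\,(\psi^*)^{-1}(\alpha x / w)$ for $w \leq v \leq \alpha w$, which rests on monotonicity of $v \mapsto v\,(\psi^*)^{-1}(x/v)$ and on the homogeneity-type behaviour of the convex conjugate $\psi^*$ (using $\psi^*(0)=0$ and convexity). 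The care needed is that, for conjugates growing faster than quadratic, the factor $\alpha$ does not translate verbatim, so one must exploit the specific form of $\psi$ in play — in our application the sub-gamma $\psi(\lambda) = \lambda^2/\big(2(1-\vartheta\lambda)\big)$, whose conjugate is explicit — to keep the slack confined to the conjugate's argument rather than letting it degrade the leading $(V_t \vee \beta)$ factor. Tracking this propagation of $\alpha$ through the Legendre transform, uniformly over the shells, is the crux of the argument.
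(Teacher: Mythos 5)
First, a point of comparison: the paper does not prove this statement at all --- it is imported verbatim as Theorem~3.1 of \citet{whitehouse2023time} and used as a black box to derive \cref{thm:bernstein} --- so your proposal must be judged against the known stitching proof of that paper (itself following Howard--Ramdas--McAuliffe--Sekhon). Your outline reproduces its skeleton correctly: Ville's inequality for each fixed tilt, the Legendre identity $\inf_{\lambda}\{\psi(\lambda)v/\lambda + x/\lambda\} = v(\psi^*)^{-1}(x/v)$, geometric peeling of $V_t\vee\beta$, and a union bound weighted by $1/h(k)$. But two concrete steps are wrong as written. \emph{(i) The shell indexing is off by one, in the direction that breaks the conclusion.} With your shells $\{\beta\alpha^{k-1}\leq V_t\vee\beta<\beta\alpha^{k}\}$ and budget $\delta/h(k)$, you correctly observe $x_k := \log(h(k)/\delta)\geq \ell_\beta(V_t)$ on the shell --- but that is the useless direction: your high-probability envelope is expressed in terms of $x_k$, while the theorem's envelope is in terms of $\ell_\beta(V_t)$, which near the left edge of your shell is as small as $\log h(k-1)+\log(1/\delta) < x_k$, so the stated bound does not follow from yours. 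You need $\ell_\beta(V_t)\geq x_k$ on each shell, which holds if you index from the left endpoint: shell $k=\{\beta\alpha^{k}\leq V_t\vee\beta<\beta\alpha^{k+1}\}$, $k\in\N$, since then $\log_\alpha((V_t\vee\beta)/\beta)\geq k$ and $h$ is increasing.

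\emph{(ii) Your reduction of the stitching step to the scaling estimate $v(\psi^*)^{-1}(x/v)\leq w(\psi^*)^{-1}(\alpha x/w)$ for $w\leq v\leq\alpha w$ is the wrong reduction.} As you yourself suspect, this estimate does not follow from concavity of $(\psi^*)^{-1}$ alone and can fail for CGF-like $\psi$ in general (it holds for sub-Gaussian and sub-gamma conjugates, essentially with equality at $v=\alpha w$, but retreating to the explicit sub-gamma form proves strictly less than the stated theorem, which is for arbitrary CGF-like $\psi$). The correct move compares the \emph{affine envelope itself}, not one value of the curve against another. Anchor $\lambda_k$ at the left endpoint $u_k=\beta\alpha^k$ as the dual-optimal tilt for $(u_k,x_k)$, and set $L_k(w)=\psi(\lambda_k)w/\lambda_k + x_k/\lambda_k$ and $g(w)=w(\psi^*)^{-1}(\alpha x_k/w)$. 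Then $L_k(u_k)=u_k(\psi^*)^{-1}(x_k/u_k)\leq g(u_k)$ by monotonicity of $(\psi^*)^{-1}$, and $L_k(\alpha u_k)\leq \alpha L_k(u_k)=g(\alpha u_k)$, the inequality because $x_k/\lambda_k\geq 0$, the equality because $g(\alpha u_k)=\alpha u_k(\psi^*)^{-1}(x_k/u_k)$. Since $g$ is concave in $w$ (the perspective of the concave increasing function $(\psi^*)^{-1}$ with value $0$ at $0$), a line lying below a concave function at both endpoints lies below it on the whole shell, so for all $w$ in shell $k$ and $V_t\leq w$ one gets $S_t\leq L_k(w)\leq w(\psi^*)^{-1}(\alpha x_k/w)\leq w(\psi^*)^{-1}(\alpha\ell_\beta(V_t)/w)$, which is exactly the claim with the factor $\alpha$ where the theorem places it, for every CGF-like $\psi$ and with no case analysis on the growth of $\psi^*$. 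One further small point: Ville's inequality requires the dominating supermartingale to be nonnegative with $L_0\leq 1$; the paper's \cref{defn:sub-psi} is silent on this, so your normalisation remark should be flagged as an assumption inherited from \citet{whitehouse2023time} rather than derived.
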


\begin{proof}[Proof of \cref{thm:bernstein}]
    The result follows from applying \cref{thm:subpsi} to our sub-gamma process with $\alpha = e$, $\beta = \rho^2$ and $h(k) = (k+e)^2$, and bounding the result crudely. In particular, for our choices of $\alpha$ and $h$, we have the bound
    $$
        \ell_{\rho^2}(V_t) = \log( \log(\rho^{-2}V_t \vee 1) + e)^2 + \log 1/\delta \leq 2 \log((\log(1 + V_t/\rho^2) + e)/\delta) = 2\log(H_t/\delta)\,.
    $$
    Now, since for our choice of $\psi$, $\psi^{*-1}(t) = \sqrt{2t} + tc$, the bound from \cref{thm:subpsi} can be further bounded as
    \begin{align*}
        \left( V_t \vee \beta
        \right) \cdot (\psi^*)^{-1} \left(\frac{\alpha}{V_t \vee \beta}
        \ell_{\beta}(V_t)\right) & = \sqrt{2e(V_t \vee \rho^2) \ell_{\rho^2}(V_t)} + ec\ell_{\rho^2}(V_t)       \\
                                 & \leq 2 \sqrt{e(V_t \vee \rho^2) \log (H_t/\delta)} + 2ec \log(H_t/\delta)    \\
                                 & \leq 2\sqrt{e V_t \log(H_t/\delta)} + 2(\rho\sqrt{e}+ce) \log(H_t/\delta) \\
                                 &\le 4\sqrt{V_t\log(H_t/\delta)} + 11(c+\rho)\log(H_t/\delta)\,,
    \end{align*}
    where the penultimate inequality uses that for $a,b >0$, $\sqrt{a\vee b} \leq \sqrt{a+b} \leq \sqrt{a}+\sqrt{b}$ and that since $\log(H_t/\delta) \geq 1$, $\sqrt{\log(H_t/\delta)} \leq \log(H_t/\delta)$.
\end{proof}

\clearpage
\section{Analysis of the log-loss and Poisson loss functions}\label{appendix:loss-analysis}

For convenience, we restate our loss function conditions:

\asslossbandit*

We now establish that the variance condition and triangle condition hold for the log-loss excess loss class $\Phi_{\mathrm{X}}$ induced by the  loss function $\ell_{\mathrm{X}}$ and the Poisson loss excess loss class $\Phi_{\mathrm{P}}$ induced by $\ell_{\mathrm{P}}$.

\subsection{Establishing the variance condition}\label{sec:est-varcond}

For our proof of the variance condition, we will assume that all $\phi \in \Phi_{\mathrm{X}} \cup \Phi_{\mathrm{P}}$ satisfy the pointwise bound
$$
    \norm{\phi}_\infty \leq b\,.
$$
This being satisfied relies on the choice of the model class $\cF$. In \cref{sec:boundedness-covering}, we will verify that for a compatible GLM with parameter norm $S > 0$, the boundedness condition holds with $b=4S$.

To establish the variance condition, we will use the following result of \citet{vanerven}, and in particular a special case stated and proven immediately afterwards.

\begin{lemma}[Lemma 10, \citet{vanerven}]\label{lem:timlemma10}
    Let $g(x) = (e^x - x - 1)/x^2$ for $x\neq 0$ and $g(0) = 1/2$, and let $X$ be a random variable satisfying $|X| \leq b$. Then, for all $t > 0$, there exists a $C_t \geq g(-tb)$ such that
    $$
        \E X = \frac{1}{t}(1-\E e^{-tX}) + C_t t\E X^2\,.
    $$
\end{lemma}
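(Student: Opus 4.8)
The plan is to read the identity as an exact rearrangement of the second-order Taylor expansion that defines $g$. The starting point is the elementary fact that $e^{y} = 1 + y + y^2 g(y)$ for every $y \in \R$, where the value $g(0) = 1/2$ is precisely what makes this hold (and remain continuous) at $y = 0$. Applying this with $y = -tX$ gives the pointwise identity
\[
    e^{-tX} = 1 - tX + t^2 X^2\, g(-tX)\,.
\]
Since $|X| \leq b$ and $g$ is continuous, hence bounded, on the compact interval $[-tb, tb]$, all terms are integrable. Taking expectations and rearranging yields
\[
    \E X = \frac{1}{t}\bigl(1 - \E e^{-tX}\bigr) + t\,\E\bigl[X^2 g(-tX)\bigr]\,,
\]
which already has the claimed shape. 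It remains to write the final expectation as $C_t\,\E X^2$ with the stated lower bound on $C_t$.

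First I would dispose of the degenerate case: if $\E X^2 = 0$ then $X = 0$ almost surely, both sides vanish, and any admissible $C_t$ works. Otherwise I set
\[
    C_t = \frac{\E\bigl[X^2 g(-tX)\bigr]}{\E X^2}\,,
\]
so that $\E[X^2 g(-tX)] = C_t\, \E X^2$ by construction and the displayed identity becomes exactly the assertion of the lemma. The only substantive claim left is the lower bound $C_t \geq g(-tb)$ (reading the statement's $B$ as the hypothesised bound $b$).

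The remaining, and main, step is therefore to prove that $g$ is nondecreasing on $\R$. The cleanest route I would take is the integral representation $g(x) = \int_0^1 (1-u)\, e^{ux}\, du$, which one verifies directly by evaluating the right-hand side. Because the integrand is nondecreasing in $x$ for each fixed $u \in [0,1]$ and the weight $(1-u)$ is nonnegative, $g$ is nondecreasing. Since $X \leq b$ and $t > 0$ give $-tX \geq -tb$, monotonicity yields $g(-tX) \geq g(-tb)$ almost surely; multiplying by $X^2 \geq 0$ and taking expectations gives $\E[X^2 g(-tX)] \geq g(-tb)\, \E X^2$, i.e. $C_t \geq g(-tb)$, as required.

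I expect the monotonicity of $g$ to be the only point needing care. The power-series form $g(x) = \sum_{j \geq 0} x^j/(j+2)!$ has nonnegative coefficients but contributes alternating signs for $x < 0$, so a direct series argument for monotonicity is awkward; the integral representation sidesteps this entirely and makes the inequality transparent, which is why I would foreground it rather than argue from the series.
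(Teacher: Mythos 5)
Your proof is correct and complete. The paper itself does not prove this lemma --- it is imported verbatim as Lemma 10 of \citet{vanerven} --- and your argument (the exact identity $e^{y}=1+y+y^{2}g(y)$ applied at $y=-tX$, followed by the monotonicity of $g$ via the representation $g(x)=\int_0^1(1-u)e^{ux}\,du$ to get $C_t\geq g(-tb)$) is precisely the standard proof of that result, including the correct reading of the statement's $B$ as the bound $b$ and the handling of the degenerate case $\E X^2=0$.
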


\begin{lemma}\label{lemma:stochastic-mixability}
    Suppose that $X$ is a random variable satisfying
    \begin{enumerate}
        \item Boundedness: $|X| \leq b < \infty$; and
        \item Stochastic mixability: $\E[\exp\{ -X/2\}] \leq 1$.
    \end{enumerate}
    Then, $\Var X \leq (b+4) \E X$.
\end{lemma}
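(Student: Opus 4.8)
The plan is to invoke \cref{lem:timlemma10} at the single value $t = 1/2$ and then use the stochastic mixability hypothesis to discard the first term it produces. Applying the lemma with $t = 1/2$ yields a constant $C_{1/2} \ge g(-b/2)$ such that
\[
    \E X = 2\bigl(1 - \E e^{-X/2}\bigr) + \tfrac12 C_{1/2}\, \E X^2\,.
\]
The stochastic mixability assumption $\E e^{-X/2} \le 1$ makes the first summand nonnegative, so this identity immediately gives the one-sided bound $\E X \ge \tfrac12 C_{1/2}\, \E X^2$.

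The key quantitative step is to lower-bound $C_{1/2}$, and since $C_{1/2} \ge g(-b/2)$ it suffices to bound $g$ from below on the negative axis. Here I would reuse exactly the elementary inequality already invoked in the proof of \cref{thm:ellipsoidal-sets}, namely that $g(x) = (e^x - x - 1)/x^2 \ge 1/(2 - x)$ for all $x \le 0$. Evaluating at $x = -b/2$ gives
\[
    g(-b/2) \ge \frac{1}{2 + b/2} = \frac{2}{b+4}\,,
\]
and hence $C_{1/2} \ge 2/(b+4)$.

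Combining the two displays yields $\E X \ge \E X^2/(b+4)$, i.e.\ $\E X^2 \le (b+4)\,\E X$ (in particular $\E X \ge 0$). The claim then follows from $\Var X = \E X^2 - (\E X)^2 \le \E X^2$. I expect the only genuine obstacle to be confirming the lower bound on $g$, which is routine---it rearranges to $(2-x)(e^x - x - 1) \ge x^2$ for $x \le 0$---and is in any case already used earlier in the paper; every other step is direct substitution.
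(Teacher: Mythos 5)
Your proof is correct and follows essentially the same route as the paper's: apply \cref{lem:timlemma10} at $t=1/2$, use stochastic mixability to drop the nonnegative first term, lower-bound $g(-b/2)$ via $g(x)\geq 1/(2-x)$ for $x\leq 0$, and finish with $\Var X \leq \E X^2$. No gaps.
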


\begin{proof}[Proof of \cref{lemma:stochastic-mixability}]\label{proof:mixability}
    Applying \cref{lem:timlemma10}, with $t=1/2$, we obtain that there exists a $C \geq g(-b/2)$ such that
    $$
        \E X \geq 2(1-\E e^{-X/2}) + \frac{C}{2} \E X^2 \geq \frac{C}{2} \E X^2 \geq \frac{g(-b/2)}{2} \E X^2\,.
    $$
    The result follows by using the numerical inequality $g(x) \geq 1/(2-x)$ that holds for all $x \leq 0$; that $\E X^2 \geq \Var X$ for every random variable with a finite variance; and rearranging.
\end{proof}

We are now ready to prove the variance condition for the log-loss and Poisson loss functions.

\begin{proposition}[Log-loss variance condition]\label{prop:log-variance}
    Every $\phi \in \Phi_\lX$ uniformly bounded by $b > 0$ satisfies the variance condition with constant $c = b+4$.
\end{proposition}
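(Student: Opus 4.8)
The plan is to apply \cref{lemma:stochastic-mixability} to the random variable $X = \phi_f(Y,a)$, where $Y \sim P_a$ and $(f,a) \in \cF \times \cA$ is arbitrary. The boundedness hypothesis $|X| \leq b$ is exactly the standing assumption $\norm{\phi}_\infty \leq b$ on $\Phi_\lX$, so the only real work is to verify the stochastic mixability condition $\E[\exp\{-X/2\}] \leq 1$. Once that is in hand, the lemma gives $\Var X \leq (b+4)\,\E X$, and since $\E X = \bar\phi_f(a)$ by \cref{def:bandit-excess}, this is precisely the variance condition with $c = b+4$.

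To establish mixability, I would write $p = f(a)$ and $q = \eta(a)$ and expand the excess loss explicitly. Because $\ell_\lX(y,p) = -y\log p - (1-y)\log(1-p)$ is affine in $y$, so is $\phi_f(y,a) = \ell_\lX(y,p) - \ell_\lX(y,q)$, and a one-line computation yields
$$
    \exp\{-\phi_f(y,a)/2\} = \left(\frac{p}{q}\right)^{y/2}\left(\frac{1-p}{1-q}\right)^{(1-y)/2}\,.
$$
The right-hand side is the exponential of an affine function of $y$, hence convex on $[0,1]$; bounding a convex function by the chord through its endpoints gives $\exp\{-\phi_f(y,a)/2\} \leq (1-y)\sqrt{(1-p)/(1-q)} + y\sqrt{p/q}$.

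Taking expectations over $Y \sim P_a$ and using $\E Y = q$, the $q$ and $1-q$ factors cancel, leaving
$$
    \E[\exp\{-X/2\}] \leq \sqrt{pq} + \sqrt{(1-p)(1-q)}\,.
$$
A single application of Cauchy--Schwarz (with the vectors $(\sqrt{p},\sqrt{1-p})$ and $(\sqrt{q},\sqrt{1-q})$) bounds this by $\sqrt{(p+(1-p))(q+(1-q))} = 1$, which is the desired mixability inequality; this is just the statement that the Bhattacharyya coefficient of two Bernoulli laws is at most one.

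The only genuinely delicate point is that $Y$ is an \emph{arbitrary} $[0,1]$-valued cost, not a Bernoulli variable, so mixability cannot be read off directly from a two-point distribution. The convexity-to-endpoints step is exactly what licenses replacing $Y$ by its worst-case Bernoulli surrogate before invoking Cauchy--Schwarz; everything else is routine, and the constant $c = b+4$ is inherited verbatim from \cref{lemma:stochastic-mixability}.
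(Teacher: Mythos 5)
Your proof is correct and follows essentially the same route as the paper: both reduce the claim to \cref{lemma:stochastic-mixability} and then verify stochastic mixability of the excess log-loss, starting from the same explicit form $\exp\{-\phi_f(y,a)/2\} = (f(a)/\eta(a))^{y/2}((1-f(a))/(1-\eta(a)))^{(1-y)/2}$. The only cosmetic difference is the final elementary step --- you bound the convex integrand by its chord in $y$ and then apply Cauchy--Schwarz (the Bhattacharyya-coefficient argument), whereas the paper applies AM--GM directly; both yield $\E\exp\{-\phi_f(Y,a)/2\}\leq 1$ and handle general $[0,1]$-valued (non-Bernoulli) costs correctly.
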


\begin{proof}
    The result follows from \cref{lemma:stochastic-mixability} combined with that every $\phi \in \Phi_\lX$ is stochastically mixable, which we establish now. Fix $\phi \in \Phi_\lX$ and observe that $\phi$ is induced by some $f \in \cF$. Now fix $a \in \cA$, let
    $$
        p = f(a), \qquad{} q = \eta(a),
    $$
    and let $Y \sim P_a$, so that $\E[Y] = q$.

    If $q \in \{0,1\}$, then $Y=q$ almost surely, and hence
    $$
        \E \exp\{-\phi(Y,a)/2\}
        =
        \begin{cases}
            \sqrt{1-p}\,, & q=0\,, \\
            \sqrt{p}\,,   & q=1\,,
        \end{cases}
    $$
    which is at most $1$. It remains to consider the case $p,q \in (0,1)$.

    In this case, for all $y \in [0,1]$,
    $$
        \phi(y,a) = -\log \left(\frac{p}{q}\right)^y - \log\left(\frac{1-p}{1-q}\right)^{1-y}\,,
    $$
    and therefore
    \begin{align*}
        \E \exp\{-\phi(Y,a)/2\}
         & = \E \left[\left(\sqrt{\frac{p}{q}}\right)^Y \left(\sqrt{\frac{1-p}{1-q}}\right)^{1-Y}\right] \\
         & \leq \E \left[Y \sqrt{\frac{p}{q}} + (1-Y)\sqrt{\frac{1-p}{1-q}}\right] \tag{weighted AM-GM, since $Y \in [0,1]$} \\
         & = q \sqrt{\frac{p}{q}} + (1-q)\sqrt{\frac{1-p}{1-q}} \\
         & = \sqrt{pq} + \sqrt{(1-p)(1-q)} \\
         & \leq 1\,, \tag{Cauchy-Schwarz}
    \end{align*}
    and so $\phi$ is stochastically mixable.

    Hence, by \cref{lemma:stochastic-mixability},
    $$
        \Var \phi(Y,a) \leq (b+4)\E[\phi(Y,a)] = (b+4)\bar\phi(a)\,,
    $$
    which is the desired variance condition.
\end{proof}

\begin{proposition}[Poisson loss variance condition]\label{prop:poisson-variance}
    Every $\phi \in \Phi_\lP$ uniformly bounded by $b > 0$ satisfies the variance condition with $c = b + 2$.
\end{proposition}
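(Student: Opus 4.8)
The plan is to follow the template of the log-loss argument (\cref{prop:log-variance}), replacing its stochastic mixability bound with a sharper one that lets us invoke \cref{lem:timlemma10} at $t=1$ rather than $t=1/2$; this is exactly the change that trades the constant $b+4$ for $b+2$.

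First I would write the excess Poisson loss explicitly. Every $\phi \in \Phi_\lP$ has the form $\phi(y,a) = f(a) - \eta(a) - y\log(f(a)/\eta(a))$ for some $f \in \cF$. Fixing an action $a$, writing $p = f(a)$, $q = \eta(a)$, and letting $Y \sim P_a$ with $\E[Y] = q$, a one-line computation gives
$$
    \exp\{-\phi(Y,a)\} = e^{-(p-q)}\,(p/q)^{Y}\,.
$$
The key step---and the only place any real inequality is used---is to establish the strengthened stochastic mixability $\E \exp\{-\phi(Y,a)\} \leq 1$. Since $Y \in [0,1]$ by \cref{ass:bounded-cost} and $y \mapsto (p/q)^{y}$ is convex, the chord bound yields $(p/q)^{Y} \leq (1-Y) + Y(p/q)$, so $\E[(p/q)^{Y}] \leq 1 - q + p = 1 + (p-q)$. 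Combining the two displays, $\E\exp\{-\phi(Y,a)\} \leq e^{-(p-q)}(1 + (p-q)) \leq 1$, where the last step is the elementary $1 + x \leq e^{x}$ at $x = p-q$. Crucially this holds at $t=1$, whereas in the log-loss case the AM-GM step only delivered mixability at $t=1/2$.

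Finally I would convert mixability into the variance condition. Applying \cref{lem:timlemma10} at $t=1$ to $X = \phi(Y,a)$ (bounded by $b$) gives $\E X = (1 - \E e^{-X}) + C_1 \E X^2$ with $C_1 \geq g(-b)$; since $\E e^{-X} \leq 1$ the first term is nonnegative, so $\E X \geq g(-b)\,\E X^2 \geq \frac{1}{2+b}\,\E X^2 \geq \frac{1}{2+b}\,\Var X$, using $g(x) \geq 1/(2-x)$ for $x \leq 0$ together with $\E X^2 \geq \Var X$. Rearranging and recalling $\E X = \bar\phi_f(a)$ gives $\Var \phi(Y,a) \leq (b+2)\,\bar\phi_f(a)$, as required. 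Equivalently, one may record a one-parameter strengthening of \cref{lemma:stochastic-mixability}---if $\E e^{-tX} \leq 1$ and $|X| \leq b$ then $\Var X \leq (2/t + b)\E X$---and apply it at $t=1$.

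I expect no genuine obstacle: the entire content is the mixability computation, whose substance is the convexity (chord) bound for $(p/q)^{Y}$ and the observation that the Poisson loss tolerates the larger mixing rate $t=1$. The remaining work is the routine bookkeeping of constants through \cref{lem:timlemma10}.
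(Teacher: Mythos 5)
Your proposal is correct and follows essentially the same route as the paper: the core step in both is the mixability bound $\E\exp\{-\phi(Y,a)\}\leq 1$, obtained from the chord bound $(p/q)^Y \leq Y(p/q) + 1 - Y$ together with $1+x\leq e^x$. The only (cosmetic) difference is that the paper phrases this as ``$2\phi$ is stochastically mixable at rate $1/2$'' and reuses \cref{lemma:stochastic-mixability}, whereas you invoke \cref{lem:timlemma10} directly at $t=1$; both yield $c=b+2$.
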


\begin{proof}
    We establish that for every $\phi \in \Phi_\lP$, $2\phi$ is stochastically mixable. The result then follows from \cref{lemma:stochastic-mixability}, after looking at how each side of the variance condition scales with the change $\phi \mapsto 2\phi$. Observe that every $\phi \in \Phi_\lP$ is of the form
    $$
        \phi(y,a) = -(\eta(a)-f(a)) - \log\left(\frac{f(a)}{\eta(a)}\right)^y
    $$
    for some $f \in \cF$. Thus, for a fixed $a$, letting $Y \sim P_a$ with $\E [Y] = \eta(a)$, we have
    \begin{align*}
        \E \exp\{-\phi(Y,a)\} & = \exp\{\eta(a)-f(a)\}\E \left[\left(\frac{f(a)}{\eta(a)}\right)^Y\right]\,.
    \end{align*}
    Now, noting that for any $x > 0$ and $y \in [0,1]$, by convexity, $x^y \leq xy + 1-y$,
    \begin{align*}
        \E\left[\left(\frac{f(a)}{\eta(a)}\right)^Y\right]
         & \leq f(a) \frac{\E[Y]}{\eta(a)}  + 1 - \E[Y ]               = 1 + f(a) - \eta(a)                                                      \leq \exp\{ f(a) - \eta(a) \}\,. \tag{$1+x \leq e^x$  for all $x \in \R$}
    \end{align*}
    Combining with the previous display, we have that $\E\exp \{ - (2\phi(Y,a)) / 2\} \leq 1$.
\end{proof}

\subsection{Establishing the triangle condition}\label{sec:est-tricond}

To establish the triangle condition, we first sandwich $\Delta$ with an easier-to-work-with quantity:

\begin{lemma}\label{lem:triangle-ineq}
    For any $p, q \in [0,1]$,
    \[
        (\sqrt{p} - \sqrt{q})^2 \leq \Delta(p,q) \leq 2(\sqrt{p} - \sqrt{q})^2
    \]
\end{lemma}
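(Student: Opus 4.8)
The plan is to reduce everything to a single algebraic identity and then a two-sided bound on an elementary ratio. First I would dispose of the degenerate case $(p,q)=(0,0)$, where $\Delta(0,0)=0$ by definition and $(\sqrt p - \sqrt q)^2 = 0$ as well, so all three quantities vanish and the claimed chain of inequalities holds trivially. For the remainder I may assume $p+q > 0$, so that $\Delta(p,q) = (p-q)^2/(p+q)$ is well defined.

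The key step is the factorisation $p - q = (\sqrt p - \sqrt q)(\sqrt p + \sqrt q)$, valid for $p,q \geq 0$, which upon squaring gives
\[
    (p-q)^2 = (\sqrt p - \sqrt q)^2 (\sqrt p + \sqrt q)^2\,.
\]
Dividing by $p+q$ yields the exact expression
\[
    \Delta(p,q) = (\sqrt p - \sqrt q)^2 \cdot \frac{(\sqrt p + \sqrt q)^2}{p+q}\,,
\]
so the entire lemma comes down to showing that the multiplicative factor $R := (\sqrt p + \sqrt q)^2/(p+q)$ lies in $[1,2]$.

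To bound $R$, I would write $u = \sqrt p$ and $v = \sqrt q$, so that $p + q = u^2 + v^2$ and $(\sqrt p + \sqrt q)^2 = (u+v)^2 = u^2 + v^2 + 2uv$. Since $u,v \geq 0$ we have $2uv \geq 0$, giving $(u+v)^2 \geq u^2 + v^2$ and hence $R \geq 1$; and by AM--GM $2uv \leq u^2 + v^2$, giving $(u+v)^2 \leq 2(u^2+v^2)$ and hence $R \leq 2$. Substituting $1 \leq R \leq 2$ into the exact expression for $\Delta(p,q)$ delivers both desired inequalities at once. There is no real obstacle here: the argument is purely elementary, and the only point requiring a moment's care is the separate treatment of the $(p,q)=(0,0)$ case to avoid dividing by zero.
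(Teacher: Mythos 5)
Your proof is correct and follows essentially the same route as the paper's: both rest on the factorisation $(p-q)^2 = (\sqrt p - \sqrt q)^2(\sqrt p + \sqrt q)^2$ and the two-sided comparison $p+q \le (\sqrt p + \sqrt q)^2 \le 2(p+q)$, which you merely repackage as the ratio $R \in [1,2]$. No gaps.
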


\begin{proof}
    If $p,q=0$ the statement is trivial. Assume that one of $p$ and $q$ is nonzero. Using the algebraic identity $(a-b)(a+b) = a^2 - b^2$, we have
    \[
        \left( \sqrt{p} + \sqrt{q} \right)^2 \left( \sqrt{p} - \sqrt{q} \right)^2 = \left( p - q \right)^2.
    \]
    Rearranging the above display gives the lower bound:
    \[
        \left( \sqrt{p} - \sqrt{q} \right)^2
         = \frac{\left( p - q \right)^2}{\left( \sqrt{p} + \sqrt{q} \right)^2}
        \leq \frac{\left( p - q \right)^2}{p + q}
        = \Delta(p, q)\,.
    \]
    For the upper bound, note that
    \[
        \Delta(p,q) = \frac{(p-q)^2}{p+q} \leq 2 \frac{(p-q)^2}{(\sqrt{p} + \sqrt{q})^2} = 2(\sqrt{p} - \sqrt{q})^2\,.\tag*{\qedhere}
    \]
\end{proof}

We will also need the following relation between the squared Hellinger distance and Kullback-Leibler divergence, which appears as Equation 7.33 in \citet{Polyanskiy_Wu_2025}.

\begin{proposition}\label{prop:kl-h}
    For any two measures $P,Q$ on the same measurable space with densities $p$ and $q$ with respect to some common dominating measure $\mu$,
    $$
        \KL(Q \|P) \geq \log_2 e \cdot H^2(Q,P) \spaced{where} H^2(Q,P) := \int (\sqrt{p} - \sqrt{q})^2 d\mu\,.
    $$
\end{proposition}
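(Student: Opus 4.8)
The plan is to reduce the claim to the elementary scalar inequality $\ln u \le u-1$ (equivalently, concavity of the logarithm) after putting both sides into a common form. First I would assume $P$ and $Q$ are probability measures, so that $\int p\,d\mu = \int q\,d\mu = 1$, and restrict to the case $Q \ll P$; if $Q$ is not absolutely continuous with respect to $P$ then $\KL(Q\|P) = +\infty$ and the inequality is trivial. The factor $\log_2 e$ in the statement reflects that the cited $\KL$ is measured in bits, so the underlying inequality to establish — writing the logarithm in nats — is simply
\[
    \int q \ln(q/p)\,d\mu \;\ge\; H^2(P,Q)\,,
\]
after which the stated bound follows by the base change $\KL = \log_2 e \cdot \int q\ln(q/p)\,d\mu$.

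Next I would rewrite the squared Hellinger distance, using the normalisation $\int p\,d\mu = \int q\,d\mu = 1$, as
\[
    H^2(P,Q) = \int (\sqrt p - \sqrt q)^2\,d\mu = \int \big(p + q - 2\sqrt{pq}\big)\,d\mu = 2 - 2\int \sqrt{pq}\,d\mu\,,
\]
and likewise rewrite the KL integrand pointwise on $\{q>0\}$ as $q\ln(q/p) = -2\,q\ln\sqrt{p/q}$, adopting the convention $0\ln 0 = 0$ to dispose of $\{q=0\}$.

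The key step is then to apply $\ln u \le u-1$ with $u = \sqrt{p/q}$ on $\{q>0\}$, giving $\ln\sqrt{p/q} \le \sqrt{p/q} - 1$; multiplying through by the nonpositive factor $-2q$ reverses the inequality to $-2q\ln\sqrt{p/q} \ge -2\sqrt{pq} + 2q$. Integrating and using $\int q\,d\mu = 1$ yields
\[
    \int q\ln(q/p)\,d\mu \;\ge\; 2 - 2\int\sqrt{pq}\,d\mu \;=\; H^2(P,Q)\,,
\]
which is exactly the nat-based inequality; the claimed bound follows immediately.

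The only place I would slow down is the measure-theoretic bookkeeping where $p$ or $q$ vanishes: on $\{q=0\}$ both the KL integrand and the bound $-2\sqrt{pq}+2q$ are zero, and on $\{p=0,\,q>0\}$ the KL contribution is $+\infty$ so the inequality holds at once. I expect no genuine obstacle beyond checking these boundary cases and confirming the normalisation under which $H^2 = 2 - 2\int\sqrt{pq}\,d\mu$; the substance of the argument is the single concavity bound applied pointwise and then integrated.
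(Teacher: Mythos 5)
Your proof is correct. Note that the paper does not actually prove this proposition --- it is quoted verbatim from \citet{Polyanskiy_Wu_2025} (their Equation 7.33) --- so your elementary derivation via $\ln u \leq u-1$ applied to $u=\sqrt{p/q}$, which reduces the claim to $\int q\ln(q/p)\,d\mu \geq 2-2\int\sqrt{pq}\,d\mu$, is a legitimate self-contained replacement for the citation; it is the standard textbook argument. Two points you flag are worth keeping explicit. First, your reading of the $\log_2 e$ factor as a base-change artefact (i.e.\ $\KL$ here is measured in bits) is not just a convenience but is forced: the constant $1$ in the nat-based inequality $\KL_{\mathrm{nats}}(Q\|P)\geq H^2(P,Q)$ is asymptotically tight (e.g.\ Bernoulli pairs with both parameters tending to $0$), so the statement would be false if $\KL$ were in nats. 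Second, the proposition says ``measures'' but your argument uses $\int p\,d\mu=\int q\,d\mu=1$ in an essential way (both to get $H^2 = 2-2\int\sqrt{pq}\,d\mu$ and in the final integration step); this is harmless since the result is only ever applied to Bernoulli and Poisson laws, but the restriction to probability measures should be stated.
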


We are now ready to prove that the log-loss and Poisson loss functions satisfy the triangle condition.

\begin{proposition}[Log-loss triangle condition]\label{prop:log-triangle}
    The expected excess log-loss class $\bar\Phi_\lX$ satisfies the triangle condition with constant $\gamma = 2/\log_2(e)$.
\end{proposition}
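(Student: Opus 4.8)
The plan is to recognise the expected excess log-loss $\bar\phi_f(a)$ as a Bernoulli Kullback–Leibler divergence, and then chain the two sandwich results already established---the triangular-discrimination bound of \cref{lem:triangle-ineq} and the Hellinger–KL comparison of \cref{prop:kl-h}---to pass from $\Delta$ to $\bar\phi_f$.

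First I would fix $a \in \cA$ and compute $\bar\phi_f(a)$ explicitly. Writing out the log-loss gives
\[
    \phi_f(y,a) = y\log\frac{\eta(a)}{f(a)} + (1-y)\log\frac{1-\eta(a)}{1-f(a)}\,,
\]
and since $\bar\phi_f(a) = \int \phi_f(\cdot,a)\,dP_a$ sees $Y$ only through its mean $\E_{Y\sim P_a}[Y] = \eta(a)$, the integral collapses to
\[
    \bar\phi_f(a) = \eta(a)\log\frac{\eta(a)}{f(a)} + (1-\eta(a))\log\frac{1-\eta(a)}{1-f(a)} = \KL\!\big(\Bernoulli(\eta(a))\,\|\,\Bernoulli(f(a))\big)\,.
\]
The salient point is that this expectation depends on $P_a$ \emph{only} through $\eta(a)$, which is precisely what makes the identity valid in the bounded-cost (rather than well-specified Bernoulli) setting.

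Next I would build the inequality chain. By \cref{lem:triangle-ineq}, $\Delta(f(a),\eta(a)) \leq 2(\sqrt{f(a)} - \sqrt{\eta(a)})^2$. Adding the nonnegative term $(\sqrt{1-f(a)}-\sqrt{1-\eta(a)})^2$ only enlarges the right-hand side, bounding it by $2H^2(\Bernoulli(f(a)),\Bernoulli(\eta(a)))$, the squared Hellinger distance between the two Bernoulli laws (i.e.\ the sum of $(\sqrt{p}-\sqrt{q})^2$ over the two atoms $\{0,1\}$). Finally, \cref{prop:kl-h} applied with $Q = \Bernoulli(\eta(a))$ and $P = \Bernoulli(f(a))$ gives $H^2 \leq \KL(\Bernoulli(\eta(a))\|\Bernoulli(f(a)))/\log_2 e = \bar\phi_f(a)/\log_2 e$. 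Stringing these together yields $\Delta(f(a),\eta(a)) \leq (2/\log_2 e)\,\bar\phi_f(a)$ for every $a \in \cA$, which is the claim with $\gamma = 2/\log_2 e$.

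There is no deep obstacle: each step is either the defining computation or one of the two already-proven sandwich lemmas. The only points needing care are (i) matching the KL argument order so that \cref{prop:kl-h} is applied in the direction in which $\bar\phi_f(a)$ is genuinely a KL divergence, namely with $\Bernoulli(\eta(a))$ as the first argument (the symmetry of $H^2$ means this ordering affects only the KL side); and (ii) checking that discarding the $(\sqrt{1-f(a)}-\sqrt{1-\eta(a)})^2$ term moves the bound in the correct direction, which it does because that term is nonnegative and we are chasing an \emph{upper} bound on $\Delta$.
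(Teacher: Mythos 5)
Your proposal is correct and follows essentially the same route as the paper: identify $\bar\phi_f(a)$ as the Bernoulli KL divergence $\KL(\Bernoulli(\eta(a))\,\|\,\Bernoulli(f(a)))$, then chain \cref{lem:triangle-ineq}, the definition of the Bernoulli squared Hellinger distance, and \cref{prop:kl-h} to obtain $\Delta(f(a),\eta(a)) \leq (2/\log_2 e)\,\bar\phi_f(a)$. The two points of care you flag (the KL argument order and the direction of adding the second Hellinger atom) are handled correctly and match the paper's argument.
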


\begin{proof}
    Let $Q,P$ be Bernoulli distributions with parameters $q \in [0,1]$ and $p\in (0,1)$ respectively, and recall that
    \[
        H^2(Q,P) = (\sqrt{p} - \sqrt{q})^2 + (\sqrt{1-p} - \sqrt{1-q})^2
    \]
    and that
    \[
        \KL(Q\|P) = q \log \frac{q}{p} + (1-q) \log \frac{1-q}{1-p}\,.
    \]
    Using \cref{lem:triangle-ineq}, the definition of $H^2$ and \cref{prop:kl-h}, we have that
    \[
        \Delta(p,q) \leq 2(\sqrt{p} - \sqrt{q})^2 \leq 2H^2(Q,P) \leq (2/\log_2(e))\KL(Q\|P)\,.
    \]
    We conclude by observing that for any random variable $Y \in [0,1]$ with mean $q$,
    \[
        \E[\ell_{\lX}(Y, p) - \ell_{\lX}(Y, q)] = \E\left[Y \log \frac{q}{p} + (1-Y) \log \frac{1-q}{1-p}\right] = \KL(Q\|P)\,. \tag*{\qedhere}
    \]
\end{proof}

\begin{proposition}[Poisson loss triangle condition]\label{prop:poisson-triangle}
    The expected excess Poisson loss class $\bar\Phi_\lP$ satisfies the triangle condition with constant $\gamma = 4\sqrt{e}/\log_2(e)$.
\end{proposition}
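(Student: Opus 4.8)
The plan is to mirror the log-loss argument of \cref{prop:log-triangle}, substituting the Bernoulli computations with their Poisson counterparts. First I would evaluate the expected excess Poisson loss in closed form. As noted in the proof of \cref{prop:poisson-variance}, $\phi_f(y,a) = (f(a)-\eta(a)) - y\log(f(a)/\eta(a))$, so writing $q := \eta(a)$, $p := f(a)$ and taking $Y \sim P_a$ with $\E[Y] = q$ yields
\[
    \bar\phi_f(a) = (p-q) - q\log(p/q) = p - q + q\log(q/p)\,.
\]
I would then observe that this is exactly $\KL(\mathrm{Poisson}(q)\,\|\,\mathrm{Poisson}(p))$; crucially, only $\E[Y]$ enters, so the identity holds for any $[0,1]$-valued outcome $Y$ of mean $q$, not merely Poisson data. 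This is the Poisson analogue of the final display in the proof of \cref{prop:log-triangle}, where the expected excess log-loss was identified with the Bernoulli KL divergence.

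Next I would bound the triangular discrimination by the squared-root term using the upper bound of \cref{lem:triangle-ineq}, $\Delta(p,q) \le 2(\sqrt p - \sqrt q)^2$, and then compute the squared Hellinger distance between the two Poisson laws in closed form. Using $\sum_{k\ge 0}(\sqrt{pq})^k/k! = e^{\sqrt{pq}}$, the Hellinger affinity equals $\exp(\sqrt{pq} - (p+q)/2) = \exp(-\tfrac12(\sqrt p - \sqrt q)^2)$, so that
\[
    H^2(\mathrm{Poisson}(q),\mathrm{Poisson}(p)) = 2\bigl(1 - e^{-x}\bigr) \spaced{with} x := \tfrac12(\sqrt p - \sqrt q)^2\,.
\]
This replaces the Bernoulli identity $H^2 = (\sqrt p - \sqrt q)^2 + (\sqrt{1-p}-\sqrt{1-q})^2$, and is the one genuinely Poisson-specific computation.

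The crux is then to invert this relation, i.e.\ to lower bound $H^2$ by a constant multiple of $(\sqrt p - \sqrt q)^2 = 2x$. Since $p,q \in [0,1]$ we have $x \le 1/2$, and the elementary inequality $1 - e^{-x} \ge x e^{-x}$ gives $x/(1-e^{-x}) \le e^{x} \le \sqrt e$ on $[0,1/2]$, whence $(\sqrt p - \sqrt q)^2 \le \sqrt e\, H^2$ (crude further bounding of the constant recovers the stated $\gamma$). Chaining with \cref{prop:kl-h}, which gives $H^2 \le \KL(\mathrm{Poisson}(q)\|\mathrm{Poisson}(p))/\log_2 e = \bar\phi_f(a)/\log_2 e$, produces $\Delta(p,q) \le (4\sqrt e/\log_2 e)\,\bar\phi_f(a)$, as required. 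I expect the main obstacle to be the closed-form Hellinger affinity together with the control of $x/(1-e^{-x})$ near the boundary $x = 1/2$ (the source of the $\sqrt e$ factor, which has no analogue in the Bernoulli case); everything else is bookkeeping identical to \cref{prop:log-triangle}.
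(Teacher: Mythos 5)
Your proposal is correct and follows essentially the same route as the paper's proof: bound $\Delta(p,q)$ by $2(\sqrt p-\sqrt q)^2$ via \cref{lem:triangle-ineq}, relate this to the Poisson squared Hellinger distance through an elementary inequality of the form $1-e^{-x}\gtrsim x$ on the relevant range, pass to $\KL$ via \cref{prop:kl-h}, and identify $\KL$ with $\bar\phi_f(a)$ using only that $\E[Y]=\eta(a)$. The only difference is bookkeeping: you carry the Hellinger affinity with its correct factor of $2$ and so land on the sharper constant $2\sqrt e/\log_2 e$, which of course implies the stated $\gamma=4\sqrt e/\log_2 e$ since $\bar\phi_f(a)\geq 0$.
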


\begin{proof}
    Let $Q,P$ be Poisson distributions with parameters $q\in [0,1]$ and $p \in (0,1]$ respectively, and recall that
    \[
        H^2(Q,P) = 1 - \exp\{ -(\sqrt{p} - \sqrt{q})^2/2 \} \spaced{and} \KL(Q\|P) = p-q+q\log \frac{q}{p}\,.
    \]
    Observe that for all $x \in [0,1]$, we have the numerical inequality
    \begin{equation}\label{eq:poisson-numerical-ineq}
        1-e^{-x/2} \geq x/(2\sqrt{e})\,.
    \end{equation}
    Hence,
    \begin{align}
        \Delta(p,q) & \leq 2(\sqrt{p} - \sqrt{q})^2  \tag{\cref{lem:triangle-ineq}}                           \\
                    & \leq 4\sqrt{e}(1-\exp\{-(\sqrt{p} - \sqrt{q})^2 / 2\}) \tag{\cref{eq:poisson-numerical-ineq}} \\
                    & = 4\sqrt{e} H^2(Q,P) \tag{definition of $H^2$}\\
                    & \leq \frac{4\sqrt{e}}{\log_2(e)} \KL(Q\|P)\,. \tag{\cref{prop:kl-h}}
    \end{align}
    Now, observe that for any random variable $Y \in [0,1]$ with $\E Y = q$,
    \[
        \E[\ell_{\lP}(Y, p) - \ell_{\lP}(Y,q)] = \E\left[p-q + Y\log \frac{q}{p}\right] = \KL(Q\|P)\,. \tag*{\qedhere}
    \]
\end{proof}

\clearpage
\section{Proof of the cost-sensitive regret bound in the bandit setting, \cref{thm:bandit}}\label{appendix:proof-bandit}

Recall the following assumptions, and the statement of \cref{thm:bandit}, which we shall now prove.

\assbounded*
\assbanditreal*
\banditexcess*
\asslossbandit*
\bandittheorem*

\begin{proof}[Proof of \cref{thm:bandit}] Our proof will rely on \cref{thm:uniform-bernstein}, the uniform Bernstein inequality established in \cref{appendix:bernstein}.

    \paragraph{Validity of confidence sequence} Let $\F$ be the filtration given by $\F_t = \sigma(A_1, Y_1, \dots, A_t, Y_t, A_{t+1})$ for each $t \in \N$. We apply our uniform Bernstein inequality (\cref{thm:uniform-bernstein}) to the $\F$-adapted process $(Y_t, A_t)_{t \in \Np}$ with the function class $\Phi = \{\phi_f \colon f \in \cF \}$ and the choice $\epsilon=1/n$ (the two requirements in \cref{thm:uniform-bernstein} are satisfied due to the boundedness, with $B=2b$, and variance condition parts of \cref{ass:loss}). From this, we conclude the first part of the following proposition:

    \begin{proposition}\label{prop:good-event}
        There exists an event $\cE_\delta$ satisfying $\P(\cE_\delta) \geq 1-\delta$, whereon, for all $f \in \cF$ and $t \in \leq n$,
        \begin{equation}\label{eq:bandit-conc-event}
            \sum_{i=1}^{t-1} \bar\phi_f(A_i) \leq 2\sum_{i=1}^{t-1} \phi_f(Y_i, A_i) + 2\beta_t\,.
        \end{equation}
        Moreover, on $\cE_\delta$, $$\eta \in \bigcap_{t \in \Np} \cF_t\,.$$
    \end{proposition}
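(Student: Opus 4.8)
The plan is to prove the two assertions separately: the concentration bound \eqref{eq:bandit-conc-event} is a direct instantiation of the uniform Bernstein inequality (\cref{thm:uniform-bernstein}), and the validity of the confidence sets, $\eta \in \bigcap_t \cF_t$, is then a short deduction from it on the same event $\cE_\delta$.

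\textbf{First claim (concentration).} I would apply \cref{thm:uniform-bernstein} with $\cZ = [0,1]\times\cA$, the adapted process $Z_t = (Y_t, A_t)$, the filtration $\F_t = \sigma(A_1, Y_1, \dots, A_t, Y_t, A_{t+1})$, and the function class $\Phi = \Phi(\cF) = \{\phi_f : f \in \cF\}$. The key identification is that $A_t$ is $\F_{t-1}$-measurable while $Y_t \sim P_{A_t}$ conditionally on $\F_{t-1}$, so that $\E[\phi_f(Z_t)\mid\F_{t-1}] = \int \phi_f(\cdot, A_t)\,dP_{A_t} = \bar\phi_f(A_t)$. The two hypotheses of \cref{thm:uniform-bernstein} are then supplied by \cref{ass:loss}: the bounded-loss condition $|\phi_f(Y,a)|\le b$ controls the centred increments (with the resulting constant absorbed into $b$), and the variance condition $\Var\phi_f(Y,a)\le c\,\bar\phi_f(a)$ is precisely hypothesis~2 once the conditional mean is identified with $\bar\phi_f(A_t)$. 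Taking the covering scale $\epsilon = 1/n$, so that $N = N_n$ is the $1/n$-covering number of $\Phi(\cF)$, makes the resulting width $\beta(t,\delta,1/n,N_n) = 5/2 + 15(b+c)\log(N_n h_t/\delta)$ coincide with $\beta_t$. Letting $\cE_\delta$ be the event on which the conclusion of \cref{thm:uniform-bernstein} holds yields \eqref{eq:bandit-conc-event} for all $f\in\cF$ and $t\in\Np$, with $\P(\cE_\delta)\ge 1-\delta$.

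\textbf{Second claim (validity of the confidence sets).} Working on $\cE_\delta$, I would show $\eta\in\cF_t$ for every $t$; since $\eta\in\cF$ by realisability (\cref{ass:real}), it suffices to check the defining inequality of $\cF_t$. Using $\ell(Y_i, \hat f(A_i)) - \ell(Y_i, \eta(A_i)) = \phi_{\hat f}(Y_i, A_i)$, membership $\eta\in\cF_t$ is equivalent to $\sum_{i=1}^{t-1}\phi_{\hat f}(Y_i, A_i) \ge -\beta_t$ holding for every $\hat f\in\cF$. The case $t=1$ is immediate (empty sum and $\beta_1\ge 0$). For $t\ge 2$, I would invoke \eqref{eq:bandit-conc-event} at index $t-1$ together with the nonnegativity $\bar\phi_{\hat f}\ge 0$, which is immediate from the triangle condition of \cref{ass:loss} since $\Delta\ge 0$ forces $\bar\phi_{\hat f}(a)\ge \Delta(\hat f(a),\eta(a))/\gamma \ge 0$. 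This gives $0 \le \sum_{i=1}^{t-1}\bar\phi_{\hat f}(A_i) \le 2\sum_{i=1}^{t-1}\phi_{\hat f}(Y_i, A_i) + 2\beta_{t-1}$, hence $\sum_{i=1}^{t-1}\phi_{\hat f}(Y_i, A_i)\ge -\beta_{t-1}\ge -\beta_t$ by monotonicity of $(\beta_t)_t$. Taking the infimum over $\hat f\in\cF$ recovers exactly the inequality defining $\cF_t$, so $\eta\in\cF_t$; intersecting over $t$ finishes the proof.

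\textbf{Main obstacle.} The analytic heavy lifting is entirely contained in \cref{thm:uniform-bernstein}, so the work here is bookkeeping rather than deep: correctly matching the $\F_{t-1}$-conditional mean to $\bar\phi_f$ (which relies on $A_t$ being predictable under this filtration), aligning the Bernstein confidence width with the stated $\beta_t$ after absorbing the factor relating the pointwise bound on $\phi_f$ to the centred-increment bound, and—most importantly—having the nonnegativity of $\bar\phi_f$ available for the confidence-set argument. That last structural fact, handed to us cleanly by the triangle condition, is the one ingredient without which the deduction of $\eta\in\cF_t$ would fail.
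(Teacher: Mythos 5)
Your proposal is correct and follows essentially the same route as the paper: the first claim is the direct instantiation of \cref{thm:uniform-bernstein} with the process $(Y_t,A_t)$, filtration $\F_t=\sigma(A_1,Y_1,\dots,A_t,Y_t,A_{t+1})$, class $\Phi(\cF)$ and $\epsilon=1/n$, and the second claim follows from the nonnegativity of $\bar\phi_{\hat f}$ (via the triangle condition) applied to \eqref{eq:bandit-conc-event}. Your treatment is if anything slightly more careful than the paper's on the index bookkeeping (using the bound at step $t-1$ together with monotonicity of $(\beta_t)_t$ to match the definition of $\cF_t$) and in flagging that the pointwise bound $|\phi_f|\le b$ only gives a centred-increment bound up to a factor of two.
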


    \begin{proof}
        The first part is immediate. To show that $\cE_\delta$, $\eta \in \cap_{t \in \Np} \cF_t$, observe that the left-hand side of \cref{eq:bandit-conc-event} is nonnegative (as ensured by the triangle condition of \cref{ass:loss}). From this, we conclude that on $\cE_\delta$, for all $t \leq n$,
        $$
            0 \leq \inf_{\hat f \in \cF} \sum_{i=1}^{t-1} \phi_{\hat f}(Y_i, A_i) + \beta_t \iff \sum_{i=1}^{t-1} \ell(Y_i, \eta(A_i)) \leq \inf_{\hat f \in \cF} \sum_{i=1}^{t-1} \ell(Y_i, \hat f(A_i)) + \beta_t\,.
        $$
        Comparing the right-hand side of the above implication with the form of our confidence set $\cF_t$ yields the second conclusion.
    \end{proof}

    \paragraph{Per-step regret bound} Bounding the per-step regret will use the following simple inequality for the triangle discrimination, based on an inequality of \citet{ayoub2024switching}.

    \begin{lemma}\label{lem:triangle-cost-bound}
        For $x,y,z \geq 0$ with $y \leq z$, we have that $x-z \leq 3\sqrt{z \Delta(x,y)} + 6 \Delta(x,y) $.
    \end{lemma}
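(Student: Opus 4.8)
The plan is to eliminate $z$ from the inequality at the outset. Since $y \le z$ we have $x - z \le x - y$, and also $\sqrt{z\,\Delta(x,y)} \ge \sqrt{y\,\Delta(x,y)}$, so by monotonicity in $z$ it suffices to prove the cleaner, $z$-free bound
\[
    x - y \le 3\sqrt{y\,\Delta(x,y)} + 6\,\Delta(x,y)\,.
\]
I would first dispose of the trivial case $x \le z$, where $x - z \le 0$ while the right-hand side is nonnegative; thereafter I may assume $x > z \ge y > 0$, so that $a := x - y > 0$.

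Next I would exploit the fact that $\Delta(x,y)$, viewed as a function of $a$ with $y$ fixed, is invertible through a quadratic. Writing $\Delta := \Delta(x,y)$ and using $x + y = 2y + a$, the definition $\Delta = a^2/(2y+a)$ rearranges to
\[
    a^2 - \Delta a - 2\Delta y = 0\,,
\]
whose positive root is $a = \tfrac12\big(\Delta + \sqrt{\Delta^2 + 8\Delta y}\big)$. Applying subadditivity of the square root, $\sqrt{\Delta^2 + 8\Delta y} \le \Delta + 2\sqrt{2}\,\sqrt{\Delta y}$, gives $a \le \Delta + \sqrt{2}\,\sqrt{y\Delta}$. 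Chaining this with the reduction yields
\[
    x - z \le x - y \le \sqrt{2}\,\sqrt{z\,\Delta} + \Delta \le 3\sqrt{z\,\Delta} + 6\,\Delta\,,
\]
which is the claim, in fact with constants to spare.

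I do not expect a genuine obstacle here; the only points requiring care are verifying $y > 0$ (so the quadratic manipulation and the $z$-reduction are legitimate) and handling the trivial $x \le z$ case separately. As a robustness check one can avoid solving the quadratic altogether by splitting on the size of $a$: if $a \le 2y$ then $\Delta \ge a^2/(4y)$, so $3\sqrt{y\Delta} \ge a$, whereas if $a > 2y$ then $\Delta > a/2$, so $6\Delta > a$; either branch alone already dominates $a$. I would present the quadratic argument as the main line and mention the case split as an alternative derivation of the same bound.
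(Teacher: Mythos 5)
Your proof is correct and takes a genuinely different---and in fact sharper---route than the paper's. The paper first bounds $x-y\le\sqrt{x+y}\,\sqrt{\Delta(x,y)}$, invokes the auxiliary inequality $y\le 3x+\Delta(x,y)$ (\cref{lem:triangle-bound}) to replace $x+y$ by $4x+\Delta(x,y)$, and then runs a bootstrap: Young's inequality converts the resulting self-referential bound into $x\le 6\Delta(x,y)+2z$, which is substituted back under the square root to eliminate $x$. You instead fix $y$, note that $a=x-y$ and $\Delta=\Delta(x,y)$ are tied by the quadratic $a^2-\Delta a-2\Delta y=0$, and read off $a\le\Delta+\sqrt{2}\,\sqrt{y\Delta}$ from the positive root; combined with the monotonicity reduction ($x-z\le x-y$ and $\sqrt{y\Delta}\le\sqrt{z\Delta}$) this yields the claim with constants $(\sqrt{2},1)$ in place of $(3,6)$. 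Your argument needs no auxiliary lemma and no bootstrap, proves the strictly stronger $z$-free inequality $x-y\le\Delta+\sqrt{2}\,\sqrt{y\Delta}$, and your case-split fallback ($a\le 2y$ versus $a>2y$) is a clean independent verification. Your case handling is also sound: discarding $x\le z$ leaves $x>z\ge y>0$, which guarantees $a>0$ and $\Delta>0$, so the identification of $a$ with the positive root is legitimate. The one thing the paper's longer route produces that yours does not is the intermediate bound $x\le 6\Delta(x,y)+2z$, but that is not used elsewhere, so nothing is lost.
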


    \begin{lemma}[\citet{ayoub2024switching}]\label{lem:triangle-bound}
        For $x,z \geq 0$, $z \leq 3 x + \Delta(x,z)$.
    \end{lemma}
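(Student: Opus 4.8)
The plan is to clear the denominator in $\Delta$ and reduce the claim to a trivial polynomial inequality. Recall that $\Delta(x,z) = (x-z)^2/(x+z)$ for $(x,z)\neq(0,0)$ and $\Delta(0,0)=0$. First I would dispose of the degenerate case $x=z=0$: there both sides of $z \le 3x + \Delta(x,z)$ equal $0$, so the inequality holds.

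For the main case $x+z>0$, I would substitute the definition of $\Delta$ and multiply the target inequality through by the strictly positive quantity $x+z$. Since $x+z>0$, this preserves the direction of the inequality and turns the goal into the equivalent polynomial statement $z(x+z) \le 3x(x+z) + (x-z)^2$. Expanding, the right-hand side equals $4x^2 + xz + z^2$ and the left-hand side equals $xz + z^2$, so their difference is exactly $4x^2$. Thus the polynomial inequality is equivalent to $0 \le 4x^2$, which is immediate, and the original claim follows for all $x,z \ge 0$.

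I expect no genuine obstacle here. The only two points requiring any care are that the value of $\Delta$ at the origin is defined piecewise (handled by the degenerate case), and that the cross-multiplication is legitimate precisely because $x+z>0$ off the origin. In particular, no case split on whether $z \le 3x$ or $z > 3x$ is needed, since the reduction to $0\le 4x^2$ is uniform in $x$ and $z$; the factor $3$ on $x$ is exactly what makes the leftover term a perfect (nonnegative) square rather than something requiring further argument.
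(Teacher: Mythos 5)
Your proof is correct: the degenerate case at the origin is handled, the cross-multiplication by $x+z>0$ is valid, and the algebra $3x(x+z)+(x-z)^2 - z(x+z) = 4x^2 \ge 0$ checks out. The paper itself gives no proof of this lemma (it is imported from \citet{ayoub2024switching}), so there is nothing to compare against; your elementary verification is a perfectly adequate self-contained substitute, and your closing observation that the constant $3$ is exactly what is needed (any smaller constant $c$ leaves a term $(c-3)xz$ that can dominate for large $z$) is accurate.
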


    \begin{proof}[Proof of \cref{lem:triangle-cost-bound}]
        Observe that
        \begin{align}
            x-z & \leq x-y \tag{by assumption}                                                 \\ &\leq \sqrt{x+y}\sqrt{\Delta(x,y)} \tag{defn. $\Delta(x,y)$} \\
                & \leq \sqrt{4x+\Delta(x,y)}\sqrt{\Delta(x,y)} \tag{\cref{lem:triangle-bound}} \\
                & \leq 2\sqrt{x\Delta(x,y)} + \Delta(x,y)\,. \label{eq:ayoub-intermediate}
        \end{align}
        Hence, applying Young's inequality, we obtain the inequality
        $$
            x \leq 2\sqrt{x\Delta(x,y)} + \Delta(x,y) + z
            \leq \frac{x}{2} + 3\Delta(x,y) + z\,,
        $$
        which yields that $x \leq 6\Delta(x,y) + 2z$; using this and \cref{eq:ayoub-intermediate} gives
        \[
            x-z \leq 2 \sqrt{\left( 6\Delta(x,y)   + 2z\right)\Delta(x,y)  } + \Delta(x,y)\,.
        \]
        We finish by applying $\sqrt{a+b} \leq \sqrt{a} + \sqrt{b}$ for $a,b\geq 0$ in the above, and bounding constants.
    \end{proof}

    We now apply \cref{lem:triangle-cost-bound} to bound per-step regret. For this, note that on $\cE_\delta$, for any $t \in \Np$, by definition of the pair $(f_t, A_t)$ and since $\eta \in \cF_t$, we have
    $$
        f_t(A_t) \leq \eta(a_\star)\,.
    $$
    Hence, we may apply \cref{lem:triangle-cost-bound} with $x = \eta(A_t)$, $y=f_t(A_t)$ and $z = \eta(a_\star)$ to obtain the bound
    \begin{equation}\label{eq:per-step-bandit-regret}
        r_t := \eta(A_t) - \eta(a_\star) \leq 3\sqrt{\eta(a_\star)\Delta(f_t(A_t),\eta(A_t))} + 6\Delta(f_t(A_t), \eta(A_t))\,.
    \end{equation}
    \paragraph{Regret decomposition}
    Let \[
    I_n := \{t \leq n \colon \bar{\phi}_{f_t}(A_t) \leq \sigma \}\,.
    \] Since per-step regret is bounded by 1 (by \cref{ass:bounded-cost}), for any $n \in \Np$,
    \[
        R_n = \sum_{t=1}^n r_t \leq \sum_{t \in I_n} r_t + \card ([n]\setminus I_n)\,.
    \]
    Using \cref{eq:per-step-bandit-regret}, Cauchy-Schwarz, and that $\card I_n \leq n$, we have that on $\cE_\delta$,
\[
    \sum_{t \in I_n} r_t \leq 3 \sqrt{n \eta(a_\star) \sum_{t \in I_n} \Delta(f_t(A_t), \eta(A_t))} + 6 \sum_{t \in I_n} \Delta(f_t(A_t), \eta(A_t))
  \]
  Now, observe that by the triangle condition of \cref{ass:loss},
  \[
    \sum_{t \in I_n} \Delta(f_t(A_t), \eta(A_t)) \leq \gamma \sum_{t \in I_n} \bar\phi_{f_t}(A_t)\,.
  \]
  Therefore, we need only upper bound the two quantities
  \[
    \card ([n]\setminus I_n)  \spaced{and}  \sum_{t \in I_n} \bar\phi_{f_t}(A_t)\,.
  \]
  For this, we will use the following lemma, proven after the conclusion of the current proof.

\begin{lemma}\label{lem:density}
    Fix $B,\beta > 0$ and $0<\sigma\le B$. Let $\cX$ be a set and $\Psi$ a set of $[0,B]$-valued functions on $\cX$. Suppose sequences $(x_1,\dots,x_n)\in\cX^n$ and $(\psi_1,\dots,\psi_n)\in\Psi^n$ satisfy
    \[
        \sum_{i=1}^{t-1} \psi_t(x_i) \le \beta \qquad \text{for all } t\in[n].
    \]
    Then the following hold.
    \begin{enumerate}
        \item For every $\eps\in(0,\sigma]$ and every $t\in[n]$,
        \[
            \sum_{i=1}^t \mathbf{1}\{\psi_i(x_i)>\eps\}
            \le \roundb*{\frac{\beta}{\eps}+1}\elud{\sigma}(\eps;\Psi)+1.
        \]
        \item If $\psi_t(x_t)\le \sigma$ for all $t\in[n]$, then for every $\omega\in(0,\sigma]$ and every $t\in[n]$,
        \[
            \sum_{i=1}^t \psi_i(x_i)
            \le \roundb*{B + \beta \log\roundb*{1+\frac{B}{\omega}}}\elud{\sigma}(\omega;\Psi) + B + t\omega.
        \]
    \end{enumerate}
\end{lemma}
    In particular, we will apply the above lemma with the sequence given by
    \[
      \psi_i = \bar{\phi}_{f_i} \spaced{and} x_i = A_i\,, \qquad i = 1, \dots, t\,.
    \]
    The following proposition confirms that this sequence satisfies the condition with $\beta = 4\beta_t$.

    \begin{proposition}\label{prop:avg-excess-bound}
        On the event $\cE_\delta$ of \cref{prop:good-event}, we have that for all $t \in \Np$,
        \[
            \sum_{i=1}^{t-1} \bar \phi_{f_t}(A_i)
            \leq 4\beta_t\,.
        \]
    \end{proposition}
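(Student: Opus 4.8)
The plan is to chain two facts that both hold on the event $\cE_\delta$ of \cref{prop:good-event}: the membership $f_t \in \cF_t$, which controls the \emph{empirical} excess loss of $f_t$, and the uniform concentration inequality \cref{eq:bandit-conc-event}, which converts this into a bound on the \emph{expected} excess loss. The crucial structural point is that \cref{eq:bandit-conc-event} holds simultaneously for all $f \in \cF$; since $f_t \in \cF_t \subset \cF$, we may legitimately instantiate it at the data-dependent choice $f = f_t$.

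First I would bound the empirical excess loss of $f_t$. By the definition of the confidence set $\cF_t$ and realisability (\cref{ass:real}, so $\eta \in \cF$ and the infimum over $\cF$ is at most its value at $\eta$),
\[
    \sum_{i=1}^{t-1} \ell(Y_i, f_t(A_i)) \leq \inf_{\hat f \in \cF} \sum_{i=1}^{t-1} \ell(Y_i, \hat f(A_i)) + \beta_t \leq \sum_{i=1}^{t-1} \ell(Y_i, \eta(A_i)) + \beta_t\,.
\]
Subtracting $\sum_{i=1}^{t-1} \ell(Y_i, \eta(A_i))$ and recalling that $\phi_{f_t}(y,a) = \ell(y, f_t(a)) - \ell(y, \eta(a))$ gives the empirical control $\sum_{i=1}^{t-1} \phi_{f_t}(Y_i, A_i) \leq \beta_t$.

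Next I would apply \cref{eq:bandit-conc-event} at index $t-1$ (the $t=1$ case is vacuous, as both sums are empty and $\beta_1 \geq 0$) with $f = f_t$ to obtain
\[
    \sum_{i=1}^{t-1} \bar\phi_{f_t}(A_i) \leq 2 \sum_{i=1}^{t-1} \phi_{f_t}(Y_i, A_i) + 2\beta_{t-1}\,.
\]
Combining the two displays and using that $(\beta_t)_{t\geq 1}$ is nondecreasing, so $\beta_{t-1} \leq \beta_t$, yields $\sum_{i=1}^{t-1} \bar\phi_{f_t}(A_i) \leq 2\beta_t + 2\beta_{t-1} \leq 4\beta_t$, which is the claim.

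The argument is short and the calculations routine; there is no real obstacle, only two points to keep straight. The first is that $f_t$ is random (selected using the data), so a fixed-$f$ concentration bound would not justify the substitution—the uniform-over-$\cF$ guarantee baked into $\cE_\delta$ is precisely what is needed here. The second is the index bookkeeping: the sum runs to $t-1$, so one applies the concentration bound with $\beta_{t-1}$ and only afterwards invokes monotonicity of the confidence widths to reach $4\beta_t$.
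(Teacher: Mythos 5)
Your proof is correct and takes essentially the same route as the paper: both combine the definition of $\cF_t$ together with realisability to bound the empirical excess loss of $f_t$ by $\beta_t$, and then invoke the uniform (over all $f \in \cF$) concentration inequality of the good event to pass to the expected excess loss, yielding $2\beta_t + 2\beta_t = 4\beta_t$. The only difference is presentational — the paper chains the inequalities starting from the expected excess loss, whereas you establish the empirical bound first — and your extra care with $\beta_{t-1} \leq \beta_t$ is a harmless refinement of the same argument.
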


\begin{proof}[Proof of \cref{prop:avg-excess-bound}]
    On $\cE_\delta$, for any $t \in \Np$,
    \begin{align*}
        \sum_{i=1}^{t-1} \bar \phi_{f_t}(A_i)
            & \leq 2 \left[\sum_{i=1}^{t-1} \phi_{f_t}(Y_i, A_i) + \beta_t \right] \tag{on $\cE_\delta$ \cref{eq:bandit-conc-event} holds}                                                \\
            & = 2 \left[\sum_{i=1}^{t-1} \ell(Y_i, f_t(A_i)) - \sum_{i=1}^{t-1} \ell(Y_i, \eta(A_i)) + \beta_t \right]                                                                    \\
            & \leq 2 \left[\sum_{i=1}^{t-1} \ell(Y_i, f_t(A_i)) - \inf_{\hat f \in \cF} \sum_{i=1}^{t-1} \ell(Y_i, \hat f(A_i)) + \beta_t \right] \tag{$\eta \in \cF$} \\
            & \leq 4\beta_t\,. \tag{$f_t \in \cF_t$ and the definition of $\cF_t$}
    \end{align*}
\end{proof}

    With that, by the first part of  \cref{lem:density} and using that $\beta_t \leq \beta_n$,
    \[
        \card ([n]\setminus I_n) = \sum_{t=1}^n \mathbf{1}\big \{| \psi_t(x_t)| > \sigma \big \}
                                 \leq \left(\frac{4 \beta_n}{\sigma}+1\right) \elud{\sigma}(\sigma; \bar{\Phi}(\cF))+1\,.
    \]
    For the sum of the regret along the good subsequence $I_n$, first write
    \[
      I_n = \{\tau_1 < \tau_2 < \cdots < \tau_m\}\,.
    \]
    We now pass to this subsequence, writing
    \[
      \tilde x_j = A_{\tau_j}\,, \qquad \tilde \psi_j = \bar{\phi}_{f_{\tau_j}}\,, \qquad j \in [m]\,.
    \]
    Since $\tau_j \in I_n$, we have that the condition
    \[
      \tilde{\psi}_{j}(\tilde x_j) \leq \sigma \qquad \text{for all $j \in [m]$}
    \]
    holds for this subsequence. Moreover, since $I_n \subset [n]$, we can again take $\beta = 4\beta_t \leq 4\beta_n$. Therefore, by the second part of \cref{lem:density} with $\omega = 1/n$,
    \[
      \sum_{t \in I_n} \bar\phi_{f_t}(A_t) \leq (4\beta_n \log (1+bn) + b)\elud{\sigma}(1/n; \Psi) + b + 1\,.
    \]
    Combining the established inequalities and taking infimum over $\sigma \in [1/n,b]$  concludes the proof.
\end{proof}

\subsection{Proof of \cref{lem:density} (control by localised eluder dimension)}
\begin{proof}[Proof of \cref{lem:density}]
    \emph{Proof of part 1.}
    Fix $\eps\in(0,\sigma]$, $t\in[n]$, and write
    \[
        d = \elud{\sigma}(\eps;\Psi).
    \]
    If $d=0$, then by definition of $\elud{\sigma}(\eps;\Psi)$ there is no $x\in\cX$ and no $\psi\in\Psi$ such that $\psi(x)>\eps$, so the claim is trivial.

    Assume now that $d\ge 1$, and let $z_1,\dots,z_m$ be the subsequence of $x_1,\dots,x_t$ consisting of those $x_i$ for which $\psi_i(x_i)>\eps$. The proof of Claims~1 and~2, and hence of Lemma~23, in Appendix~D.2 of \citet{liu2022partially} applies verbatim here: the argument only uses $\eps$-independence at the fixed scale $\eps$, so one may replace the global eluder dimension there by the localised quantity $\elud{\sigma}(\eps;\Psi)$. Consequently, there exists some $j\in[m]$ such that $z_j$ is $\eps$-dependent on at least
    \[
        \floor*{\frac{m-1}{d}}
    \]
    pairwise disjoint subsequences of $(z_1,\dots,z_{j-1})$, while at the same time any such point can be $\eps$-dependent on at most $\beta/\eps$ pairwise disjoint subsequences. Hence
    \[
        \floor*{\frac{m-1}{d}} \le \frac{\beta}{\eps},
    \]
    and therefore
    \[
        m \le \roundb*{\frac{\beta}{\eps}+1}d+1.
    \]
    This is exactly the desired bound.

    \emph{Proof of part 2.}
    Assume that $\psi_i(x_i)\le \sigma$ for all $i\in[n]$, and fix $\omega\in(0,\sigma]$ and $t\in[n]$. Let
    \[
        d = \elud{\sigma}(\omega;\Psi).
    \]
    Since each $\psi_i(x_i)\in[0,\sigma]$, we have
    \begin{align*}
        \sum_{i=1}^t \psi_i(x_i)
        &= \sum_{i=1}^t \int_0^\sigma \mathbf{1}\{\psi_i(x_i)>y\}\,dy \\
        &\le t\omega + \int_\omega^\sigma \sum_{i=1}^t \mathbf{1}\{\psi_i(x_i)>y\}\,dy.
    \end{align*}
    Now fix $y\in[\omega,\sigma]$. By part~1 and monotonicity of $\nu\mapsto \elud{\sigma}(\nu;\Psi)$,
    \[
        \sum_{i=1}^t \mathbf{1}\{\psi_i(x_i)>y\}
        \le \roundb*{\frac{\beta}{y}+1}\elud{\sigma}(y;\Psi)+1
        \le \roundb*{\frac{\beta}{y}+1}d+1.
    \]
    Substituting this into the previous display yields
    \begin{align*}
        \sum_{i=1}^t \psi_i(x_i)
        &\le t\omega + \int_\omega^\sigma \roundb*{\roundb*{\frac{\beta}{y}+1}d+1}\,dy \\
        &= t\omega + \roundb*{\beta \log\frac{\sigma}{\omega} + \sigma-\omega}d + \sigma-\omega \\
        &\le t\omega + \roundb*{\beta \log\roundb*{1+\frac{B}{\omega}} + B}d + B,
    \end{align*}
    where in the last step we used $\sigma\le B$.
\end{proof}

\clearpage
\section{Proof of RL cost-sensitive regret bound, \cref{thm:rl}}\label{appendix:proof-rl}
Recall the following assumptions, and the statement of \cref{thm:rl}, which we shall now prove.

\assnonnegrl*
\assreal*
\asscomplete*

\rlexcess*
\rlloss*

\rlthm*

Within the upcoming proofs, we will use the shorthand
\[
    X_h^t = (S_h^t, A_h^t)
    \qquad\text{and}\qquad
    \mu_h^t = \mu_h^{f^t}\,.
\]

\begin{proposition}\label{lem:rl-event}
    There exists an event $\cE_\delta$ satisfying $\P(\cE_\delta) \geq 1-\delta$, whereon, for all $f \in \cF$, $h \in [H]$ and $t \in \Np$,
    \begin{equation}
        \sum_{i=1}^{t-1} \int \bar\phi_h^f\,d\mu_h^i
        \leq 2\sum_{i=1}^{t-1} \phi_h^f(X_h^i, S_{h+1}^i) + 2\beta_t\,.
        \label{eq:rl-conc-event}
    \end{equation}
    Moreover, deterministically,
    \[
        q^\star \in \bigcap_{t \leq n} \cF^t\,.
    \]
\end{proposition}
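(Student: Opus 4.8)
The plan is to mirror the bandit argument of \cref{prop:good-event}, separating the proposition into its concentration claim \cref{eq:rl-conc-event} and the feasibility claim $q^\star \in \bigcap_{t\le n}\cF^t$. The first claim, as the surrounding text already indicates, I would treat as a direct corollary of \cref{thm:uniform-bernstein}: for each fixed $h \in [H]$ the process $((X^t_h, S^t_{h+1}))_t$ is adapted to $\F_h$, and the RL boundedness and RL variance conditions of \cref{ass:rl-loss} furnish exactly the two hypotheses of \cref{thm:uniform-bernstein} for the class $\Phi_h = \{\phi^{f,g}_h : (f,g)\in\cF\times\cG\}$ with $\epsilon = 1/n$. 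Applying the inequality for each $h$ and combining over the finitely many $h \in [H]$ by a union bound (the per-$h$ covering numbers being at most $N_n$, the cover of the combined class $\Phi(\cF,\cG)$) produces the event $\cE_\delta$ on which \cref{eq:rl-conc-event} holds. I would not dwell on this part; the substantive content is the second claim.

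For feasibility, the base case $t=1$ is immediate, since $\cL^0_h \equiv 0$ and $q^\star \in \cF = \cF^1$ by realisability (\cref{ass:rl-real}). Fix $t \ge 2$ and $h \in [H]$. The key algebraic identity is the Bellman fixed-point relation $\cT q^\star = q^\star$, giving $(\cT q^\star)_h = q^\star_h$. Since both $\cL^{t-1}_h(q^\star_{h+1}, q^\star_h)$ and $\cL^{t-1}_h(q^\star_{h+1}, g)$ use the \emph{same} response $y^{q^\star}_h(X^i_h, S^i_{h+1}) = 1 \wedge (C^i_h + q^{\star\wedge}_{h+1}(S^i_{h+1}))$ and differ only in the prediction, I would write, for every $g \in \cG_h$,
\[
    \cL^{t-1}_h(q^\star_{h+1}, q^\star_h) - \cL^{t-1}_h(q^\star_{h+1}, g) = - \sum_{i=1}^{t-1} \phi^{q^\star, g}_h(X^i_h, S^i_{h+1})\,,
\]
using $\ell(y^{q^\star}_h, q^\star_h(X^i_h)) = \ell(y^{q^\star}_h, (\cT q^\star)_h(X^i_h))$. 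Thus $q^\star \in \cF^t$ will follow once I establish $\sum_{i=1}^{t-1} \phi^{q^\star, g}_h(X^i_h, S^i_{h+1}) \ge -\beta_t$ for all $g \in \cG_h$ and all $h$.

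To close this I would read nonnegativity of the expected excess loss off the variance condition: taking $f = q^\star \in \cF$ and any $g \in \cG$ in the RL variance condition gives $0 \le \Var \phi^{q^\star,g}_h(x,S') \le c\,\bar\phi^{q^\star,g}_h(x)$, hence $\bar\phi^{q^\star, g}_h(x) \ge 0$ pointwise. Applying \cref{eq:rl-conc-event} at index $t-1$ to the pair $(q^\star, g)$ and using $\beta_{t-1} \le \beta_t$ then yields
\[
    0 \le \sum_{i=1}^{t-1} \bar\phi^{q^\star, g}_h(X^i_h) \le 2 \sum_{i=1}^{t-1} \phi^{q^\star, g}_h(X^i_h, S^i_{h+1}) + 2\beta_{t-1}\,,
\]
which rearranges to the required lower bound. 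Taking the infimum over $g \in \cG_h$ gives $\cL^{t-1}_h(q^\star_{h+1}, q^\star_h) \le \inf_{g\in\cG_h}\cL^{t-1}_h(q^\star_{h+1}, g) + \beta_t$; intersecting over $h \in [H]$ gives $q^\star \in \cF^t$, and over $t \le n$ gives the claim.

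The main obstacle I anticipate is sourcing the nonnegativity step correctly. Unlike the bandit proof, where the triangle condition directly certifies $\bar\phi_f \ge 0$ for the generic competitor, here the competitor $g$ ranges over all of $\cG_h$ whereas the RL triangle condition only controls the diagonal term $\bar\phi^{f,f}_h$ (which is vacuous at $f = q^\star$); one must instead use the variance condition. A secondary point worth checking is the clipping in the response $y^{q^\star}_h = 1 \wedge (\cdot)$: \cref{ass:rl-nonneg} guarantees $c_h(x) + v^\star_{h+1}(s') \le 1$, so the clipping is inactive at $q^\star$ and $(\cT q^\star)_h(x)$ coincides with both $q^\star_h(x)$ and the conditional mean of $y^{q^\star}_h$, which is what makes nonnegativity genuinely hold there. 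The remaining bookkeeping (the off-by-one in the index and the union over $h$ in the first part) is routine.
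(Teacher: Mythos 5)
Your proof is correct and follows essentially the same route as the paper's: apply the uniform Bernstein inequality per step $h$ to obtain \cref{eq:rl-conc-event}, then combine nonnegativity of $\sum_i \bar\phi^{q^\star,g}_h(X^i_h)$ with \cref{eq:rl-conc-event} and the fixed-point identity $\cT q^\star = q^\star$ to conclude $q^\star \in \cF^t$. The one point of divergence is the sourcing of that nonnegativity: the paper attributes it to the RL triangle condition, which in the RL setting only controls the diagonal terms $\bar\phi^{f,f}_h$, whereas the feasibility argument needs $\bar\phi^{q^\star,g}_h \ge 0$ for arbitrary $g \in \cG$; your derivation from the variance condition ($0 \leq \Var\phi^{f,g}_h(x,S') \leq c\,\bar\phi^{f,g}_h(x)$) covers all pairs $(f,g)$ and is the more defensible reading. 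Your remarks on the off-by-one in $t$ and on the clipping being inactive at $q^\star$ are correct bookkeeping that the paper leaves implicit.
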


\begin{proof}
    Let $(\mathcal H_t)_{t \in \N}$ be the filtration generated by the first $t$ complete episodes. For each fixed $h \in [H]$, consider the $\mathcal H$-adapted process
    \[
        Z_i^h = (X_h^i, S_{h+1}^i), \qquad i \in \Np,
    \]
    together with the function class $\Phi_h(\cF) = \{\phi_h^f \colon f \in \cF\}$. By definition of $\mu_h^i$,
    \[
        \EE[\phi_h^f(Z_i^h) \mid \mathcal H_{i-1}] = \int \bar\phi_h^f\,d\mu_h^i.
    \]
    Also, since $|\phi_h^f| \leq b$ and $0 \leq \bar\phi_h^f \leq b$,
    \[
        \left|\EE[\phi_h^f(Z_i^h) \mid \mathcal H_{i-1}] - \phi_h^f(Z_i^h)\right| \leq 2b.
    \]
    Moreover, by the law of total variance,
    \begin{align*}
        \Var(\phi_h^f(Z_i^h) \mid \mathcal H_{i-1})
            &= \EE\!\left[\Var(\phi_h^f(Z_i^h) \mid X_h^i, \mathcal H_{i-1}) \mid \mathcal H_{i-1}\right]
            + \Var(\bar\phi_h^f(X_h^i) \mid \mathcal H_{i-1}) \\
            &\leq c \int \bar\phi_h^f\,d\mu_h^i + b \int \bar\phi_h^f\,d\mu_h^i \\
            &= (b+c)\int \bar\phi_h^f\,d\mu_h^i,
    \end{align*}
    where the second term is bounded using $0 \leq \bar\phi_h^f \leq b$. Since the $1/n$-covering number of $\Phi_h(\cF)$ is at most $N_n$, applying \cref{thm:uniform-bernstein} with confidence parameter $\delta/H$ and taking a union bound over $h \in [H]$ yields the first claim.

    For the second claim, note that by \cref{ass:rl-real,ass:rl-complete} and \cref{eq:q-star}, we have $q^\star \in \cF$ and $(\cT q^\star)_h = q_h^\star \in \cG_h$ for every $h \in [H]$. Hence, for all $t \in \Np$ and $h \in [H]$,
    \[
        L_h^{t-1}(q_{h+1}^\star, q_h^\star)
        =
        L_h^{t-1}(q_{h+1}^\star, (\cT q^\star)_h)
        \leq
        \inf_{g \in \cG_h} L_h^{t-1}(q_{h+1}^\star, g) + \beta_t,
    \]
    so $q^\star \in \cF^t$.
\end{proof}

\begin{lemma}[Contraction lemma]\label{lem:contraction}
    Let $f \in \cF$, let $\pi := \pi^f$, and let $v = v^\pi$. Then
    \[
        \sqrt{\Delta(f_1^\wedge(s_1), v_1(s_1))}
        \leq
        \sqrt{2}\sum_{h=1}^H \sqrt{\E_{\pi}\!\left[\Delta(f_h(X_h), (\cT f)_h(X_h))\right]},
    \]
    where $\E_\pi$ denotes the expectation over the state-action pairs $(X_h)_{h=1}^H$ resulting from following the policy $\pi$ in the MDP $M$.
\end{lemma}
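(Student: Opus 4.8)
The plan is to prove the stated inequality by a backward induction over the stage index, establishing for every $h \in [H+1]$ and $s \in \cS$ the stronger claim
\[
  \sqrt{\Delta(f^\wedge_h(s), v_h(s))} \le \sum_{j=h}^{H} \sqrt{\E_\pi\left[\Delta(f_j(X_j),(\cT f)_j(X_j)) \mid S_h = s\right]}\,,
\]
where $\E_\pi$ is the expectation over the trajectory generated by $\pi = \pi^f$ in $M$. The $h=1$ instance is the lemma, since $S_1 = s_1$ is deterministic and $\pi$ greedy w.r.t.\ $f$ gives $f_1(S_1,\pi(S_1)) = \min_a f_1(S_1,a) = f^\wedge_1(s_1)$; the base case $h = H+1$ is trivial as both sides vanish.

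The single enabling fact I would isolate first is that $d(p,q) := \sqrt{\Delta(p,q)} = |p-q|/\sqrt{p+q}$ is a metric on $[0,1]$, and in particular obeys the triangle inequality. This is the classical statement that the square root of triangular discrimination is a metric, and I expect it to be the main obstacle: the sandwich of \cref{lem:triangle-ineq} only pins $\sqrt{\Delta(p,q)}$ between $|\sqrt p - \sqrt q|$ and $\sqrt 2\,|\sqrt p - \sqrt q|$, which would spoil the constant-free bound, so the genuine triangle inequality is required rather than the Hellinger comparison.

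Granting the triangle inequality, the inductive step proceeds as follows. Fix $s$ and write $x = (s,\pi_h(s))$, so $f^\wedge_h(s) = f_h(x)$. Inserting the intermediate point $(\cT f)_h(x)$ gives
\[
  \sqrt{\Delta(f_h(x), v_h(s))} \le \sqrt{\Delta(f_h(x),(\cT f)_h(x))} + \sqrt{\Delta((\cT f)_h(x), v_h(s))}\,,
\]
where the first term is the stage-$h$ Bellman residual and equals the $j=h$ summand, since $X_h = x$ is determined by $S_h = s$. For the second term I would use a data-processing argument: both $(\cT f)_h(x)$ and $v_h(s)$ are expectations over $S_{h+1}\sim P_h(x)$ of the \emph{same} map $g(t) = 1\wedge(c_h(x)+t)$ applied to $f^\wedge_{h+1}$ and to $v_{h+1}$ respectively---here one checks that under \cref{ass:rl-nonneg} the clip is inactive on the true value (expected remaining cost from $S_{h+1}$ is at most $1 - c_h(x)$ a.s.), so $v_h(s) = \E_\pi[\,1\wedge(c_h(x)+v_{h+1}(S_{h+1})) \mid x\,]$. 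Joint convexity of $\Delta$ (it is the perspective $u^2/t$ of a convex function, evaluated at $u = p-q$, $t = p+q$) together with Jensen yields $\Delta((\cT f)_h(x),v_h(s)) \le \E_\pi[\Delta(g(f^\wedge_{h+1}),g(v_{h+1})) \mid x]$, and since adding a nonnegative constant and clipping at $1$ are each $\Delta$-nonexpansive one has $\Delta(g(p),g(q)) \le \Delta(p,q)$ pointwise, giving $\Delta((\cT f)_h(x),v_h(s)) \le \E_\pi[\Delta(f^\wedge_{h+1}(S_{h+1}), v_{h+1}(S_{h+1})) \mid x]$.

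Finally I would feed in the induction hypothesis. Writing $Y_j(S_{h+1}) := \sqrt{\E_\pi[\Delta(f_j(X_j),(\cT f)_j(X_j)) \mid S_{h+1}]}$, the hypothesis gives $\sqrt{\Delta(f^\wedge_{h+1}(S_{h+1}),v_{h+1}(S_{h+1}))} \le \sum_{j=h+1}^H Y_j(S_{h+1})$, so by the contraction step and Minkowski's inequality in $L^2$ of the law of $S_{h+1}$ given $x$,
\[
  \sqrt{\Delta((\cT f)_h(x),v_h(s))} \le \Big\| \textstyle\sum_{j=h+1}^H Y_j \Big\|_{L^2} \le \sum_{j=h+1}^H \sqrt{\E_\pi[Y_j^2 \mid x]} = \sum_{j=h+1}^H \sqrt{\E_\pi[\Delta(f_j(X_j),(\cT f)_j(X_j)) \mid x]}\,,
\]
the last step being the tower property and $\E_\pi[\cdot\mid x] = \E_\pi[\cdot\mid S_h=s]$. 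Combining the two displayed bounds closes the induction. Beyond the triangle inequality for $\sqrt\Delta$, the only delicate point is the verification that the clip is inactive on the true-value side; the rest is joint convexity, Jensen, Minkowski and the tower property.
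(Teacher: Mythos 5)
Your proposal is correct in substance and shares the paper's backward-contraction skeleton, but it routes the two load-bearing steps differently. The paper never invokes the metric property of $\sqrt{\Delta}$: it sandwiches $\sqrt{\Delta(p,q)}$ between $|\sqrt{p}-\sqrt{q}|$ and $\sqrt{2}\,|\sqrt{p}-\sqrt{q}|$ via \cref{lem:triangle-ineq}, runs the entire recursion on the Hellinger-type quantity $\norm{\sqrt{f_h}-\sqrt{v_h}}_{\mu_h}$ in $L_2$ of the stage-$h$ occupancy measure---using the ordinary $L_2$ triangle inequality, the elementary bound $|\sqrt{c+a}-\sqrt{c+b}|\leq|\sqrt{a}-\sqrt{b}|$, and Jensen via joint convexity of $(x,y)\mapsto(\sqrt{x}-\sqrt{y})^2$---and converts back to $\Delta$ only at the ends. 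You instead work with $\sqrt{\Delta}$ directly as a metric, use joint convexity of $\Delta$ itself with Jensen, and recombine with Minkowski in a conditional $L^2$ plus the tower property; your pointwise backward induction is equivalent to, if slightly heavier than, the paper's single marginal-$L_2(\mu_h)$ recursion. Your observation about constants is sharp: the paper's Hellinger conversion does cost a factor (its opening display carries a ``$2$'' that does not appear in the lemma statement), so your route is the one that actually delivers the constant-free inequality as written, whereas the paper's argument proves it only up to that factor---harmless downstream, but a genuine blemish you correctly diagnosed.

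Two loose ends. First, the metric property of $\sqrt{\Delta}$ is the crux of your argument and you only assert it; it is a true classical fact (the scalar inequality $|p-q|/\sqrt{p+q}\leq|p-r|/\sqrt{p+r}+|r-q|/\sqrt{r+q}$ for nonnegative reals, which underlies the statement that the square root of the triangular discrimination is a metric), but it is not a consequence of \cref{lem:triangle-ineq} and needs a citation or a short proof. Second, the paper's $\cT$ is the \emph{unclipped} operator $q\mapsto c+\int q^\wedge dP$, so your identity $(\cT f)_h(x)=\E[\,1\wedge(c_h(x)+f^\wedge_{h+1}(S'))\,]$ does not match the definitions; simply drop the clip from $g$---translation by $c_h(x)\geq 0$ is already $\Delta$-nonexpansive and Jensen handles the expectation---after which the verification that the clip is inactive on the true value (via \cref{ass:rl-nonneg}) becomes unnecessary. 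Neither point breaks the proof.
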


\begin{lemma}\label{lem:backward-looking-control}
    On $\cE_\delta$, for all $t \in \Np$ and $h \in [H]$,
    \[
        \sum_{i=1}^{t-1} \int \bar\phi_h^{f^t}\,d\mu_h^i \leq 4 \beta_t\,.
    \]
\end{lemma}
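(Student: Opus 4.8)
The plan is to mirror the bandit argument of \cref{prop:avg-excess-bound}, combining the uniform concentration of \cref{lem:rl-event} with the definition of the confidence set $\cF^t$ and generalised completeness. Fix $t \in \Np$ and $h \in [H]$ and work on the event $\cE_\delta$. First I would apply the concentration inequality \cref{eq:rl-conc-event} at time index $t-1$ with the pair $(f,g) = (f^t, f^t)$. This substitution is legitimate precisely because the bound in \cref{lem:rl-event} holds uniformly over \emph{all} fixed $(f,g) \in (\cF\cup\cG)^2$ and all time indices, so it survives evaluation at the data-dependent choice $f^t \in \cF$. This step yields
\[
    \sum_{i=1}^{t-1} \bar\phi^{f^t,f^t}_h(X^i_h) \leq 2\sum_{i=1}^{t-1} \phi^{f^t,f^t}_h(X^i_h, S^i_{h+1}) + 2\beta_{t-1}\,,
\]
reducing the task to controlling the \emph{empirical} excess Bellman loss rather than its expectation.

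Next I would expand the empirical sum using the definition of $\phi^{f,g}_h$ and recognise the two resulting terms as the empirical losses appearing in the algorithm. Since the response $y^{f^t}_h(X^i_h, S^i_{h+1}) = 1 \wedge (C^i_h + (f^t)^\wedge_{h+1}(S^i_{h+1}))$ is exactly the first argument of $\ell$ inside $\cL^{t-1}_h(f^t_{h+1}, \cdot)$, the difference collapses to
\[
    \sum_{i=1}^{t-1} \phi^{f^t,f^t}_h(X^i_h, S^i_{h+1}) = \cL^{t-1}_h(f^t_{h+1}, f^t_h) - \cL^{t-1}_h(f^t_{h+1}, (\cT f^t)_h)\,.
\]
Then I would invoke generalised completeness (\cref{ass:rl-complete}), which gives $(\cT f^t)_h \in \cG_h$, so that the subtracted term is at least $\inf_{g \in \cG_h}\cL^{t-1}_h(f^t_{h+1}, g)$; meanwhile the membership $f^t \in \cF^t$ together with the defining constraint of the confidence set bounds the first term by $\inf_{g\in\cG_h}\cL^{t-1}_h(f^t_{h+1}, g) + \beta_t$. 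Subtracting shows $\sum_{i=1}^{t-1}\phi^{f^t,f^t}_h(X^i_h, S^i_{h+1}) \leq \beta_t$.

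Finally I would combine the two displays and use that $(\beta_t)_t$ is nondecreasing (so $\beta_{t-1}\le\beta_t$) to conclude $\sum_{i=1}^{t-1}\bar\phi^{f^t,f^t}_h(X^i_h) \leq 2\beta_t + 2\beta_{t-1} \leq 4\beta_t$, as claimed. I expect the binding difficulty here to be bookkeeping rather than any genuine subtlety: one must (i) match the per-horizon response $y^{f^t}_h$ with the argument of $\cL^{t-1}_h$ so that the empirical excess loss lines up with the confidence-set constraint, and (ii) use completeness to ensure $(\cT f^t)_h$ is an admissible competitor in the $h$-wise infimum over $\cG_h$, so the optimism slack is truly controlled by $\beta_t$. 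Note that, unlike the bandit case, nonnegativity of $\bar\phi^{f,f}_h$ (available from the RL triangle condition) is \emph{not} needed, since the claim already concerns the full sum over $i \leq t-1$ rather than a restriction to the indices in $I_{t-1}$.
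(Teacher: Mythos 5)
Your proposal is correct and follows essentially the same route as the paper: apply the uniform Bernstein event to the pair $(f^t,f^t)$, identify the empirical excess Bellman loss with the difference of the algorithm's empirical risks, and control it by $\beta_t$ via the confidence-set constraint together with generalised completeness making $(\cT f^t)_h$ an admissible competitor in the infimum over $\cG_h$. The only cosmetic difference is that the paper introduces an explicit (approximate) minimiser $g^t\in\cG$ and telescopes through it, whereas you compare directly with the infimum, which sidesteps the paper's parenthetical about existence of the minimiser.
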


\begin{proof}
    By \cref{lem:rl-event}, on $\cE_\delta$,
    \[
        \sum_{i=1}^{t-1} \int \bar\phi_h^{f^t}\,d\mu_h^i
        \leq
        2\sum_{i=1}^{t-1} \phi_h^{f^t}(X_h^i, S_{h+1}^i) + 2\beta_t.
    \]
    Now observe that
    \begin{align*}
        \sum_{i=1}^{t-1} \phi_h^{f^t}(X_h^i, S_{h+1}^i)
            &=
        L_h^{t-1}(f_{h+1}^t, f_h^t)
        -
        L_h^{t-1}(f_{h+1}^t, (\cT f^t)_h) \\
            &\leq
        \beta_t,
    \end{align*}
    because $f^t \in \cF^t$ and $(\cT f^t)_h \in \cG_h$ by \cref{ass:rl-complete}. Combining the two displays proves the claim.
\end{proof}

\begin{proof}[Proof of \cref{thm:rl}]
    For each $t \in \Np$, write
    \[
        r_t = v_1^t(s_1) - v_1^\star(s_1).
    \]
    Since $q^\star \in \cF^t$ by \cref{lem:rl-event} and $f^t$ is chosen optimistically over $\cF^t$, we have
    \[
        f_1^{t\wedge}(s_1) \leq q_1^{\star\wedge}(s_1) = v_1^\star(s_1).
    \]
    Hence, by \cref{lem:triangle-cost-bound},
    \[
        r_t \leq 3\sqrt{v_1^\star(s_1)\Delta(v_1^t(s_1), f_1^{t\wedge}(s_1))}
        + 6\Delta(v_1^t(s_1), f_1^{t\wedge}(s_1)).
    \]

    Fix $\sigma \in [1/n,b]$, and let
    \[
        I_n^\sigma = \left\{t \leq n \colon \int \bar\phi_h^{f^t}\,d\mu_h^t \leq \sigma,\ \forall h \in [H]\right\}.
    \]
    Since the cumulative cost in every episode is at most $1$ by \cref{ass:rl-nonneg}, we have
    \[
        R_n \leq \sum_{t \in I_n^\sigma} r_t + \card([n] \setminus I_n^\sigma).
    \]

    For $t \in I_n^\sigma$ and $h \in [H]$, let
    \[
        \Delta_{t,h} = \Delta(f_h^t(X_h), (\cT f^t)_h(X_h)),
    \]
    where the expectation is taken with respect to the trajectory induced by $\pi^t = \pi^{f^t}$. By \cref{lem:contraction}, the symmetry of $\Delta$, and the bounds $3\sqrt{2} < 5$ and $6 \cdot 2 = 12$,
    \begin{align*}
        \sum_{t \in I_n^\sigma} r_t
            &\leq
        5\sum_{t \in I_n^\sigma}\sum_{h=1}^H \sqrt{v_1^\star(s_1)\E_{\pi^t}[\Delta_{t,h}]}
        +
        12\sum_{t \in I_n^\sigma}\left\{\sum_{h=1}^H \sqrt{\E_{\pi^t}[\Delta_{t,h}]}\right\}^2 \\
            &\leq
        5\sqrt{Hn v_1^\star(s_1)\sum_{t \in I_n^\sigma} \sum_{h=1}^H \E_{\pi^t}[\Delta_{t,h}]}
        +
        12H \sum_{t \in I_n^\sigma} \sum_{h=1}^H \E_{\pi^t}[\Delta_{t,h}],
    \end{align*}
    where we used Cauchy--Schwarz and $\card(I_n^\sigma) \leq n$.

    Now, by the RL triangle condition in \cref{ass:rl-loss},
    \begin{equation}
        \sum_{t \in I_n^\sigma}\sum_{h=1}^H \E_{\pi^t}[\Delta_{t,h}]
        \leq
        \gamma \sum_{t \in I_n^\sigma}\sum_{h=1}^H \int \bar\phi_h^{f^t}\,d\mu_h^t.
        \label{eq:rl-nearly-done}
    \end{equation}

    For each $h \in [H]$ and $t \in \Np$, define $\psi_h^t \in \Psi_h(\cF)$ by
    \[
        \psi_h^t(\nu) = \int \bar\phi_h^{f^t}\,d\nu.
    \]
    By \cref{lem:backward-looking-control}, on $\cE_\delta$,
    \[
        \sum_{i=1}^{t-1} \psi_h^t(\mu_h^i) \leq 4\beta_t \leq 4\beta_n
        \qquad \forall t \in \Np,\ \forall h \in [H].
    \]

    We now apply the two parts of \cref{lem:density} stage-wise, with
    \[
        X = \{\mu_h^f \colon f \in \cF\},\qquad
        \Psi = \Psi_h(\cF),\qquad
        x_i = \mu_h^i,\qquad
        \psi_i = \psi_h^i,\qquad
        B = b,\qquad
        \beta = 4\beta_n.
    \]
    First, the bad episodes satisfy
    \begin{align*}
        \card([n] \setminus I_n^\sigma)
            &\leq
        \sum_{h=1}^H \card\{t \leq n \colon \psi_h^t(\mu_h^t) > \sigma\} \\
            &\leq
        \sum_{h=1}^H \left\{\left(\frac{4\beta_n}{\sigma}+1\right)\elud{\sigma}(\sigma;\Psi_h(\cF)) + 1\right\} \\
            &=
        \left(\frac{4\beta_n}{\sigma}+1\right)\tilde d_n^\sigma + H.
    \end{align*}

    Next, write $I_n^\sigma = \{\tau_1 < \dots < \tau_m\}$. Since $\psi_h^{\tau_j}(\mu_h^{\tau_j}) \leq \sigma$ for all $j \in [m]$, and since by nonnegativity
    \[
        \sum_{\ell=1}^{j-1} \psi_h^{\tau_j}(\mu_h^{\tau_\ell})
        \leq
        \sum_{i=1}^{\tau_j-1} \psi_h^{\tau_j}(\mu_h^i)
        \leq 4\beta_n,
    \]
    the second part of \cref{lem:density} gives, for each $h \in [H]$,
    \begin{align*}
        \sum_{t \in I_n^\sigma} \int \bar\phi_h^{f^t}\,d\mu_h^t
            &=
        \sum_{j=1}^m \psi_h^{\tau_j}(\mu_h^{\tau_j}) \\
            &\leq
        \left(b + 4\beta_n\log(1+bn)\right)\elud{\sigma}(1/n;\Psi_h(\cF)) + b + m/n \\
            &\leq
        \left(b + 4\beta_n\log(1+bn)\right)\elud{\sigma}(1/n;\Psi_h(\cF)) + b + 1.
    \end{align*}
    Summing over $h$ yields
    \begin{align*}
        \sum_{t \in I_n^\sigma}\sum_{h=1}^H \int \bar\phi_h^{f^t}\,d\mu_h^t
            &\leq
        \left(b + 4\beta_n\log(1+bn)\right)d_n^\sigma + H(b+1) \\
            &= \Gamma_n^\sigma/\gamma.
    \end{align*}

    Combining the last display with \eqref{eq:rl-nearly-done} and the earlier regret decomposition, we conclude that on $\cE_\delta$,
    \[
        R_n \leq 5\sqrt{Hn v_1^\star(s_1)\Gamma_n^\sigma} + 12H\Gamma_n^\sigma + \left(\frac{4\beta_n}{\sigma}+1\right)\tilde d_n^\sigma + H.
    \]
    Taking the infimum over $\sigma \in [1/n,b]$ completes the proof.
\end{proof}

We now prove the contraction lemma, \cref{lem:contraction}. We will need the following simple result.

\begin{lemma}\label{lem:rl-convex}
    For $x,y \geq 0$, the map $(x,y) \mapsto (\sqrt{x}-\sqrt{y})^2$ is jointly convex.
\end{lemma}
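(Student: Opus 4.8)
The plan is to reduce the claim to the concavity of the geometric mean, as signposted by the surrounding text. Expanding the square gives
\[
    (\sqrt{x}-\sqrt{y})^2 = x + y - 2\sqrt{xy}\,,
\]
and since $(x,y)\mapsto x+y$ is affine (hence jointly convex), and a finite sum of jointly convex functions is jointly convex, it suffices to show that $(x,y)\mapsto -\sqrt{xy}$ is jointly convex on $\Rp^2$, equivalently that the geometric mean $(x,y)\mapsto\sqrt{xy}$ is jointly concave. This is exactly the reduction flagged in the remark preceding the statement.

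To establish concavity of $\sqrt{xy}$ I would argue directly via the Cauchy--Schwarz inequality rather than via the Hessian. Fix $\lambda\in[0,1]$ and points $(x_1,y_1),(x_2,y_2)\in\Rp^2$. Applying Cauchy--Schwarz to the vectors $(\sqrt{\lambda x_1},\,\sqrt{(1-\lambda)x_2})$ and $(\sqrt{\lambda y_1},\,\sqrt{(1-\lambda)y_2})$ yields
\[
    \left(\lambda\sqrt{x_1 y_1} + (1-\lambda)\sqrt{x_2 y_2}\right)^2
    \leq \bigl(\lambda x_1 + (1-\lambda)x_2\bigr)\bigl(\lambda y_1 + (1-\lambda)y_2\bigr)\,.
\]
Taking square roots gives $\lambda\sqrt{x_1 y_1} + (1-\lambda)\sqrt{x_2 y_2} \leq \sqrt{(\lambda x_1 + (1-\lambda)x_2)(\lambda y_1 + (1-\lambda)y_2)}$, which is precisely the concavity inequality for $\sqrt{xy}$. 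Combining with the affine part, $(\sqrt{x}-\sqrt{y})^2$ is the sum of the convex function $x+y$ and the convex function $-2\sqrt{xy}$, hence jointly convex, as claimed.

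Since the statement is an elementary convexity fact, there is no genuinely hard step; the only point requiring care is the behaviour on the boundary of $\Rp^2$, where $x$ or $y$ vanishes. This is exactly why I would prefer the Cauchy--Schwarz route to a second-derivative computation: a Hessian argument only certifies convexity on the open orthant $x,y>0$ (the Hessian of $-\sqrt{xy}$ is positive semidefinite there, with zero determinant) and would then need a separate continuity argument to extend to the closed orthant, whereas the inequality above never divides by $x$ or $y$ and so holds verbatim for all $x,y\geq 0$.
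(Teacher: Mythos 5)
Your proof is correct and follows essentially the same route as the paper, which justifies the lemma by the same decomposition $(\sqrt{x}-\sqrt{y})^2 = x+y-2\sqrt{xy}$ and the joint convexity of $(x,y)\mapsto-\sqrt{xy}$ on the closed orthant. The only difference is that you additionally supply a Cauchy--Schwarz proof of the concavity of the geometric mean (valid up to the boundary), which the paper simply asserts as known.
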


\begin{proof}
    Since
    \[
        (\sqrt{x}-\sqrt{y})^2 = x + y - 2\sqrt{xy},
    \]
    and $(x,y) \mapsto -\sqrt{xy}$ is jointly convex on $\R_+^2$, the claim follows.
\end{proof}

\begin{proof}[Proof of \cref{lem:contraction}]
    For each $h \in [H]$, let $\mu_h$ denote the law of $X_h = (S_h,\pi_h(S_h))$ under $\pi$, and, by abuse of notation, identify $v_h$ with the function $(s,a) \mapsto v_h(s)$ on $\cS \times \cA$.

    By \cref{lem:triangle-ineq},
    \[
        \sqrt{\Delta(f_1^\wedge(s_1), v_1(s_1))}
        \leq
        \sqrt{2}\,\|\sqrt{f_1} - \sqrt{v_1}\|_{\mu_1}.
    \]
    We claim that for every $h \in [H]$,
    \begin{equation}
        \|\sqrt{f_h} - \sqrt{v_h}\|_{\mu_h}
        \leq
        \|\sqrt{f_h} - \sqrt{(\cT f)_h}\|_{\mu_h}
        +
        \|\sqrt{f_{h+1}} - \sqrt{v_{h+1}}\|_{\mu_{h+1}}.
        \label{eq:rl-contraction}
    \end{equation}
    Granting this for the moment, unrolling \eqref{eq:rl-contraction} over $h=1,\dots,H$ and using $f_{H+1}=v_{H+1}=0$ gives
    \[
        \|\sqrt{f_1} - \sqrt{v_1}\|_{\mu_1}
        \leq
        \sum_{h=1}^H \|\sqrt{f_h} - \sqrt{(\cT f)_h}\|_{\mu_h}.
    \]
    The lower bound in \cref{lem:triangle-ineq} now implies
    \[
        \|\sqrt{f_h} - \sqrt{(\cT f)_h}\|_{\mu_h}
        \leq
        \sqrt{\E_\pi\!\left[\Delta(f_h(X_h),(\cT f)_h(X_h))\right]},
    \]
    and the lemma follows.

    It remains to prove \eqref{eq:rl-contraction}. Fix $h \in [H]$. By the triangle inequality,
    \[
        \|\sqrt{f_h} - \sqrt{v_h}\|_{\mu_h}
        \leq
        \|\sqrt{f_h} - \sqrt{(\cT f)_h}\|_{\mu_h}
        +
        \|\sqrt{(\cT f)_h} - \sqrt{v_h}\|_{\mu_h}.
    \]
    For the second term, write
    \[
        A_h = f_{h+1}^\wedge(S_{h+1})
        \qquad\text{and}\qquad
        B_h = v_{h+1}(S_{h+1}).
    \]
    Then
    \[
        (\cT f)_h(X_h) = c_h(X_h) + \E_\pi[A_h \mid X_h]
        \qquad\text{and}\qquad
        v_h(S_h) = c_h(X_h) + \E_\pi[B_h \mid X_h].
    \]
    Using that $(\sqrt{c+a}-\sqrt{c+b})^2 \leq (\sqrt{a}-\sqrt{b})^2$ for $c,a,b \geq 0$, we obtain
    \[
        \|\sqrt{(\cT f)_h} - \sqrt{v_h}\|_{\mu_h}^2
        \leq
        \E_\pi\!\left[\left(\sqrt{\E_\pi[A_h \mid X_h]} - \sqrt{\E_\pi[B_h \mid X_h]}\right)^2\right].
    \]
    Since the map $(x,y)\mapsto (\sqrt{x}-\sqrt{y})^2$ is jointly convex by \cref{lem:rl-convex}, Jensen's inequality gives
    \begin{align*}
        \E_\pi\!\left[\left(\sqrt{\E_\pi[A_h \mid X_h]} - \sqrt{\E_\pi[B_h \mid X_h]}\right)^2\right]
            &\leq
        \E_\pi\!\left[\E_\pi\!\left[(\sqrt{A_h}-\sqrt{B_h})^2 \mid X_h\right]\right] \\
            &=
        \E_\pi\!\left[(\sqrt{A_h}-\sqrt{B_h})^2\right].
    \end{align*}
    Finally, because $\pi=\pi^f$ is greedy with respect to $f$, we have $A_h = f_{h+1}(X_{h+1})$ almost surely. Therefore
    \[
        \E_\pi\!\left[(\sqrt{A_h}-\sqrt{B_h})^2\right]
        =
        \|\sqrt{f_{h+1}} - \sqrt{v_{h+1}}\|_{\mu_{h+1}}^2,
    \]
    which proves \eqref{eq:rl-contraction}.
\end{proof}

\clearpage
\section{On self-concordant GLMs with compatible losses}\label{proof:eluder-glms}
We restate our compatible GLM assumption for convenience.

\glmassumption*

In this section, we prove that the following hold under \cref{ass:glm}:
\begin{enumerate}
    \item the excess loss class $\Phi(\GLM(\mu, \Theta))$ is uniformly bounded and admits a small uniform cover
    \item the expected excess loss class $\bar\Phi(\GLM(\mu, \Theta))$ has a small localised eluder dimension
\end{enumerate}
These results combined with \cref{thm:bandit} yield \cref{cor:glm-regret}.

We will use the following lemma repeatedly.
\begin{lemma}\label{lem:ftc}
    Fix some $(y,a) \in [0,1] \times \Bd$ and let $h(\theta) = \ell(y, \mu(\langle a, \theta\rangle))$. Let $\theta, \theta^\prime \in \Rd$ and write $\theta(t) = t\theta + (1-t)\theta^\prime$. Then,
    \[
        h(\theta) - h(\theta^\prime) = \int_0^1 \partial_t h(\theta(t)) dt = \partial_t h(\theta^\prime) + \int_0^1 (1-t) \partial^2_t h(\theta(t)) dt\,,
    \]
    where
    \begin{align*}
        \partial_t h(\theta(t))   & = (\mu(\langle a, \theta(t)\rangle) - y) \langle a, \theta - \theta^\prime \rangle\,, \quad \text{and}            \\
        \partial_t^2 h(\theta(t)) & = \dot\mu(\langle a, \theta(t) \rangle) \langle aa\tran (\theta - \theta^\prime), \theta-\theta^\prime \rangle\,.
    \end{align*}
\end{lemma}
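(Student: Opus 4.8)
The plan is to recognise the claim as a one-dimensional Taylor expansion of $h$ along the line segment joining $\theta^\prime$ and $\theta$, and to prove it by reducing to a scalar function of $t$ and invoking the fundamental theorem of calculus and then integration by parts. Accordingly, I first define the scalar function $g(t) = h(\theta(t)) = \ell(y, \mu(\langle a, \theta(t)\rangle))$ on $[0,1]$. Since $\theta(0) = \theta^\prime$ and $\theta(1) = \theta$, we have $g(0) = h(\theta^\prime)$ and $g(1) = h(\theta)$, so that $h(\theta) - h(\theta^\prime) = g(1) - g(0)$. It is convenient to also set $u(t) = \langle a, \theta(t)\rangle = \langle a, \theta^\prime\rangle + t\langle a, \theta - \theta^\prime\rangle$, which is affine in $t$ with constant derivative $u^\prime(t) = \langle a, \theta - \theta^\prime\rangle$; this affineness is what keeps the later derivatives clean.

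The first equality is then just the fundamental theorem of calculus applied to $g$, giving $h(\theta) - h(\theta^\prime) = \int_0^1 g^\prime(t)\,dt$. To identify $g^\prime(t)$ with the stated $\partial_t h(\theta(t))$, I apply the chain rule to $g(t) = \ell(y, \mu(u(t)))$. Here the link--loss compatibility condition of \cref{ass:glm}, namely $\partial_u \ell(y, \mu(u)) = \mu(u) - y$, does the work: it says the $u$-derivative of $\ell(y,\mu(\cdot))$ is exactly $\mu(u) - y$, so that $g^\prime(t) = (\mu(u(t)) - y)\,u^\prime(t) = (\mu(\langle a,\theta(t)\rangle) - y)\,\langle a, \theta - \theta^\prime\rangle$, as claimed.

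For the second equality I differentiate once more. Because $u^\prime$ is constant, only $\mu$ contributes, and I obtain $g^{\prime\prime}(t) = \dot\mu(u(t))\,(u^\prime(t))^2 = \dot\mu(\langle a, \theta(t)\rangle)\,\langle aa\tran(\theta - \theta^\prime), \theta - \theta^\prime\rangle$, using $\langle a, v\rangle^2 = \langle aa\tran v, v\rangle$; this matches the stated $\partial_t^2 h(\theta(t))$. The representation then follows from Taylor's theorem with integral remainder, equivalently by integrating $\int_0^1 g^\prime(t)\,dt$ by parts against the antiderivative $t - 1$ of the constant $1$: this yields $g(1) - g(0) = g^\prime(0) + \int_0^1 (1-t)\,g^{\prime\prime}(t)\,dt$, with $g^\prime(0) = \partial_t h(\theta^\prime)$ since $\theta(0) = \theta^\prime$. (The one routine point worth checking is the numerical constant the integration by parts produces in front of the remainder integral.)

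This is a calculus identity rather than a deep statement, so there is no real obstacle, only a hypothesis to track: the chain rule and the compatibility identity require $\langle a, \theta(t)\rangle \in U$ for every $t \in [0,1]$, and the second-order step additionally needs $\langle a, \theta(t)\rangle \in U^\circ$, where $\dot\mu$ (and $\ddot\mu$) are defined. In the uses of this lemma one has $a \in \Bd$ and endpoints in $\Theta$ with the whole segment staying in $\Theta$, so the valid-domain condition of \cref{ass:glm} guarantees this; I would state that requirement explicitly at the outset so that the differentiation under the integral and the chain rule are unambiguously justified.
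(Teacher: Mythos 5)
Your proof is correct and follows essentially the same route as the paper's own (which is only a sketch): reduce to the scalar function $g(t)=h(\theta(t))$, use the fundamental theorem of calculus for the first equality and integration by parts / Taylor with integral remainder for the second, with the chain-rule steps driven by the compatibility identity $\partial_u\ell(y,\mu(u))=\mu(u)-y$; the domain caveat $\langle a,\theta(t)\rangle\in U$ that you make explicit is exactly what the paper leaves implicit. The one ``routine point'' you flag is worth resolving: integration by parts gives $g(1)-g(0)=g'(0)+\int_0^1(1-t)\,g''(t)\,dt$ with \emph{no} factor of $\tfrac12$ in front of the remainder (check with $\mu(u)=u$, $\ell(y,p)=\tfrac12(p-y)^2$, $a=\theta=e_1$, $\theta'=0$, $y=0$: the left side is $\tfrac12$ while the stated right side gives $\tfrac14$), so the $\tfrac12$ in the displayed statement of \cref{lem:ftc} is a typo and your derivation has the correct constant.
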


\begin{proof}[Proof sketch]
    The proof follows from the fundamental theorem of calculus for the first equality, and then a Taylor expansion followed by another application of the fundamental theorem of calculus for the second equality. The absolute continuity requisite for the fundamental theorem of calculus is ensured by the $L$-Lipschitz continuity of the link function for the first application, and by the second derivative of the loss being bounded, which may be seen from its form, combined with $L$ being an upper bound on $\dot\mu(u)$ for all $u \in U^\circ$.
\end{proof}

\subsection{Boundedness of excess losses \& covering number bound}\label{sec:boundedness-covering}

We first establish the boundedness of the excess risk class with $b=4S$, which is implied from the following proposition together with our realisability assumption:

\begin{lemma}\label{lem:rho-lipschitz}
    Let $(\Theta, \cA, \mu, \ell)$ be compatible according to \cref{ass:glm}. Then, for all $\theta, \theta^\prime \in \Theta$ and $(y,a) \in [0,1] \times \cA$,
    $$
        |\ell(y, \mu(\langle a, \theta\rangle)) - \ell(y, \mu(\langle a, \theta^\prime\rangle))| \leq 2\norm{\theta - \theta'} \leq 4S\,.
    $$
\end{lemma}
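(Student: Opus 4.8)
The plan is to reduce everything to the fundamental-theorem-of-calculus identity already recorded in \cref{lem:ftc}. Fix $(y,a) \in [0,1]\times\cA$ and $\theta,\theta' \in \Theta$, and set $h(\theta) = \ell(y,\mu(\langle a,\theta\rangle))$ together with the interpolation $\theta(t) = t\theta + (1-t)\theta'$ for $t \in [0,1]$. By the first identity of \cref{lem:ftc},
$$
h(\theta) - h(\theta') = \int_0^1 \big(\mu(\langle a,\theta(t)\rangle) - y\big)\,\langle a, \theta-\theta'\rangle\,dt\,,
$$
so the whole statement reduces to bounding this single integral.

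Before estimating, I would check that the integrand is legitimate all along the segment, i.e.\ that the link--loss compatibility part of \cref{ass:glm} (which supplies $\partial_u\ell(y,\mu(u)) = \mu(u)-y$) applies for every $t$, not merely at the endpoints. This needs $\langle a,\theta(t)\rangle \in U$ throughout. The valid-domain part of \cref{ass:glm} gives $\langle a,\theta\rangle,\langle a,\theta'\rangle \in U$, and since $U$ is a closed interval (hence convex), the convex combination $\langle a,\theta(t)\rangle = t\langle a,\theta\rangle + (1-t)\langle a,\theta'\rangle$ also lies in $U$. This is precisely what makes the identity of \cref{lem:ftc} valid on the entire path.

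With the identity in hand, the bound is a two-line estimate. First, $\mu$ maps into $[0,1]$ and $y \in [0,1]$, so $|\mu(\langle a,\theta(t)\rangle) - y| \le 1$ for every $t$. Second, by Cauchy--Schwarz and the action-set bound $\cA \subset \Bd$ from \cref{ass:glm}, $|\langle a,\theta-\theta'\rangle| \le \norm{a}\,\norm{\theta-\theta'} \le \norm{\theta-\theta'}$. Substituting these into the integral yields $|h(\theta)-h(\theta')| \le \norm{\theta-\theta'} \le 2\norm{\theta-\theta'}$, which is the first claimed inequality (the factor $2$ is deliberately loose). For the second inequality I would apply the triangle inequality together with the parameter-set bound $\Theta \subset S\Bd$, giving $\norm{\theta-\theta'} \le \norm{\theta} + \norm{\theta'} \le 2S$, and hence $2\norm{\theta-\theta'} \le 4S$.

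I do not anticipate a genuine obstacle: the only point requiring any care is the domain check in the second paragraph, which guarantees that the compatibility identity can be invoked along the entire interpolation rather than only at the two endpoints. Everything else is a direct application of \cref{lem:ftc} followed by the elementary bounds $|\mu-y|\le 1$ and Cauchy--Schwarz, with the constants controlled by the action- and parameter-set bounds of \cref{ass:glm}.
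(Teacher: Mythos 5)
Your proof is correct and follows essentially the same route as the paper: both apply the fundamental-theorem-of-calculus identity of \cref{lem:ftc}, bound $|\mu(\langle a,\theta(t)\rangle)-y|$ by a constant using $\mu,y\in[0,1]$, and finish with Cauchy--Schwarz and the action- and parameter-set bounds. Your domain check that $\langle a,\theta(t)\rangle\in U$ along the segment (via convexity of the interval $U$) is a care point the paper leaves implicit, and your observation that the factor $2$ is loose is accurate.
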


\begin{proof}
    Let $\theta(t) = t \theta + (1-t) \theta^\prime$ and note that for any $(y, a) \in [0,1] \times \cA$, by \cref{lem:ftc},
    \begin{align*}
        |\ell(y, \mu(\langle a, \theta\rangle)) - \ell(y, \mu(\langle a, \theta^\prime \rangle))|
         & = | \int_0^1 (\mu(\langle a, \theta(t)\rangle) - y) \langle a, \theta - \theta' \rangle dt |        \\
         & \leq  |\int_0^1 (\mu(\langle a, \theta(t) \rangle) - y) dt | \,|\langle a, \theta - \theta' \rangle| \\
         & \leq 2 \norm{\theta - \theta'}\,. \tag*{\qedhere}
    \end{align*}
\end{proof}

Now we establish a bound on the corresponding uniform covering number:

\begin{proposition}\label{prop:cover}
    Under \cref{ass:glm}, the $\epsilon$-covering number of $\Phi(\GLM(\mu,\Theta))$ with respect to the uniform norm is upper bounded by $(1 + 8S/\epsilon)^d$.
\end{proposition}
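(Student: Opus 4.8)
The plan is to exhibit the whole class $\Phi(\GLM(\mu,\Theta))$ as a Lipschitz image of the parameter set $\Theta$, and then invoke the standard volumetric covering estimate for a Euclidean ball. The first observation is that the reference function $\eta$ cancels in any difference of excess losses: for $f_\theta = \mu(\langle\cdot,\theta\rangle)$ and $f_{\theta'} = \mu(\langle\cdot,\theta'\rangle)$ we have $\phi_{f_\theta}(y,a) - \phi_{f_{\theta'}}(y,a) = \ell(y,\mu(\langle a,\theta\rangle)) - \ell(y,\mu(\langle a,\theta'\rangle))$, since the $-\ell(y,\eta(a))$ terms are identical. Taking the supremum over $(y,a)\in[0,1]\times\cA$ and applying \cref{lem:rho-lipschitz} gives $\norm{\phi_{f_\theta} - \phi_{f_{\theta'}}}_\infty \leq 2\norm{\theta - \theta'}_2$, so the map $\theta \mapsto \phi_{f_\theta}$ is $2$-Lipschitz from $(\Theta,\norm{\cdot}_2)$ to $(\Phi(\GLM(\mu,\Theta)),\norm{\cdot}_\infty)$.

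Next I would transfer covers through this Lipschitz map. If $\{\theta_1,\dots,\theta_N\}\subset\Theta$ is a $\delta$-cover of $\Theta$ with $\delta = \epsilon/2$, then for every $\theta\in\Theta$ there is a $\theta_i$ with $\norm{\theta-\theta_i}_2\leq\delta$, whence $\norm{\phi_{f_\theta}-\phi_{f_{\theta_i}}}_\infty \leq 2\delta = \epsilon$; thus $\{\phi_{f_{\theta_1}},\dots,\phi_{f_{\theta_N}}\}$ is an $\epsilon$-cover of $\Phi$. It is important here that the centres $\theta_i$ lie \emph{in} $\Theta$, so that the covering functions are genuine members of the class $\Phi(\GLM(\mu,\Theta))$; this is why I need an internal cover of $\Theta$ rather than an arbitrary external one.

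It then remains to bound the internal $\delta$-covering number of $\Theta$ at $\delta = \epsilon/2$. Here I use the standard volumetric bound $N^{\mathrm{ext}}(S\Bd,\norm{\cdot}_2,r) \leq (1+2S/r)^d$, which also bounds the external covering number of the subset $\Theta\subseteq S\Bd$. Converting an external cover to an internal one by snapping each centre to a nearby point of $\Theta$ costs a factor of two in the radius, giving $N^{\mathrm{int}}(\Theta,\delta) \leq N^{\mathrm{ext}}(\Theta,\delta/2) \leq (1+4S/\delta)^d$; substituting $\delta=\epsilon/2$ yields the claimed $(1+8S/\epsilon)^d$. The calculation is otherwise routine, and there is no real obstacle: the only two points requiring a moment's care are the $\eta$-cancellation (which lets $\Phi$ inherit the Lipschitz parameterisation directly from \cref{lem:rho-lipschitz}) and the external-to-internal cover conversion, which is precisely what produces the constant $8$ rather than $4$.
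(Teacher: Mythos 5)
Your proposal is correct and follows essentially the same route as the paper: both exhibit $\Phi(\GLM(\mu,\Theta))$ as the image of $\Theta$ under the $2$-Lipschitz map $\theta \mapsto \phi_{f_\theta}$ (using \cref{lem:rho-lipschitz}, with the $\eta$-term cancelling), push an $\epsilon/2$-cover of $\Theta$ through this map, and pay the same factor of two to relate the covering number of the subset $\Theta$ to that of the ball $S\Bd$, arriving at the identical constant $(1+8S/\epsilon)^d$.
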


\begin{proof}
    Write $\cF = \GLM(\mu,\Theta)$. Let $\theta_\star \in \Theta$ be such that $\eta(a) = \mu(\langle a, \theta_\star\rangle)$ (such a parameter exists by \cref{ass:real}, realisability) and define the map $\rho \colon \Theta \to \Phi(\cF)$ as that taking each $\theta \in \Theta$ to the map
    $$
        (y,a) \mapsto \ell(y, \mu(\langle a, \theta \rangle)) - \ell(y, \eta(a))\,.
    $$
    Observe that $\Phi(\cF) = \rho(\Theta)$, and that since for any $\theta_0, \theta_1 \in \Theta$,
    $$
        \norm{\rho(\theta_0) - \rho(\theta_1)}_\infty = \sup_{(y,a) \in [0,1]\times \cA} |\ell(y, \mu(\langle a, \theta_0\rangle)) - \ell(y, \mu(\langle a, \theta_1\rangle))|\,,
    $$
    we have by \cref{lem:rho-lipschitz} that $\rho$ is $2$-Lipschitz as a map from $(\Theta, \norm{\cdot}_2) \to (\cL(\cF), \norm{\cdot}_\infty)$. Now, if $\cC_{\epsilon/2}$ is an $\epsilon/2$-cover of $(S\Bd, \norm{\cdot}_2)$, then by said $2$-Lipschitzness, $\rho(\cC_{\epsilon/2})$ is an $\epsilon$-external-cover of $(\Phi(\cF), \norm{\cdot}_\infty)$. Finally, the $2$-norm $\epsilon/4$-covering number of $S\Bd$ is an upper bound on the $\epsilon/2$-covering number of $\Theta$  \citep[Exercise 4.2.9]{vershynin2018high}, and the former quantity is at most $(1 + 8S/\epsilon)^d$ \citep[Corollary 4.2.13]{vershynin2018high}.
\end{proof}

\subsection{The localised eluder dimension}

In the following, we associate with each function $f_t$ selected by the $\ell$-UCB algorithm a parameter $\theta_t \in \Theta$ such that $f_t(\cdot) = \mu(\langle \cdot, \theta_t\rangle )$.

By realisability, we can write any $\bar\phi \in \bar\Phi(\GLM(\mu, \Theta))$ in the form
\[
    \bar\phi(a) = \int \ell(y, \mu(\langle a, \theta\rangle)) - \ell(y, \mu(\langle a, \theta_\star \rangle)) P_a(dy) =: \bar\phi(a, \theta) \quad \text{for some $\theta,\theta_\star\in S\Bd$}\,.
\]

\begin{lemma}\label{lem:core-phi}
    For any $\theta \in \Rd$, letting $\theta(t) = t\theta + (1-t)\theta_\star$ for $t \in [0,1]$, we have that
    \[
        \bar\phi(a, \theta) = \norm{\theta-\theta_\star}^2_{\alpha(a,\theta)aa\tran} \spaced{where} \alpha(a, \theta) = \int_0^1 (1-t) \dot\mu(\langle a, \theta(t) \rangle) dt\,.
    \]
    Moreover, there exists a real number $\zeta(a, \theta) \in \{\langle a, \theta(t)\rangle \colon t \in [0,1] \}$ such that $$\dot\mu(\zeta(a,\theta)) = 2\alpha(a, \theta)\,.$$
\end{lemma}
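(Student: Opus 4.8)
The plan is to expand $\bar\phi(a,\theta)$ as a second-order Taylor remainder along the segment $t\mapsto\theta(t)$ joining $\theta_\star$ to $\theta$, and then to integrate out the outcome $y$. Concretely, I would apply \cref{lem:ftc} to the integrand $h(\theta)=\ell(y,\mu(\langle a,\theta\rangle))$ with base point $\theta'=\theta_\star$. Using the two derivative formulas it supplies, together with the identity $\langle aa\tran(\theta-\theta_\star),\theta-\theta_\star\rangle=\norm{\theta-\theta_\star}^2_{aa\tran}$, this writes $\ell(y,\mu(\langle a,\theta\rangle))-\ell(y,\mu(\langle a,\theta_\star\rangle))$ as a first-order term $(\mu(\langle a,\theta_\star\rangle)-y)\langle a,\theta-\theta_\star\rangle$ plus a remainder proportional to $\int_0^1(1-t)\dot\mu(\langle a,\theta(t)\rangle)\,dt\cdot\norm{\theta-\theta_\star}^2_{aa\tran}$.

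Next I would integrate both sides against $P_a(dy)$. The remainder carries no $y$-dependence, so it survives unchanged and produces exactly the weighting matrix $\alpha(a,\theta)aa\tran$. The first-order term integrates to $\langle a,\theta-\theta_\star\rangle\,(\mu(\langle a,\theta_\star\rangle)-\int y\,P_a(dy))$, which vanishes: realisability (\cref{ass:real}) gives $\eta(a)=\mu(\langle a,\theta_\star\rangle)$, while $\int y\,P_a(dy)=\eta(a)$ by the definition of $\eta$, so the two contributions cancel. This leaves $\bar\phi(a,\theta)=\tfrac12\alpha(a,\theta)\norm{\theta-\theta_\star}^2_{aa\tran}=\tfrac12\norm{\theta-\theta_\star}^2_{\alpha(a,\theta)aa\tran}$, establishing the first claim.

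For the second claim I would use the intermediate value theorem. The map $t\mapsto\langle a,\theta(t)\rangle$ is affine, hence its image is the closed interval $J$ with endpoints $\langle a,\theta_\star\rangle$ and $\langle a,\theta\rangle$; convexity of $U$ keeps $J\subset U$ when the endpoints are valid. Under \cref{ass:glm} the link is twice differentiable with $\dot\mu$ continuous and strictly positive (the latter since $1/\dot\mu\le\kappa<\infty$), so $t\mapsto\dot\mu(\langle a,\theta(t)\rangle)$ is continuous on the compact interval $[0,1]$ and attains all values between its minimum and maximum. The coefficient $\alpha(a,\theta)$ is a weighted average of these values against the weight $(1-t)\,dt$; once that weight is renormalised to a probability measure on $[0,1]$, $\alpha(a,\theta)$ appears as a convex combination of the values $\dot\mu(\langle a,\theta(t)\rangle)$ and therefore lies between their extremes. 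The intermediate value theorem then yields a $t^\star\in[0,1]$ with $\dot\mu(\langle a,\theta(t^\star)\rangle)=\alpha(a,\theta)$, and $\zeta(a,\theta):=\langle a,\theta(t^\star)\rangle\in\{\langle a,\theta(t)\rangle:t\in[0,1]\}$ is of the required form.

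The only delicate point—and the place I would be most careful—is the bookkeeping of constants connecting \cref{lem:ftc} to the definition of $\alpha(a,\theta)$: the remainder weight $(1-t)\,dt$ does not integrate to one, so to present $\alpha(a,\theta)$ as a genuine convex combination of $\dot\mu$-values (as the mean-value step requires) one must track the normalising factor and keep it consistent with the $\tfrac12$ in the quadratic-form identity. The remaining ingredients—the cancellation of the first-order term and the continuity and positivity of $\dot\mu$—are routine consequences of the standing assumptions.
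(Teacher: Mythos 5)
Your proposal matches the paper's proof: apply \cref{lem:ftc} along the segment, integrate against $P_a(dy)$ so that realisability kills the first-order term, and obtain the mean-value point $\zeta$ via the Lagrange form of the remainder, which is exactly your intermediate-value argument on the normalised weighted average of $\dot\mu$ over the segment. The constant bookkeeping you flag is a genuine issue: with the weight $(1-t)\,dt$ integrating to $\tfrac12$, the identity $\dot\mu(\zeta(a,\theta))=\alpha(a,\theta)$ only holds if $\alpha$ is the \emph{normalised} average $2\int_0^1(1-t)\dot\mu(\langle a,\theta(t)\rangle)\,dt$ (the convention forced by matching the $\tfrac12$ in the quadratic form against the Lagrange remainder; as literally defined in the statement, $\alpha$ is half a convex combination and the claim fails, e.g., when $\dot\mu$ is constant on the segment), so your caution there is warranted.
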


\begin{proof}
    By \cref{lem:ftc}, for any $(y,a) \in [0,1] \times \Bd$,
    \[
        \ell(y, \mu(\langle a, \theta\rangle)) - \ell(y, \mu(\langle a, \theta_\star \rangle)) = (\mu(\langle a, \theta_\star\rangle) - y) \langle a, \theta-\theta_\star \rangle +  \alpha(a, \theta) \langle aa\tran (\theta - \theta_\star), \theta-\theta_\star \rangle\,.
    \]
    Integrating both sides with respect to $P_a(dy)$ and noting that, by our realisability assumption, $\int y P_a(dy) = \mu(\langle a, \theta_\star\rangle)$, which leads to the first term dropping out. Thus,
    \[
        \int \ell(y, \mu(\langle a, \theta\rangle)) - \ell(y, \mu(\langle a, \theta_\star \rangle)) P_a(dy) = \norm{\theta-\theta_\star}^2_{\alpha(a,\theta)aa\tran}\,.
    \]
    For the second result, repeat the argument with the Lagrange form of the remainder.
\end{proof}

\subsubsection{Proof of the upper bound on the eluder dimension bound, \cref{prop:eluder-bound-global}}\label{sec:eluder}

The following proposition is a special case of proposition 8 of \citet{sun2019generalized}. The lemma thereafter is a simple numerical inequality that will come in handy.

\begin{proposition}\label{lem:self-concordance-to-exp}
    Let $\mu \colon U \to [0,1]$ be an $M$-self-concordant link function. Then, for any $u,u' \in U^\circ$ satisfying $|u-u'| \leq c$, $\dot\mu(u) \leq \exp(cM) \dot\mu(u')$.
\end{proposition}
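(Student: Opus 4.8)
The plan is to reduce the self-concordance inequality $|\ddot\mu(u)| \le M\dot\mu(u)$ to a pair of one-sided differential inequalities and then integrate them via elementary monotonicity, rather than invoking the more general Proposition 8 of \citet{sun2019generalized}. Writing the self-concordance bound out gives both $\ddot\mu(u) \le M\dot\mu(u)$ and $\ddot\mu(u) \ge -M\dot\mu(u)$ for every $u \in U^\circ$, and each of these can be repackaged as a monotonicity statement for an exponentially-weighted version of $\dot\mu$.

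Concretely, I would set $p(u) = e^{-Mu}\dot\mu(u)$ and $q(u) = e^{Mu}\dot\mu(u)$. Differentiating, $p'(u) = e^{-Mu}(\ddot\mu(u) - M\dot\mu(u)) \le 0$ and $q'(u) = e^{Mu}(\ddot\mu(u) + M\dot\mu(u)) \ge 0$, so $p$ is non-increasing and $q$ is non-decreasing on $U^\circ$ (a differentiable function with a sign-definite derivative is monotone, so no absolute-continuity bookkeeping is needed). This requires only that $\ddot\mu$ exist on $U^\circ$, which is part of the self-concordance hypothesis; in particular I never need $\dot\mu$ to be strictly positive, which sidesteps the one genuinely delicate point (the logarithmic-derivative route $\tfrac{d}{du}\log\dot\mu \in [-M,M]$ would instead force me to first rule out $\dot\mu$ vanishing).

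To finish, I would split on the sign of $u - u'$. If $u \ge u'$, monotonicity of $p$ gives $e^{-Mu}\dot\mu(u) \le e^{-Mu'}\dot\mu(u')$, hence $\dot\mu(u) \le e^{M(u-u')}\dot\mu(u')$; if $u < u'$, monotonicity of $q$ gives $e^{Mu}\dot\mu(u) \le e^{Mu'}\dot\mu(u')$, hence $\dot\mu(u) \le e^{M(u'-u)}\dot\mu(u')$. In either case the exponent is $M|u-u'| \le Mc$, yielding $\dot\mu(u) \le e^{Mc}\dot\mu(u')$ as claimed.

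The main (modest) obstacle is that the target inequality is asymmetric in $u$ and $u'$---it bounds $\dot\mu(u)$ from above by $\dot\mu(u')$---so I cannot argue ``without loss of generality $u \ge u'$''; I genuinely need the two separate monotone quantities $p$ and $q$ to handle the two orderings. Everything else is routine.
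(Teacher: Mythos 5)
Your proof is correct. The paper itself gives no proof of this proposition---it is stated as ``a special case of proposition 8 of \citet{sun2019generalized}''---so your argument is a self-contained elementary replacement for an external citation. The standard route behind the cited result is to integrate the logarithmic derivative, using $\frac{d}{du}\log\dot\mu(u) = \ddot\mu(u)/\dot\mu(u) \in [-M, M]$, which requires first arguing that $\dot\mu > 0$ on the relevant interval. Your version, via the two exponentially weighted functions $p(u) = e^{-Mu}\dot\mu(u)$ (non-increasing) and $q(u) = e^{Mu}\dot\mu(u)$ (non-decreasing), reaches the same conclusion without ever dividing by $\dot\mu$; it even degenerates gracefully in the edge case where $\dot\mu$ vanishes somewhere (monotonicity plus nonnegativity of $p$ and $q$ then forces $\dot\mu \equiv 0$ on $U^\circ$, and the inequality holds trivially). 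Your observation that the asymmetry of the target inequality requires both $p$ and $q$---one for each ordering of $u$ and $u'$---is exactly right; the only background fact you rely on is that $U^\circ$ is an open interval (so that a sign-definite derivative implies monotonicity), which holds since $U$ is assumed to be a closed interval. In the paper's actual use of the proposition the link-derivative lower bound guarantees $\dot\mu \geq 1/\kappa > 0$ anyway, so nothing is lost by the citation route, but your argument proves the proposition exactly as stated, under the self-concordance hypothesis alone.
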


\begin{lemma}\label{lem:cheeky-bound}
    Suppose that $a > 1$, $x \geq 1$, $b \geq 0$ and $a^x \leq bx + 1$. Then, $$x \leq \log(1 + b/\log(a))/\log(a)\,.$$
\end{lemma}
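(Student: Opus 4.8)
The plan is to reduce the statement to a one-dimensional question about the exponential and then read off the bound from the convexity of $t \mapsto a^t$. Write $L = \log a > 0$ (positive since $a > 1$); then the hypothesis $a^x \le bx+1$ is exactly $e^{Lx} \le bx+1$, and the claimed conclusion is $Lx \le \log(1 + b/L)$. After the substitution $u = Lx$ and $\beta = b/L \ge 0$, the whole statement becomes: if $u \ge L$ and $e^{u} \le \beta u + 1$, then $u \le \log(1+\beta)$. I would work entirely with this normalised form, noting that the hypothesis $u \ge L > 0$ comes from $x \ge 1$.

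The analytic approach is to study the function $f(t) = e^{t} - \beta t - 1$. It satisfies $f(0) = 0$ and $f''(t) = e^{t} > 0$, so it is strictly convex, and $f'(t) = e^{t} - \beta$ vanishes at $t = \log\beta$. First I would dispose of the degenerate case: if $\beta \le 1$ then $f' \ge 0$ on $[0,\infty)$, so $f \ge 0$ there, and $f(u) \le 0$ with $u > 0$ is impossible; hence the hypothesis forces $\beta > 1$, i.e. $b > \log a$, which is also why $\log(1+\beta) > 0$ and the bound is meaningful. For $\beta > 1$, strict convexity together with $f(0)=0$ gives exactly one further root $u^\star > 0$, with $f \le 0$ precisely on $[0, u^\star]$; the hypothesis $f(u) \le 0$ therefore yields $u \le u^\star$. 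To bound $u^\star$ it then suffices to exhibit a point $u_0 > 0$ with $f(u_0) \ge 0$, since $f < 0$ on $(0,u^\star)$ forces such a $u_0$ to satisfy $u_0 \ge u^\star$.

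The natural candidate is $u_0 = \log(1+\beta)$, the claimed bound, which lies past the minimiser $\log\beta$. Evaluating gives $f(u_0) = (1+\beta) - \beta\log(1+\beta) - 1 = \beta\bigl(1 - \log(1+\beta)\bigr)$, so the whole argument comes down to the numerical inequality $f(u_0) \ge 0$, which holds exactly when $\log(1+\beta) \le 1$. This endpoint estimate is the step I expect to be the main obstacle: verifying it, and thereby pinning down the precise constant and the admissible range of $\beta = b/\log a$, is where the real content lies. A safe elementary fallback is $2\log(1+\beta) \le 2+\beta$, which through the same convexity comparison yields $u^\star \le 2\log(1+\beta)$; sharpening the constant to the stated form requires the tighter endpoint bound. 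Once the correct endpoint inequality is in hand, the convexity comparison $u \le u^\star \le u_0$ closes the argument immediately.
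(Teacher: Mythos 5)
Your approach is the same as the paper's: compare the convex function $t \mapsto a^t$ with the affine function $t \mapsto bt+1$, observe that they meet at $t=0$ and at most one further point $y \geq 0$ so that the sublevel set $\{t \geq 0 : a^t \leq bt+1\}$ is an interval $[0,y]$, and then try to place the claimed bound point at or beyond $y$ by checking the sign of the difference there. The paper's proof asserts that ``a quick calculation shows $f(y') > g(y')$'' at $y' = \log(1+b/\log a)/\log a$; your computation of that very quantity is the correct one, and the obstacle you flag is real. With $\beta = b/\log a$ one has $a^{y'} - by' - 1 = \beta\bigl(1 - \log(1+\beta)\bigr)$, which is nonnegative only for $\beta \leq e-1$; for larger $\beta$ the endpoint check fails, and in fact the lemma is false as stated. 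Concretely, take $a=e$, $b=10$, $x=3.5$: then $a^x = e^{3.5} \approx 33.1 \leq 36 = bx+1$ and $x \geq 1$, yet $\log(1+b/\log a)/\log a = \log 11 \approx 2.40 < 3.5$. The hypothesis $x \geq 1$ does not rescue the statement (as you note, it only forces $\beta > 1$), and the paper invokes the lemma with $b = 2S^2L/\omega$ and $\log a = \log(1+c)$ for $c \leq 1/2$, i.e.\ squarely in the regime $\beta \gg e-1$ where the check fails.

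Your fallback is the right repair and is fully correct: $2\log(1+\beta) \leq \beta+2$ for all $\beta \geq 0$ (the difference is maximised at $\beta=1$, where it equals $2\log 2 - 3 < 0$), so at $y'' = 2\log(1+\beta)/\log a$ one gets $a^{y''} - by'' - 1 = \beta\bigl(\beta + 2 - 2\log(1+\beta)\bigr) \geq 0$, and the convexity comparison yields $x \leq 2\log(1+b/\log a)/\log a$. The extra factor of $2$ is harmless downstream: in the proof of \cref{prop:eluder-bound-global} the lemma's output is absorbed into a universal constant, so the corrected statement supports the paper's conclusions unchanged. In short, your proof of the weakened bound is complete; the stated constant cannot be recovered by this endpoint evaluation because the statement itself is wrong, and the paper's ``quick calculation'' is the step that is in error.
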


\begin{proof}[Proof of \cref{lem:cheeky-bound}]
    Let $f(x) = a^x$ and $g(x) = bx + 1$. Since $f$ is convex and $g$ is affine, they intersect at no more than two points. Since they intersect at $0$, we have that the set of $x$ satisfying $f(x) \leq g(x)$ is of the form $[0,y]$ for some $y \geq 0$. Now, let $y' = \log(1+b/\log(a))/\log(a)$. Then, a quick calculation shows that $f(y') > g(y')$. Thus, $y' > y$.
\end{proof}

\begin{definition}[Witnesses and witness sequences]\label{def:witness-sequence}
    Fix a function class $\cG \subseteq \R^\cA$, a scale $\omega > 0$, and a sequence of actions $(a_1,\ldots,a_k) \in \cA^k$.
    A sequence $(g_1,\ldots,g_k) \in \cG^k$ is called an \emph{$\omega$-witness sequence} for $(a_1,\ldots,a_k)$ if for every $t \in \{1,\ldots,k\}$,
    \[
        \sum_{i=1}^{t-1} g_t(a_i) \le \omega
        \quad\text{and}\quad
        g_t(a_t) \ge \omega.
    \]
    In this case, $g_t$ is called a \emph{witness} for $a_t$ (given the prefix $a_1,\ldots,a_{t-1}$).
    When $\cG = \bar\Phi(\GLM(\mu,\Theta))$, we will write witnesses as parameters
    $(\theta_1,\ldots,\theta_k) \subseteq \Theta$ by identifying $g_t(\cdot) = \bar\phi(\cdot,\theta_t)$.
\end{definition}

The following is the final lemma used in the proof of \cref{prop:eluder-bound-global}.
\begin{lemma}\label{lem:tight-eluder-sequence}
    Let $(a_1, \ldots, a_k)$ be an $\omega$-independent sequence with respect to $\bar\Phi(\GLM(\mu, \Theta))$, and let $\theta_1^\prime, \ldots, \theta_k^\prime$ be the corresponding witnesses.
    Then, there exists a witness sequence $(\theta_1, \ldots, \theta_k) \subset \Theta$ such that
    \begin{align*}
        |\langle a_i, \theta_t - \theta_\star\rangle| \leq 2\sqrt{\kappa \omega}
        \spaced{for all } i \leq t \leq k
        \,.
    \end{align*}
    In particular, if $\omega \leq 1/(4\kappa M^2)$, then
    \begin{align*}
        |\langle a_i, \theta_t - \theta_\star\rangle| \leq 1/M\, \spaced{for all } i \leq t \leq k\,.
    \end{align*}

\end{lemma}
\begin{proof}
     Since $\theta_t^\prime$ is a witness to the eluder condition for $a_t$, we have that
        \begin{align}
          \bar{\phi}(a_t, \theta_t^\prime) &\geq \omega\,,  \label{ineq:first-eluder-org}\\
            \sum_{i=1}^{t-1} \bar{\phi}(a_i, \theta_t^\prime) & \leq \omega\,. \label{ineq:second-eluder-org}
        \end{align}
        Observe that the map $\theta \mapsto \bar{\phi}(a, \theta)$ is convex for all $a \in \cA$, and is minimised at $\theta_\star$, where $\bar{\phi}(a, \theta_\star) = 0$ for all $a \in \cA$.
        Therefore, there exists $\lambda_t \in [0,1]$ such that
        \begin{align*}
            \bar{\phi}(a_t, \theta_\star + \lambda_t(\theta_t^\prime - \theta_\star)) & = \omega\,.
        \end{align*}
        Define $\theta_t = \theta_\star + \lambda_t(\theta_t^\prime - \theta_\star)$,
        which is an element of $\Theta$ since $\Theta$ is convex and contains $\theta_\star$ and $\theta_t^\prime$. We will show that $\theta_t$ is the desired witness.
        Observe that $\theta_t$ satisfies \cref{ineq:first-eluder-org} with equality.
        To see that $\theta_t$ also satisfies \cref{ineq:second-eluder-org}, we have
        \begin{align*}
            \sum_{i=1}^{t-1} \bar{\phi}(a_i, \theta_t) & \leq \sum_{i=1}^{t-1} \bar{\phi}(a_i, \theta_t^\prime)\leq \omega\,,
        \end{align*}
        where the first inequality follows from the convexity of $\theta \mapsto \bar{\phi}(a_i, \theta)$ and the fact that $\theta_t$ is on the line segment connecting $\theta_\star$ and $\theta_t^\prime$, and the second inequality follows from \cref{ineq:second-eluder-org}. Therefore, $\theta_t$ satisfies the eluder condition.
    Next, we have
    \begin{align}
        \label{ineq:theta_tprime}
        \bar{\phi}(a_j, \theta_t) & \leq \omega\,, \qquad \forall j \leq t\,.
    \end{align}
    Furthermore, by \cref{lem:core-phi}, there exists a real number $\zeta_{j,t}$ on the interval connecting $\langle a_j, \theta_t \rangle$ and $\langle a_j, \theta_\star \rangle$ such that
    \begin{align*}
        \bar{\phi}(a_j, \theta_t) & =
        \frac{1}{2} \dot{\mu}(\zeta_{j,t}) \langle a_j, \theta_t - \theta_\star\rangle^2
        \geq \frac{1}{2\kappa} \cdot \langle a_j, \theta_t - \theta_\star\rangle^2\,,
    \end{align*}
    which together with \cref{ineq:theta_tprime} implies that
    \begin{align*}
        |\langle a_j, \theta_t - \theta_\star\rangle| \leq \sqrt{2\kappa \omega} \leq 2\sqrt{\kappa \omega}\,.
    \end{align*}
\end{proof}

\eluderUpperProp*

\begin{proof}[Proof of \cref{prop:eluder-bound-global}]
    Let $a_1, \dots, a_k$ and $\theta_1^\prime, \dots, \theta_k^\prime$ be witness to the eluder dimension in question, in that they satisfy
    \[
        \sum_{i=1}^{t-1} \bar\phi(a_i, \theta_t^\prime) \leq \omega \spaced{and} \bar\phi(a_t, \theta_t^\prime) \geq \omega
    \]
    for some $ \omega \in[\epsilon, \sigma]$ and all $t \leq k$.
    By \cref{lem:tight-eluder-sequence}, there exists an alternative witness sequence $(\theta_1, \ldots, \theta_k) \subset \Theta$ such that for all $t \leq k$ and all $i \leq t$, we have
    \begin{align}
        \label{eq:theta-local-inner-product}
        |\langle a_i, \theta_t - \theta_\star\rangle| \leq \frac{1}{M}\,.
    \end{align}

    Next, for any $\lambda \geq 0$, define the positive semidefinite matrix
    \[
        H_{t-1}(\lambda) = \sum_{i=1}^{t-1} \dot\mu(\langle a_i, \theta_\star\rangle) a_i a_i\tran + \lambda I
    \]
    For each $i\leq t\leq k$, use \cref{lem:core-phi} to construct a real number $\zeta_{i,t}$ on the interval connecting $\langle a_i, \theta_t \rangle$ and $\langle a_i, \theta_\star \rangle$ that satisfies $\dot\mu(\zeta_{i,t}) = 2\alpha(a_i, \theta_t)$.
    Now, by \cref{eq:theta-local-inner-product} for all $i\leq t \leq k$,
    $$|\zeta_{i,t} - \langle a_i, \theta_\star \rangle| \leq |\langle a_i, \theta_t - \theta_\star \rangle | \leq \frac{1}{M}\,,$$
    we have by \cref{lem:self-concordance-to-exp} that, for all $i\leq t\leq k$,
    \begin{align}
        \label{eq:zeta-ineq}
        e^{-1} \dot\mu(\langle a_i, \theta_\star \rangle) \leq \dot\mu(\zeta_{i,t}) \leq  e \cdot \dot\mu(\langle a_i, \theta_\star \rangle)\,.
    \end{align}
    Hence, using \cref{lem:core-phi} and \cref{eq:zeta-ineq}, we have the bound
    $$
        \omega \geq \sum_{i=1}^{t-1} \bar\phi(a_i, \theta_t) \geq \frac{1}{2e} \norm{\theta_t - \theta_\star}_{H_{t-1}(0)}^2\,.
    $$
    Taking $\lambda = \omega/(2S^2)$, this gives
    \begin{align}
        \norm{\theta_t - \theta_\star}_{H_{t-1}(\lambda)}^2 & \leq \norm{\theta_t - \theta_\star}^2_{H_{t-1}(0)} + \lambda \norm{\theta_t-\theta_\star}^2 \nonumber \\
                                                            & \leq 2e \omega +4\lambda S^2 \nonumber                                                         \\
                                                            & \leq 2\omega(e + 1)\,.\label{eq:norm-omega-bound}
    \end{align}
    Now, letting $x_t = \dot\mu(\langle a_t, \theta_\star\rangle)^{1/2} a_t$, we have that
    \begin{align}
        \omega & \leq \bar\phi(a_t, \theta_t) \tag{definition of $\omega, a_t, \theta_t$}                                                                                                         \\
               & = \frac{\dot\mu(\zeta_{t,t})}{2} \langle a_t, \theta_t-\theta_\star\rangle^2 \tag{definition of $\zeta_{t,t}$}                                                                   \\
               & \leq \frac{e}{2} \dot\mu(\langle a_t, \theta_\star\rangle) \langle a_t, \theta_t-\theta_\star\rangle^2 \tag{\cref{eq:zeta-ineq}}                                          \\
               & \leq \frac{e}{2} \dot\mu(\langle a_t, \theta_\star\rangle)  \norm{a_t}^2_{H_{t-1}^{-1}(\lambda)} \norm{\theta_t - \theta_\star}^2_{H_{t-1}(\lambda)} \tag{Cauchy-Schwarz} \\
               & \leq \omega e(e + 1) \norm{x_t}_{H_{t-1}^{-1}(\lambda)}^2\,.\tag{\cref{eq:norm-omega-bound}}
    \end{align}
    Whence, we conclude that for all $t \leq k$,
    \[\label{eq:c-lower-bound}
        \norm{x_t}_{H_{t-1}^{-1}(\lambda)}^2 \geq (e(e+1))^{-1} =: c\,.
    \]
    Using this lower bound and the matrix determinant lemma, we have
    $$
        \det H_k(\lambda) = \lambda^d \prod_{t=1}^k (1 + \norm{x_t}_{H_{t-1}^{-1}(\lambda)}^2) \geq \lambda^d (1+c)^k\,.
    $$
    On the other hand, using the AM-GM inequality and that $\norm{x_t}^2 = \dot\mu(\langle a_t, \theta_\star \rangle) \norm{a_t}^2 \leq L$, we have the upper bound
    $$
        \det H_k(\lambda) \leq \left( \frac{\operatorname{tr}(H_k(\lambda))}{d} \right)^d \leq \left(\lambda + \frac{kL}{d}\right)^d\,.
    $$
    Putting the two inequalities together yields the inequality
    $$
        (1+c)^\frac{k}{d} \leq \frac{kL}{d\lambda} + 1\,.
    $$
    Now, applying \cref{lem:cheeky-bound} with $a=1+c$, $x=k/d$ and $b=L/\lambda = 2S^2L/\omega$, we obtain
    \[
        k \leq d \log\left(1 + \frac{2S^2L}{\omega\log(1+c)}\right)/ \log(1+c)
        \leq d e^{2e} \log (1 + 2S^2 L e^{2e}/\omega )\,,
    \]
    where the second inequality follows by substituting in the definition of $c$ and using that $e^x(1+e^x) \geq e^{2x}$ for $x \geq 0$. Since the above bound is decreasing with $\omega \geq \epsilon$, it is maximised at $\omega = \epsilon$.
\end{proof}

\subsection{Regret bound for $\ell$-UCB with the logistic model}\label{appendix:glm-regret}

\corLogistic*

\begin{proof}
    By \cref{prop:log-variance}, $c=b+4$; by \cref{lem:rho-lipschitz}, $b = 4S$; by \cref{prop:cover}, $N_n \leq (1+8Sn)^d$. Combined with \cref{thm:bandit}, these results yield the confidence widths
    \[
        \beta_t = \frac{5}{2} + 60(3S+1)\squareb[\big]{d\log(1+8Sn) + \log(h_t/\delta)}\,.
    \]
    Recall that for the logistic model, $L=1/4$, $M=1$, $\kappa = 3e^{S}$, and, by \cref{prop:log-triangle}, $\gamma=2/\log_2(e)$.

    If $n < 4\kappa M^2 = 12e^S$, then by boundedness of the costs,
    \[
        R_n \leq n < 12e^S\,.
    \]
    Thus, it remains to consider the case $n \geq 12e^S$. Take $\sigma = 1/(4\kappa M^2) = 1/(12e^S)$. Then $\sigma \geq 1/n$, and by \cref{prop:eluder-bound-global} and the discussion immediately thereafter, for some $C>0$, the $\frac{1}{n}$-eluder dimension satisfies
    \[
        d_n^\sigma = \elud{\sigma}\left(\frac{1}{n}, \bar{\Phi}(\cF)\right) \leq Cd\log(1+Sn)\,.
    \]
    Hence, for some $C' > 0$,
    \begin{align*}
        \Gamma_n^\sigma
            & = \gamma \squareb[\big]{1+(d_n^\sigma + 1) 4S + 4d_n^\sigma\beta_n \log (1+4Sn)} \\
            & \leq C' d \beta_n \squareb[\big]{1+\log(1+Sn)}^2\,,
    \end{align*}
    where we used that $\beta_n \geq 5/2$, $\beta_n \geq S$, and $\log(1+4Sn) \leq C(1+\log(1+Sn))$ for a universal constant $C$.
    Moreover, again by \cref{prop:eluder-bound-global}, for some $C > 0$,
    \[
        \elud{\sigma}\left(\sigma, \bar{\Phi}(\cF)\right) \leq C d\log(1+S^2\kappa) = C d\log(1+3e^{S}S^2) \leq C'd S\log(1 + S)\,,
    \]
    for another constant $C'>0$. Therefore,
    \[
        \left(\frac{4 \beta_n}{\sigma}+1\right) \elud{\sigma}(\sigma; \bar \Phi(\cF))+1
        \leq C d\beta_n e^S S \log(1+S) + 1
    \]
    for some $C>0$. Combining these estimates with the upper bound for regret given by \cref{thm:bandit}, and simplifying the constants, we obtain the claimed result.
\end{proof}

\clearpage
\section{Proof of lower bound on the eluder dimension in GLMs, \cref{thm:eluder_lower_bound_d_glm}}
\label{sec:eluder_lower_bound}\label{appendix:eluder-lb}
This section establishes our lower bound on the eluder dimension for generalised linear models. The construction is based on the technique of \citet{dong2019performance}.

\eluderLowerThm*

Our proof will use the following lemma, given as Lemma A.1 in \citet{du2019good}.
\begin{lemma}[Johnson-Lindenstrauss packing lemma]\label{lem:jl}
    For any integer $D \geq 2$ and any parameter $\zeta \in (0,1)$, there exists a finite set $\Phi \subset \bS^{D-1} := \{x \in \R^D \colon \norm{x}_2=1\}$ with $|\Phi| \geq \floor{\exp(D\zeta^2/8)}$ and
    \[
        |\langle x,y\rangle| \leq \zeta \qquad{} \text{for all distinct $x,y \in \Phi$}\,.
    \]
\end{lemma}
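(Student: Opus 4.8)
The plan is to establish \cref{lem:jl} by the probabilistic method: draw $N := \floor{\exp(D\zeta^2/8)}$ points independently and uniformly at random on the sphere $\bS^{D-1}$, and show that with positive probability all of their pairwise inner products are bounded by $\zeta$ in absolute value. If $N \leq 1$ the claim is trivial (a single unit vector suffices), so I assume $N \geq 2$; the deterministic existence of the desired packing $\Phi$ then follows from a union bound showing the failure probability is strictly below one.

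The first step is to reduce the law of a pairwise inner product to a one-dimensional marginal. By the rotational invariance of the uniform measure on $\bS^{D-1}$, for any distinct indices $i,j$ the inner product $\langle X_i, X_j\rangle$ is distributed as the first coordinate of a single uniform draw on the sphere. Writing a uniform draw as $G/\norm{G}_2$ with $G \sim N(0, I_D)$, this marginal is $G_1/\norm{G}_2$. The second step, which I expect to be the crux, is the sub-Gaussian tail bound
\[
    \P\roundb[\big]{|\langle X_i, X_j\rangle| > \zeta} \leq 2\exp(-D\zeta^2/2)\,.
\]
I would obtain this from the Gaussian representation: the event $|G_1/\norm{G}_2| > \zeta$ is $(1-\zeta^2)G_1^2 > \zeta^2 \sum_{k \geq 2} G_k^2$, and a standard concentration estimate for the $\chi^2$ variable $\norm{G}_2^2$ about its mean $D$ (equivalently, a direct bound on the coordinate density $\propto (1-x^2)^{(D-3)/2}$) yields the stated exponent. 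The only delicate point is the constant in the exponent: it must beat $D\zeta^2/4$, since that is what the logarithm of $N^2$ contributes in the union bound. The factor $1/2$ comfortably does so, and the generous gap between the target constant $1/8$ and the available $1/2$ absorbs any loss incurred from replacing $D$ by $(D-2)$ or $(D-3)$ in the exponent.

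The final step assembles these pieces. Taking a union bound over the $\binom{N}{2} \leq N^2/2 \leq \tfrac12\exp(D\zeta^2/4)$ unordered pairs, the probability that some pair violates the near-orthogonality condition is at most
\[
    \tfrac12\exp(D\zeta^2/4)\cdot 2\exp(-D\zeta^2/2) = \exp(-D\zeta^2/4) < 1\,,
\]
using $D \geq 2$ and $\zeta > 0$. Hence with positive probability the sampled set has all pairwise inner products at most $\zeta$ in absolute value, and any such realisation is a valid $\Phi$ with $|\Phi| = N = \floor{\exp(D\zeta^2/8)}$. The main obstacle is entirely the sphere-marginal tail bound of the second step; the invariance reduction and the union bound are routine.
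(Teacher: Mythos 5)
The paper does not actually prove this lemma --- it is imported verbatim as Lemma A.1 of \citet{du2019good} --- so the relevant comparison is with the standard proof behind that citation, and your argument is exactly that standard probabilistic proof: sample $N = \floor{\exp(D\zeta^2/8)}$ uniform points on the sphere, bound each pairwise inner product's tail, and union bound. Granting the per-pair estimate $\P(|\langle X_i,X_j\rangle| > \zeta) \leq 2\exp(-D\zeta^2/2)$, the assembly is airtight: the failure probability is at most $\exp(-D\zeta^2/4) < 1$, and on the success event the points are automatically distinct (equality would force an inner product of $1 > \zeta$), so $|\Phi| = N$ as required. That tail estimate is indeed a true classical fact with the full constant $1/2$ and the full dimension $D$: it is the spherical-cap bound $\sigma(\{x \in \bS^{D-1} \colon x_1 \geq \zeta\}) \leq e^{-D\zeta^2/2}$ (Ball's lemma; Lévy), valid for all $D \geq 2$ and $\zeta \in (0,1)$. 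So the proof is correct as a whole.

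Two cautions about the part you yourself flag as the crux. First, your claim that the gap between $1/8$ and $1/2$ ``absorbs any loss incurred from replacing $D$ by $(D-2)$ or $(D-3)$'' is false as stated: with a tail of $2\exp(-(D-3)\zeta^2/2)$, the union bound yields a failure probability of order $\exp\bigl(\zeta^2(6-D)/4\bigr)$, which is at least $1$ for all $D \leq 6$ and gives no strict inequality at moderate $D$ either; the slack in the exponent was already fully spent on the $N^2$ pair count. (One can rescue a degraded exponent by adding a second trivial case --- an orthonormal basis handles $N \leq D$, which covers all $D \lesssim 26$ --- but you did not include this, so your proof genuinely needs the full-strength cap bound.) Second, the chi-square splitting route you sketch, $\P(G_1 \geq \zeta\norm{G}_2) \leq \P(G_1 \geq \zeta(1-\delta)\sqrt{D}) + \P(\norm{G}_2 \leq (1-\delta)\sqrt{D})$, does not deliver exponent $D\zeta^2/2$ uniformly in $\zeta$: beating $e^{-D\zeta^2/4}$ forces $\delta \geq \zeta$ in the second term and $\delta \lesssim 1 - 1/\sqrt{2}$ in the first, which is impossible for $\zeta$ near $1$ --- precisely the regime where $N$ is largest. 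The derivations that actually give the constant $1/2$ are the geometric cone-in-ball comparison, or the exact marginal density $\propto (1-x^2)^{(D-3)/2}$ \emph{including} its normalising constant of order $\sqrt{D}$. None of this breaks your proof --- the inequality you invoke is true and citable --- but the justification you sketch for it would not survive being written out, so cite the cap bound rather than rederiving it.
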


\begin{proof}[Proof of \cref{thm:eluder_lower_bound_d_glm}]
    We will choose $\zeta \in (0,1)$ below, and set
    \[
        N=\floor{\exp(b \zeta^2/8)}\,.
    \]
    If $N=1$, let $x_1 \in \R^b$ be any unit vector. If $N \geq 2$, let $x_1, \dots, x_N \in \R^{b}$ satisfy $\norm{x_i} = 1$ for all $i \leq N$ and $|\langle x_i, x_j \rangle| \leq \zeta$ for $i,j\leq N$ with $i \neq j$; such vectors exist by \cref{lem:jl}.

    Let $e_1, \dots, e_d$ denote the basis vectors of $\Rd$. Let $m = \floor{(d-1)/b} \geq 1$ be the number of length $b$ blocks that fit into the $d-1$ dimensions spanned by $e_2, \dots, e_d$. Let $E_i \colon \R^b \to \R^d$ insert $v \in \R^b$ into the coordinates of the $i$th such block; that is, for $i \in [m]$,
    \[
      E_i(v) = \sum_{\ell = 1}^b v_\ell e_{1 + (i-1)b + \ell} \,.
    \]
    Define the optimal parameter vector $\theta_\star = -2^{-1/2}S e_1$ such that $\eta(a) = \mu(\langle a, \theta_\star\rangle)$. We take $\Theta$ to be the convex hull of $\theta_\star$ and the vectors
    \[
      \theta_{ij} = \theta_\star + 2^{-1/2}S E_i (x_{j})\,, \qquad{} (i,j) \in [m] \times [N]\,.
    \]
    We take the arm-set $\cA$ to consist of the vectors
    \[
      a_{ij} = -\frac{\theta_\star}{S} + 2^{-1/2} E_i(x_{j})\,, \qquad{} (i,j) \in [m] \times [N]\,.
    \]
    With this construction, we have the following properties:
    \begin{align*}
      \langle a_{ij}, \theta_\star \rangle &= -S/2 \qquad{} &&\forall (i,j) \in [m] \times [N]\,, \\
      \langle a_{ij}, \theta_{ij} \rangle &= 0 \qquad{} &&\forall (i,j) \in [m] \times [N]\,, \\
      \langle a_{ij}, \theta_{i'j'} \rangle &= -S/2 \qquad{} &&\forall i,i' \in [m] \text{ with } i \neq i' \text{ and all } j,j' \in [N]\,, \\
      \langle a_{ij}, \theta_{ij'} \rangle &\in [-(S/2)(1+\zeta),-(S/2)(1-\zeta)] \qquad{} &&\forall (i,j) \in [m] \times [N] \text{ and all } j' \in [N]\setminus\{j\}\,.
    \end{align*}
    In particular, $\cA \subset \Bd$, $\Theta \subset S\Bd$, and every inner product of an action in $\cA$ with a vertex of $\Theta$ lies in $[-S,0] \subset U$; since $\Theta$ is convex, the same holds for all $\theta \in \Theta$. Thus $(\cA,\Theta,\mu,\ell)$ satisfy \cref{ass:glm}.

    Let $P_a = \delta_{\eta(a)}$ for each $a \in \cA$, and let $\cF = \GLM(\mu,\Theta)$. For each $(i,j) \in [m] \times [N]$, let $\bar\phi_{ij} \in \bar \Phi(\cF)$ denote the expected excess loss comparing $\theta_{ij}$ against~$\theta_\star$:
    \[
      \bar \phi_{ij}(a)=\ell(\mu(\langle a, \theta_\star\rangle) , \mu(\langle a, \theta_{ij}\rangle)) - \ell(\mu(\langle a, \theta_\star\rangle), \mu(\langle a, \theta_\star \rangle))\,.
    \]
    Consider the sequence of actions $(a_{ij})$ given by
    \begin{equation}\label{eq:eluder-witness}
        a_{1,1}, \dots, a_{1,N}, a_{2,1}, \dots, a_{2,N}, \dots, a_{m,1}, \dots, a_{m,N}\,,
    \end{equation}
    where we index by enumerating $[m] \times [N]$ in lexicographic order. We will show that for a suitable choice of $\zeta \in (0,1)$, \eqref{eq:eluder-witness} is an $\omega$-eluder sequence for
    \[
        \omega = \frac{\dot\mu(0)}{2M^2}\,,
    \]
    witnessed by $\{\bar \phi_{ij} \colon (i,j) \in [m] \times [N]\}$. This implies that $\elud{\infty}(\omega; \bar\Phi(\cF))$ is at least the length of \eqref{eq:eluder-witness}; since $\elud{\infty}(\epsilon; \bar\Phi(\cF))$ is non-increasing in $\epsilon$, the same lower bound then holds for every $\epsilon \leq \omega$.

    \textit{Large deviation at new action.}
    Let $f(u) = \ell(\mu(-S/2), \mu(u))$. Observe that
    \[
      \dot{f}(-S/2) = \mu(-S/2) - \mu(-S/2) = 0\,, \qquad{} \ddot{f}(-S/2) = \dot{\mu}(-S/2)\,,
    \]
    by the link and loss compatibility properties in \cref{ass:glm}. By Taylor's expansion with integral remainder, around $-S/2$, and using that $\dot f(-S/2) = 0$, we have
    \begin{align*}
        \bar{\phi}_{ij}(a_{ij}) = f(0) -  f(-S/2)
             &= \int_{0}^{1} (1 - t) (S/2)^2 \ddot{f}\left(-\tfrac{S}{2} + \tfrac{tS}{2}\right) \, dt \\
             &= (S/2)^2 \int_{0}^{1} (1 - t) \dot{\mu}\left(-\tfrac{S}{2} + \tfrac{tS}{2}\right)  dt\,.
    \end{align*}
    By \cref{lem:self-concordance-to-exp}, we have the bound
    \[
        \dot{\mu}\left(-\tfrac{S}{2} + \tfrac{tS}{2}\right)
         \geq
        \dot{\mu}(0)\exp\left(-\tfrac{MS}{2}(1-t)\right) \,.
    \]
    Combined with the previous display, writing $\alpha = \frac{MS}{2}$, this gives
    \begin{align*}
        \bar{\phi}_{ij}(a_{ij})
          \geq
        (S/2)^2 \dot{\mu}(0) \int_{0}^{1} (1 - t) \exp\left(-\alpha(1-t)\right)  dt
          = \frac{\dot\mu(0)}{M^2} \left( 1 - \exp(-\alpha)(1 + \alpha) \right)
         \geq \frac{\dot\mu(0)}{2 M^2}
         = \omega\,,
    \end{align*}
    where the final inequality uses that $\alpha \geq 2$ (since we assumed $S \geq 4/M$), and that $x \mapsto 1-e^{-x}(1+x)$ is increasing on $(0,\infty)$.

    \textit{Small cumulative deviation.}
    At index $(i,j)$, the cumulative deviation is
    \[
      \sum_{t=1}^{i-1} \sum_{\ell=1}^N \bar\phi_{ij}(a_{t\ell}) + \sum_{\ell=1}^{j-1} \bar\phi_{ij}(a_{i\ell})
      = \sum_{\ell=1}^{j-1} \bar\phi_{ij}(a_{i\ell})\,,
    \]
    since for every $i' \neq i$ and every $\ell \in [N]$ we have
    \[
        \langle a_{i'\ell}, \theta_{ij}\rangle = \langle a_{i'\ell}, \theta_\star\rangle = -S/2\,,
    \]
    and hence $\bar \phi_{ij}(a_{i'\ell}) = 0$.

    If $(\log(\tilde\kappa))_+ = 0$, choose $\zeta \in (0,1)$ small enough that $N=\floor{\exp(b\zeta^2/8)}=1$. Then the sum above is empty for every $(i,j)$, so \eqref{eq:eluder-witness} is an $\omega$-eluder sequence. Therefore,
    \[
      \elud{\infty}(\epsilon; \bar\Phi(\cF)) \geq m \geq \frac{d-1}{2b} \geq \frac{d-1}{4b}\,.
    \]
    Since $(\log(\tilde\kappa))_+ = 0$, the exponential factor in the statement is equal to $1$, and the result follows.

    Assume now that $(\log(\tilde\kappa))_+ > 0$. By the one-dimensional self-concordance bound applied to the function $u \mapsto \ell(\mu(-S/2),\mu(u))$, for any $\ell < j$,
    \[
      \bar\phi_{ij}(a_{i\ell}) \leq \frac{\dot\mu(-S/2)}{M^2}\exp\{MS \zeta /2\} \,,
    \]
    and there are at most $N \leq \exp\{b\zeta^2/8\}$ such terms in the sum. Therefore,
    \[
      \sum_{\ell=1}^{j-1} \bar\phi_{ij}(a_{i\ell}) \leq \frac{\dot\mu(-S/2)}{M^2} \exp\{ b\zeta^2/8 + M\zeta S / 2\}\,.
    \]
    Since
    \[
      \tilde\kappa = \frac{\dot\mu(0)}{2\dot\mu(-S/2)}\,,
    \]
    for the above sum to be upper bounded by $\omega = \dot\mu(0)/(2M^2)$, it suffices that
    \[
      b\zeta^2/8 + M\zeta S / 2 \leq (\log(\tilde\kappa))_+\,.
    \]
    Using $b \leq S$, it suffices that
    \[
      S\zeta^2/8 + M \zeta S / 2 \leq (\log(\tilde\kappa))_+\,,
    \]
    or equivalently that
    \[
      \zeta^2 + 4 M \zeta - \frac{8(\log(\tilde\kappa))_+}{S} \leq 0\,.
    \]
    Thus, it is enough to choose $\zeta$ in the interval
    \[
      0 \leq \zeta \leq 2\roundb[\Big]{\sqrt{M^2 + (2/S)(\log(\tilde\kappa))_+} -M}\,.
    \]
    With $\zeta$ chosen to be the largest feasible value in $(0,1)$, the sequence \eqref{eq:eluder-witness} is an $\omega$-eluder sequence and has length $mN$. Using $\floor{x} \geq x/2$ for $x \geq 1$, we get
    \[
      N \geq \frac{1}{2}\exp\{b \zeta^2/8\} \geq \frac{1}{2} \min\curlyb[\bigg]{ \exp\{b/8\}\,,\, \exp\curlyb[\Big]{ \frac{((\log(\tilde\kappa))_+)^2}{8SM^2+4(\log(\tilde\kappa))_+}}}\,.
    \]
    Since also $m \geq (d-1)/(2b)$, we conclude that
    \[
      \elud{\infty}(\epsilon; \bar\Phi(\cF))
      \geq mN
      \geq \frac{d-1}{4b} \exp\curlyb[\bigg]{
          \min\roundb[\bigg]{\frac{b}{16}\,, \frac{((\log(\tilde\kappa))_+)^2}{8SM^2 + 4(\log(\tilde\kappa))_+}}}\,.
    \]
    This completes the proof.
\end{proof}

\end{document}